\newtheoremstyle{spaced}
  {\topsep}               % Space above
  {\topsep}               % Space below
  {\setstretch{1.2}\itshape}   % Body font
  {}                      % Indent amount
  {\bfseries}             % Theorem head font
  {.}                     % Punctuation after theorem head
  {.5em}                  % Space after theorem head
  {}                      % Theorem head spec (can be left empty, meaning 'normal')
\newtheorem{proposition}{Proposition}[section]
\newtheorem{definition}[proposition]{Definition}
\newtheorem{theorem}[proposition]{Theorem}
\newtheorem*{assumption*}{Assumption}
\newtheorem{corollary}[proposition]{Corollary}
\newtheorem*{remark}{Remark}
\newtheorem{lemma}[proposition]{Lemma}
\newtheorem*{lemma*}{Lemma}
\newtheorem*{proposition*}{Proposition}
\newtheorem*{theorem*}{Theorem}
\setlist[description]{font=\normalfont\space}
\renewcommand*\env@matrix[1][\arraystretch]{%
  \edef\arraystretch{#1}%
  \hskip -\arraycolsep
  \let\@ifnextchar\new@ifnextchar
  \array{*\c@MaxMatrixCols c}}
\newif\ifappendix
\definecolor{IterColor}{RGB}{0,0,0} % black
\definecolor{Iter1Color}{RGB}{0,0,0} % black
\definecolor{Iter2Color}{RGB}{0,0,0} % black
\newacronym[plural=RNNs, longplural=recurrent neural networks]{rnn}{RNN}{recurrent neural network}
\newacronym{lfm}{LFM}{Lipschitz fading-memory}
\newacronym{relu}{ReLU}{rectified linear unit}
\newacronym{elfm}{ELFM}{exponentially Lipschitz fading-memory}
\newacronym{plfm}{PLFM}{polynomially Lipschitz fading-memory}
\newacronym{pou}{p.o.u.}{\emph{partition of unity}}
\newcommand{\uc}{universal constant}
\title{Metric-Entropy Limits on the Approximation of Nonlinear Dynamical Systems}
\author{Yang Pan \\ yangpan@mins.ee.ethz.ch \and Clemens Hutter \\ huttercl@mins.ee.ethz.ch \and Helmut Bölcskei\footnote{H. Bölcskei gratefully acknowledges support by the Lagrange Mathematics and Computing Research Center, Paris, France.} \\ hboelcskei@ethz.ch}
\date{}
\begin{document}

\maketitle

\begin{abstract}
This paper is concerned with fundamental limits on the approximation of nonlinear dynamical systems. Specifically, we show that recurrent
neural networks (RNNs) can approximate nonlinear systems---that satisfy a Lipschitz property and forget past inputs fast enough---in metric-entropy-optimal manner. As the sets of sequence-to-sequence mappings realized by the dynamical systems we consider are significantly more massive than function classes generally analyzed in approximation theory, a refined metric-entropy characterization is needed, namely in terms of order, type, and generalized dimension. We compute these quantities for the classes of exponentially- and polynomially Lipschitz fading-memory systems and show that RNNs can achieve them. 
\end{abstract}

\vspace{1em} 
\par{\centering \em This paper is dedicated to Professor Andrew R.~Barron on the occasion of his \nth{65} birthday. \par}
\vspace{1em}

\setstretch{1.13}
\section{Introduction}\label{sec:intro}
It is well known that neural networks can approximate almost every function arbitrarily well \cite{cybenko89, funahashi89, hornik89,Barron_1993, Barron_1994}. The recently developed Kolmogorov-Donoho rate-distortion theory for deep neural network approximation \cite{bolcskeiOptimalApproximationSparsely2019, deepAT2019} goes a step further by quantifying how effective such approximations are in terms of the description complexity of the neural networks relative to that of the functions they are to approximate. Specifically, \cite{deepAT2019} 
determines the length of the bitstring needed to specify (deep) \gls{relu} networks approximating functions (mapping $\R^d$ to $\R$) in a given class 
to within a prescribed error $\epsilon$.
%considers the approximation of functions (mapping $\R^d$ to $\R$) %and aims at approximating every function 
%in a given class to within a prescribed error $\epsilon$ using (deep) \gls{relu} networks. Moreover, the length of the bitstring specifying the %approximating network is characterized. 
It is then shown that %Now, \cite{deepAT2019} establishes that 
for a wide variety of function classes, the length of this bitstring exhibits the same scaling behavior, in $\epsilon$, as the metric entropy of the function class under consideration (see Table \ref{tbl:nn} below). This means that neural networks are universally Kolmogorov-Donoho optimal for all these function classes.

In the present paper, we extend the philosophy of \cite{deepAT2019} to the approximation of nonlinear sequence-to-sequence mappings through \glspl{rnn}. 
Concretely, we consider \gls{lfm} systems. In essence, this notion describes systems that gradually forget past inputs, with the speed of memory decay quantified in terms of a Lipschitz property. 
Such systems find application, inter alia, in finance \cite{nagel2022asset} and material science \cite{dill1975simple, wang1965principle, day2013thermodynamics}.
We first develop tools for quantifying the metric entropy of classes of \gls{lfm} systems with a given memory decay rate.
A general construction of \glspl{rnn} approximating \gls{lfm} systems is then shown to yield Kolmogorov-Donoho-optimality for  \gls{lfm} systems of exponentially or polynomially decaying memory.

\textit{Related work.} 
Learning of linear dynamical systems has been studied extensively in the literature \cite{bakshi2023new, boots2007constraint, hazan2017learning, wagenmaker2020active,hutter2022metric}. Notably, \cite{hutter2022metric} provides explicit RNN realizations and approximations for wide classes of linear dynamical systems, including time-varying ones. 
Going beyond linear systems, learning of nonlinear finite memory systems through RNNs within the finite-state-machine framework was considered in \cite{giles1995learning}. %More concretely, \cite{giles1995learning} explores learning of finite-state finite-memory machines using RNNs. 
\textcolor{Iter1Color}{This program is extended to approximately finite-memory systems in
\cite{sandberg1994approximately} and fading-memory systems in \cite{voltera59, wiener1966nonlinear, boyd1985fading}.  }
In particular, \cite{boyd1985fading} formalizes the concept of fading-memory systems in control theory, demonstrating that continuous-time fading-memory operators can be approximated by Volterra series. Subsequently, \cite{Matthews93} established that discrete-time fading-memory systems can be identified using neural networks. Moreover, \cite{grigoryeva2018echo} demonstrated that echo state networks, a specialized architecture within the RNN family, serve as universal approximators for discrete-time fading-memory systems. None of the studies just reviewed addresses the issue of quantifying the \gls{rnn} description complexity relative to that of the class of (nonlinear) systems being approximated.
%they are to \textcolor{black}{approximate}.

\textit{Organization of the paper.} %
The remainder of Section \ref{sec:intro} summarizes notation. In Section \ref{sec:Setup}, we introduce our setup and provide a definition of metric-entropy optimality in a very general context, encompassing the approximation of functions as well as dynamical systems.
Section \ref{sec:app_rate_lfms} develops tools for characterizing the metric entropy of LFM systems. In Section \ref{sec:rate_elfm_plfm}, \textcolor{Iter1Color}{we employ these tools to derive precise scaling results for the metric entropy of \gls{elfm} and \gls{plfm} systems. Section \ref{sec:RNN2G} presents a construction for the approximation of general \gls{lfm} systems by \glspl{rnn}. Finally, in Section \ref{sec:optimality_RNN}, we combine the results developed in the previous sections to prove that \glspl{rnn} can \textcolor{black}{approximate} \gls{elfm} and \gls{plfm} systems in metric-entropy-optimal manner}.

    \textit{Notation.} $\Nzero$ and $\Nplus$ denote the set of natural numbers including and excluding $0$, respectively. For $N\in\Nzero$, $\ssidedEnum{N}$ stands for the set $\{0,1,\dots,N\}$, while $\dsidedEnum{N}$ refers to $\{-N,\dots,-1, 0,1,\allowbreak \dots,N\}$. The cardinality of the finite set $U$ is designated by $|U|$. Sequences $x[t] \in \R$ are indexed by $t\in \mathbb{Z}$ or $t\in \Nzero$ and we use $\R^{\mathbb{Z}}$ and $\R^{\Nzero}$, respectively, to denote the set of such sequences. We refer to the set of all finite-length bitstrings by $\bitstringset$. The transpose of the matrix $A$ is $A^T$. For matrices $A_1,\dots,A_N$, 
    $\operatorname{diag}(A_1,A_2, \dots, A_N)$ denotes the block-diagonal matrix with the $A_i$ on the main diagonal. The $N\times N$ identity matrix is $\mathbb{I}_N$ and $0_{N}$ stands for the $N$-dimensional column vector with all entries equal to $0$. 
    For the vector $x\in \R^d$, we let $\|x\|_{\infty} \defeq \max_{i=1,2,\dots,d} |x_i|$. 
    $\log(\cdot)$ refers to the logarithm to base $2$, $\log^{(n)} = \log\circ \cdots \circ \log$ is the $n$-fold iterated logarithm, and $\log^\tau(\cdot)=\left(\log(\cdot)\right)^\tau$, for $\tau\in \R$. The composition of functions $f_1,f_2$ is denoted by $f_2\circ f_1$ (or $f_1\circ f_2$). 
    Let $f(\epsilon)$ and $g(\epsilon)$, in both cases for $\epsilon > 0$, be strictly positive for all small enough values of $\epsilon$. We use $f(\epsilon) = \smallo(g(\epsilon))$ to indicate that $\lim_{\epsilon \rightarrow 0} \frac{f(\epsilon)}{g(\epsilon)}=0$ and we express $\limsup_{\epsilon \rightarrow 0} \frac{f(\epsilon)}{g(\epsilon)} < \infty$ by $f(\epsilon) = \bigo(g(\epsilon))$. Moreover, we write $f(\epsilon)=\asymporder (g(\epsilon))$ when both $f(\epsilon) = \bigo(g(\epsilon))$ and $g(\epsilon) = \bigo(f(\epsilon))$. Constants are always understood to be in $\R$ unless explicitly stated otherwise. \textcolor{Iter2Color}{Finally, we say that a constant is universal if it does not depend on any of the ambient quantities.} 
\section{Problem setup and metric-entropy optimality}
\label{sec:Setup}

\subsection{\gls{relu} network approximation}\label{sec:def_nns}

We start by defining \gls{relu} networks.

\begin{definition}[\gls{relu} network \cite{deepAT2019}]\label{def:rlnn}
Let $L \in \Nplus$ and $N_0,N_1,\dots,N_L\in \Nplus$. A \gls{relu} (feedforward) neural network $\Phi$ is a mapping $\Phi~:\R^{N_0} \rightarrow \R^{N_L}$ given by
\begin{equation}
    \Phi=\left\{\begin{array}{ll}
    \hspace{-3pt}
    W_{1}, & L=1 \\ 
    \hspace{-3pt}
    W_{2} \circ \rho \circ W_{1}, & L=2 \\ 
    \hspace{-3pt}
    W_{L} \circ \rho \circ W_{L-1} \circ \rho \circ \cdots \circ \rho \circ W_{1}, & L \geq 3,\end{array}\right.
\end{equation}
where, for $\ell \in \{1,2,\dots,L\}$, $W_{\ell}:\R^{N_{\ell-1}} \rightarrow \R^{N_{\ell}}$, $W_{\ell}(x) := A_{\ell} x+b_{\ell}$, $x\in\R^{N_{\ell-1}}$, are affine transformations with (weight) matrices $A_{\ell} \in \R^{N_{\ell}\times N_{\ell-1}}$ and (bias) vectors $b_{\ell} \in \R^{N_{\ell}}$, and the \gls{relu} activation function $\relu: \R\rightarrow \R$, $\relu(x) = \max\{0,x\}$ acts component-wise, i.e., $\relu(x_1,\dots, x_N) = (\relu(x_1),\dots, \relu(x_N))$. We denote the set of all \gls{relu} networks with input dimension $N_0 =d$ and output dimension $N_L = d'$ by $\relunn_{d,d'}$. Moreover, we define the following quantities related to the notion of size of the \gls{relu} network $\Phi$:
\begin{itemize}
    \item depth $\depth(\Phi) \defeq L$,
    \item the connectivity $\nnz(\Phi)$ of the network $\Phi$ is the total number of non-zero entries in the matrices $A_{\ell}, ~\ell \in \{1,2,...,L\}$, and the vectors $b_{\ell}, ~\ell \in \{1,2,...,L\}$,
    \item width $\width(\Phi)\defeq\max _{\ell=0, \ldots, L} N_{\ell}$,
    \item the weight set $\weightsSet(\Phi)$ denotes the set of non-zero entries in the matrices $A_{\ell}, ~\ell \in \{1,2,...,L\}$, and the vectors $b_{\ell}, ~\ell \in \{1,2,...,L\}$,
    \item weight magnitude $\weightmeg(\Phi)\defeq \max _{\ell=1, \ldots, L} \max \left\{\max_{i,j}\left|(A_{\ell})_{ij}\right|,\left\|b_{\ell}\right\|_{\infty}\right\}$.
\end{itemize}
\end{definition}

We next formalize the concept of network weight quantization. 

 \begin{definition}[Quantization \cite{deepAT2019}]\label{def:quan_nn}
    Let $m\in \Nplus$ and $\epsilon\in (0,1/2)$. The network $\Phi$ is said to have $(m,\epsilon)$-quantized weights if 
    $\weightsSet(\Phi)\subset \left(2^{-m\ceil{\log(\epsilon^{-1})}} \ZZ\right) \cap \left[-\epsilon^{-m},\epsilon^{-m}\right]$. 
    Moreover, for $a\in \R$, we define the $(m,\epsilon)$-quantization mapping rounding $a$ to an integer multiple of $2^{-m\ceil{\log (\epsilon^{-1})}}$ according to
    \begin{equation}\label{eq:quantize_realNum}
        \quantize{m}{\epsilon}(a) = \ceil{\frac{a}{2^{-m\ceil{\log (\epsilon^{-1})}}}}  \cdot 2^{-m\ceil{\log (\epsilon^{-1})}}.
    \end{equation}

\end{definition}

Every quantized \gls{relu} network can be represented by a bitstring (see \ref{app:define_nn_bitstring}) specifying the topology of the network along with its quantized weights.
% non-zero weights, i.e., the entries of $A_{\ell}, ~\ell \in \{1,2,...,L\}$, and $b_{\ell}, ~\ell \in \{1,2,...,L\}$. 
% In \ref{app:define_nn_bitstring}, we specify how this bitstring is organized. 
The mapping $\decoderNN$ taking this bitstring back to the quantized \gls{relu} network is referred to as the \emph{canonical neural network decoder}.
%and denoted by $\decoderNN$.
%and refer to as the \emph{canonical neural network decoder}.
%Taking this bitstring back to the quantized \gls{relu} network is done through a mapping, which we denote by $\decoderNN$ and refer to as the %\emph{canonical neural network decoder}.

\begin{remark}\label{rm:bitstring}
    For every \gls{relu} network $\Phi$ with $(m, \epsilon)$-quantized weights, there is a bitstring $\bitstring$ of length no more than $C_0 m \log(\epsilon^{-1}) \nnz(\Phi) \log(\nnz(\Phi))$ such that $\decoderNN(\bitstring) = \Phi$, with $C_0>0$ a {\uc}. This follows by upper-bounding \eqref{eq:exact_bitstring_length} in \ref{app:define_nn_bitstring}.
\end{remark}

As we consider the approximation of sequence-to-sequence mappings $(\R^\ZZ \to \R^\ZZ)$,
feedforward networks are not directly applicable because they effect mappings between finite-dimensional spaces, concretely from $\R^{N_0}$ to 
$\R^{N_L}$. However, and perhaps surprisingly, simply applying feedforward networks iteratively in a judicious manner turns out to be sufficient for approximating interesting classes of nonlinear sequence-to-sequence mappings in Kolmogorov-Donoho-optimal manner. Concretely, this is effected through recurrent neural networks, defined as follows.

\begin{definition}[Recurrent neural networks \cite{pascanu2013construct,hutter2022metric}]\label{def:RNN}
For $m \in \Nplus$, let $\Phi \in \relunn_{m+1,m+1}$ be a \gls{relu} network of depth $\mathcal{L}(\Phi) \geq 2$. The recurrent neural network (RNN) associated with $\Phi$ is the operator $\mathcal{R}_{\Phi}:\R^{\Nzero} \rightarrow \R^{\Nzero}$ mapping input sequences $(x[t])_{t\in \Nzero}$ in $\R$ to output sequences $(y[t])_{t\in \Nzero}$ in $\R$ according to
\begin{equation}\label{eq:rnn}
    \begin{pmatrix}y[t] \\ h[t]\end{pmatrix}=\Phi\hspace{-2pt}\left(\begin{pmatrix} x[t] \\ h[t-1]\end{pmatrix}\right), \;\forall t\in \Nzero,
\end{equation}
where $h[t] \in \R^m$ is the hidden state vector sequence with initial state $h[-1] = 0_m$. We denote the set of all \glspl{rnn}  by $\rnnset$.
\end{definition} 
\begin{remark}
    When unfolded in time, an \gls{rnn} simply amounts to repeated application of $\Phi$.
\end{remark}

From Definition \ref{def:RNN} it is apparent that an \gls{rnn} $\rnnOp{\Phi}$ is fully specified by its associated feedforward network $\Phi$.
\begin{definition}\label{def:maptornn}
We define $\maptornn$
% $\maptornn: \bigcup_{m=1}^{\infty}\relunn_{m+1, m+1} \to \rnnset$ 
as the mapping that takes a \gls{relu} network $\Phi$ to its associated \gls{rnn} $\rnnOp{\Phi}$ according to Definition \ref{def:RNN}. 
\end{definition}
Together with the canonical neural network decoder $\decoderNN$, we thus obtain a decoder taking a bitstring into an RNN as follows.
% the following procedure for decoding a bitstring to an \gls{rnn}.

\begin{definition}[Canonical RNN decoder]\label{def:canonical_rnn_decoder}
    We define the canonical RNN decoder as $\decoderRNN = \maptornn \circ \decoderNN$, where $\maptornn$ is as in Definition \ref{def:maptornn} 
    and $\decoderNN$ is the canonical neural network decoder.
\end{definition}

The main point of this paper is to show that the canonical \gls{rnn} decoder is capable of approximating a wide variety of non-linear sequence-to-sequence mappings $(\R^\ZZ \to \R^\ZZ)$ in metric-entropy-optimal manner.
% This means that there are no other decoders that use fundamentally fewer bits. 
Together with the results in \cite{deepAT2019}, this establishes that %the canonical neural network decoder optimally approximates a wide variety of %function classes mapping $\R^d$ to $\R$. Taken together, we will hence be able to conclude that 
\gls{relu} networks optimally approximate a wide range of function classes and sequence-to-sequence mappings.

\subsection{Metric-entropy optimality}\label{sec:def_opt} 

In this section, we define the notion of metric-entropy optimal approximation.
Consider a metric space $(\setX, \genMetric)$ and a compact set $\entSet \subset \setX$. Together, $\entSet$ and $\genMetric$  determine an  approximation task. Specifically, we wish to approximate elements $\exampleElementOfX \in \entSet$ to within a prescribed error $\epsilon>0$ in the metric $\genMetric$ by elements $\approximationOfExample \in \setX$ which can be encoded by finite-length bitstrings $\bitstring \in \{0, 1\}^\ell$. 
To go from bitstrings to elements of $\setX$, we define decoder mappings as follows. 

\begin{definition}\label{def:decoder}
    A decoder $\decoderGen: \bitstringset \to \setX$ is a mapping from bitstrings of arbitrary length to elements of $\setX$.
\end{definition}

We shall frequently want to quantify 
how well a given decoder $\decoderGen$ performs.

\begin{definition}
\label{def:representable}
\label{def:repre_sys} 
    Given a metric space $(\setX, \genMetric)$, a compact set $\entSet \subset \setX$, and  a decoder $\decoderGen: \bitstringset \to \setX $, 
    we say that $(\entSet, \genMetric)$ is representable by  $\decoderGen$, if for every $\epsilon> 0$ and every $f \in \entSet$, there exist $\ell \in \Nplus$ and a bitstring $\bitstring \in \{0,1\}^\ell$ such that
    \begin{equation*}
        \genMetric( \decoderGen(\bitstring), f) \leq \epsilon.
    \end{equation*}
    Furthermore, we set 
    \begin{equation*}
        \Ldec{\epsilon}{\decoderGen}{\entSet}{\genMetric} 
        \defeq \min\left\{\ell'\in \Nplus \mid \forall f\in \entSet, ~\exists\ell\leq \ell', ~\exists\bitstring \in \{0,1\}^{\ell}~ \text{s.t. }~\genMetric( \decoderGen(\bitstring), f) \leq \epsilon\right\}.
    \end{equation*}
\end{definition}
\begin{remark}
    This setting allows us to fix a decoder $\decoderGen$ (e.g., the canonical neural network decoder) and then study how well $\decoderGen$ performs on different $(\entSet, \genMetric)$. That is, $\decoderGen$ does not depend on $\entSet, \genMetric, f$, or $\epsilon$. 
\end{remark}

The quantity $\Ldec{\epsilon}{\decoderGen}{\entSet}{\genMetric} $ measures how bit-efficient the decoder $\decoderGen$ is in representing $\entSet$ with respect to $\genMetric$.  It is now natural to ask what the minimum  required number of bits, independently of $\decoderGen$, is for representing
$\entSet$ with respect to $\genMetric$.
The concept of metric entropy \cite{kolmogorov1959varepsilon, wainwright2019high} allows to answer this question.
\begin{definition}\label{def:both_covering_numbers}
    Let $(\setX,\genMetric)$ be a metric space and $\entSet \subset \setX$ compact. The set $\{x_1,x_2,\dots,x_N\} \allowbreak\subset \entSet$ \emph{({\strut}respectively $\{x_1,x_2,\dots,x_N\}\subset \setX$){\strut}} is an $\epsilon$-covering \emph{(respectively $\epsilon$-net)} for $(\entSet, \genMetric)$ if, for each $x\in \entSet$, there exists an $i \in \{1,2,\dots, N\}$ so that $\genMetric(x,x_i)\leq \epsilon$.
    The $\epsilon$-covering number $\covering (\epsilon;\entSet,\genMetric)$ \emph{({\strut}respectively the exterior $\epsilon$-covering number $\covering^{\text{ext}} (\epsilon;\entSet,\genMetric)$){\strut}} is the cardinality of a smallest $\epsilon$-covering \emph{(respectively smallest $\epsilon$-net)} for $(\entSet, \genMetric)$. 
\end{definition}
In general, it is hard to obtain precise expressions for covering numbers. One therefore typically resorts to characterizations of their asymptotic behavior as $\epsilon \to 0$.
In \cite{deepAT2019}, where sets of functions are considered, this is done through the concept of optimal exponents. Here, however, we are concerned with sets of systems, which are much more massive and hence require a refined framework for quantifying the asymptotic behavior of their covering numbers. 
Thus, inspired by \cite[Section II.C]{zamesConti}, we use the following notions. 

\begin{definition}[Order, type, and generalized dimension]\label{def:order_type_dim}
 Consider a metric space $(\setX, \genMetric)$ and a compact set $\entSet \subset \setX$. Then, $(\entSet, \genMetric)$ is said to be of order $\order \in \Nplus$ and type $\type \in \Nplus$ if the quantity 
    \begin{equation} \label{eq:gen_dim}
    \genDim{}:=\limsup_{\epsilon \rightarrow 0} \frac{\log ^{(\order+1)} \covering^{\text{ext}} (\epsilon ; \entSet, \genMetric)}{\log ^{\type}\left(\epsilon^{-1}\right)} 
    \end{equation}
    is strictly positive and finite. In this case, we call $\genDim$ the generalized dimension.
\end{definition}

Order, type, and generalized dimension provide measures for the ``description complexity'' of $(\entSet, \genMetric)$
with the order $\order$ the coarsest one. For a given order, the type  $\type$ constitutes a finer measure, and for fixed order and type, the generalized dimension $\genDim$ is the finest measure \cite{zamesConti}.

Whenever the optimal exponent according to \cite[Definition IV.1]{deepAT2019}
is well-defined (i.e., strictly positive and finite), the underlying set has order and type equal to one and generalized dimension equal to the inverse of the optimal exponent (Lemma \ref{lm:gen_dimension_vs_exponent}). Based on this insight, we obtain Table \ref{tbl:nn}, which lists the generalized dimension for the sets considered in \cite[Table I]{deepAT2019}.

Returning to the previous discussion, we are now able to characterize the minimum number of bits required by any decoder to represent $(\entSet, \genMetric)$. 

\begin{lemma}\label{lm:foundamental_limit}
    Consider the metric space $(\setX, \genMetric)$, the compact set $\entSet \subset \setX$ of order $\order$, type $\type$, and generalized dimension $\genDim{}$, and assume that $(\entSet, \genMetric)$ is representable by a decoder $\decoderGen{}$. Then, it holds that 
\begin{equation}\label{eq:approximation_limit}
    \limsup_{\epsilon \rightarrow 0} \frac{\log ^{(\order)} \Ldec{\epsilon}{\decoderGen}{\entSet}{\genMetric} }{\log ^{\type}\left(\epsilon^{-1}\right)} \geq \genDim{}.
\end{equation}
\end{lemma}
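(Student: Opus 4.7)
The plan is to convert any representing decoder into an explicit exterior covering of $(\entSet,\genMetric)$ and then confront this with the definition of the generalized dimension. Fix $\epsilon > 0$ and set $L_\epsilon \defeq \Ldec{\epsilon}{\decoderGen}{\entSet}{\genMetric}$. By Definition \ref{def:representable}, for every $f \in \entSet$ there is a bitstring $\bitstring$ of length at most $L_\epsilon$ with $\genMetric(\decoderGen(\bitstring), f) \le \epsilon$. The (finite) image set $\{\decoderGen(\bitstring) : \bitstring \in \bigcup_{\ell=0}^{L_\epsilon}\{0,1\}^{\ell}\} \subset \setX$ is therefore an $\epsilon$-net for $(\entSet, \genMetric)$, and its cardinality is at most $\sum_{\ell=0}^{L_\epsilon} 2^\ell < 2^{L_\epsilon+1}$. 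By minimality of the exterior covering number, this yields the fundamental counting bound
\[
\covering^{\text{ext}}(\epsilon; \entSet, \genMetric) \;\le\; 2^{L_\epsilon + 1}.
\]

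Next, I take iterated logarithms. Applying $\log$ once gives $\log \covering^{\text{ext}}(\epsilon; \entSet, \genMetric) \le L_\epsilon + 1$, and iterating $\log$ a further $\order$ times yields
\[
\log^{(\order+1)} \covering^{\text{ext}}(\epsilon; \entSet, \genMetric) \;\le\; \log^{(\order)}(L_\epsilon + 1).
\]
Before dividing by $\log^{\type}(\epsilon^{-1})$, I first note that $L_\epsilon \to \infty$ as $\epsilon \to 0$: if $L_{\epsilon_k}$ remained bounded along some subsequence $\epsilon_k \to 0$, the counting bound above would keep $\log^{(\order+1)} \covering^{\text{ext}}(\epsilon_k; \entSet, \genMetric)$ bounded, while $\log^{\type}(\epsilon_k^{-1}) \to \infty$ would then force the ratio defining $\genDim$ in \eqref{eq:gen_dim} to vanish along that subsequence, contradicting $\genDim > 0$.

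To finish, I absorb the extra ``$+1$'' inside the iterated logarithm. By induction on $\order$, starting from $\log(L+1) - \log(L) = \log(1 + 1/L) \to 0$ and applying the mean-value theorem at each subsequent iteration, one obtains $\log^{(\order)}(L+1) - \log^{(\order)}(L) \to 0$ as $L \to \infty$. Combined with $L_\epsilon \to \infty$ and $\log^{\type}(\epsilon^{-1}) \to \infty$, dividing the previous display by $\log^{\type}(\epsilon^{-1})$ and taking $\limsup_{\epsilon \to 0}$ gives
\[
\genDim \;=\; \limsup_{\epsilon\to 0} \frac{\log^{(\order+1)} \covering^{\text{ext}}(\epsilon; \entSet, \genMetric)}{\log^{\type}(\epsilon^{-1})} \;\le\; \limsup_{\epsilon\to 0} \frac{\log^{(\order)} L_\epsilon}{\log^{\type}(\epsilon^{-1})},
\]
which is precisely \eqref{eq:approximation_limit}. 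The only mildly delicate step is verifying that the ``$+1$'' becomes asymptotically negligible after $\order$ nested logarithms; the rest is a standard counting/information-theoretic bookkeeping argument and I foresee no substantial obstacle.
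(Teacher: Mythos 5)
Your proposal is correct and follows essentially the same route as the paper's proof: bound the exterior covering number by the number of decoder outputs of bitstrings of length at most $L_\epsilon$ (giving $\covering^{\text{ext}}(\epsilon;\entSet,\genMetric)\le 2^{L_\epsilon+1}$), then pass to iterated logarithms and compare with the definition of the generalized dimension $\genDim$. One minor remark: your justification that $L_\epsilon\to\infty$ is imprecise as stated, because $\genDim$ is a $\limsup$ and a vanishing ratio along a single subsequence is not by itself a contradiction; the step is easily repaired by noting that $L_\epsilon$ (equivalently $\covering^{\text{ext}}(\epsilon;\entSet,\genMetric)$) is monotone in $\epsilon$, so boundedness along a subsequence $\epsilon_k\to 0$ implies boundedness for all small $\epsilon$, which would force $\genDim=0$ and contradict the assumption that $\genDim$ is non-zero.
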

\begin{proof}
    See \ref{sec:proof_fundamental_limit}.
\end{proof}
It is natural to say that a decoder $\decoderGen$ is optimal if it satisfies \eqref{eq:approximation_limit} with equality.
\begin{definition}\label{def:metric_entropy_optimal}
 Consider the metric space $(\setX, \genMetric)$ and the compact set $\entSet \subset \setX$ of order $\order$ and type $\type$ with generalized dimension $\genDim{}$. We say that $(\entSet, \genMetric)$ is optimally representable by the decoder $\decoderGen{}$, if $(\entSet, \genMetric)$ is representable by $\decoderGen{}$ and 
\begin{equation}\label{eq:def_metric_entropy_optimal}
    \limsup_{\epsilon \rightarrow 0} \frac{\log ^{(\order)} \Ldec{\epsilon}{\decoderGen}{\entSet}{\genMetric} }{\log ^{\type}(\epsilon^{-1})} = \genDim{}.
\end{equation}
\end{definition}

We now recall a remarkable universal optimality property of ReLU networks, namely 
all the function classes listed in Table \ref{tbl:nn} are optimally representable, in the sense of Definition \ref{def:metric_entropy_optimal}, by the canonical neural network decoder. This  is a simple reformulation of the results in \cite{deepAT2019}; we 
provide the details of this reformulation in \ref{app:compare_to_dennis}. 
In the present paper, we establish that \glspl{rnn} (Definition \ref{def:RNN}), with inner \gls{relu} networks, extend this universality to the approximation of a wide range of nonlinear dynamical systems.

{
\renewcommand{\arraystretch}{1.3}
\begin{table}[h]
\centering
\begin{tabular}{|ll|ll|l|l|l|}
\hline
 & Metric &                                                                         $\mathcal{C}$                          &                                & $\order$ & $\type$ & $\genDim$                                        \\ \hline
     $\{\R\to\R\}$          & $\Ltwo([0,1])$  & $L^2$-Sobolev                                                                                            & $\mathcal{U}(W_2^m([0,1]))$                           & 1     & 1    & $1/m$                                                          \\
       $\{\R\to\R\}$       & $\Ltwo([0,1])$  & H\"older                                                                                                 & $\mathcal{U}(C^\alpha([0,1]))$                        & 1     & 1    & $1/\alpha$                                                     \\
       $\{\R\to\R\}$       & $\Ltwo([0,1])$  & Bump Algebra                                                                                             & $\mathcal{U}(B_{1,1}^1([0,1]))$                       & 1     & 1    & $1$                                                          \\
       $\{\R\to\R\}$       & $\Ltwo([0,1])$  & Bounded Variation                                                                                        & $\mathcal{U}(BV([0,1]))$                              & 1     & 1    & $1$                                                          \\
       $\{\R^d\to\R\}$        & $\Ltwo(\Omega)$  & $L^p$-Sobolev                                         & $\mathcal{U}(W_p^m(\Omega))$                          & 1     & 1    & $\tfrac{d}{m}$                                               \\
        $\{\R^d\to\R\}$       & $\Ltwo(\Omega)$  & Besov                                                & $\mathcal{U}(B_{p,q}^m(\Omega))$                      & 1     & 1    & $\tfrac{d}{m}$                                               \\
        $\{\R^d\to\R\}$       & $\Ltwo(\Omega)$  & Modulation                                                          & $\mathcal{U}(M^s_{p,p}(\R^d))$                        & 1     & 1    & $\tiny{(\frac{1}{p}\!-\!\frac{1}{2}\!+\!\frac{2s}{d})}^{-1}$ \\
       $\{\R^d\to\R\}$        & $\Ltwo(\Omega)$  & Cartoon functions & $\mathcal{E}^{\beta}([-\tfrac{1}{2},\tfrac{1}{2}]^d)$ & 1     & 1    & $\frac{2(d-1)}{\beta}$                                       \\
       \hline
\end{tabular}
\vspace{0.5em}
\caption{Generalized dimension for the sets considered in \cite{deepAT2019}. Here, $\mathcal{U}(X)=\{f \in X:\|f\|_{X} \le 1\}$ denotes the unit ball in the space $X$ and
$\Omega\subseteq\R^d$ is a Lipschitz domain. 
}
\label{tbl:nn}
\end{table}
}

\subsection{Lipschitz fading-memory systems}\label{sec:def_lfms}

We proceed to characterize the class of dynamical systems considered in this paper and start by defining their domain.
\begin{definition}\label{def:signal_set}
    For fixed $\bound>0$, we denote the set of admissible input signals by $\sigSpace \defeq [-\bound, \bound]^{\ZZ}$, that is, for every $x[\cdot] \in \sigSpace$, it holds that 
   $ |x[t]| \leq \bound, \; \forall t \in\ZZ$. 
\end{definition}
The quantity $D>0$ is taken to be fixed throughout the paper and the dependence of $\sigSpace$ on $D$ is not explicitly indicated. 

First, the systems $G: \sigSpace \rightarrow \R^{\ZZ}$ under consideration are causal.
\begin{definition}[Causality]\label{def:causality}
    A system $G: \sigSpace \rightarrow \R^{\ZZ}$ is causal,  if 
    for each $T\in\ZZ$, for every pair $x, x' \in \sigSpace$ with $x[t]=x'[t], \forall t\in \ZZ$ with $t \leq T$, it holds that $(Gx)[T] = (Gx')[T]$.
\end{definition}

Second, we require time-invariance.
\begin{definition}[Time-invariance]\label{def:time_invariant}
    A system $G: \sigSpace \rightarrow \R^{\ZZ}$ is time-invariant, if for every $\timeShift\in\ZZ$, it holds that
        \[
            \shiftOp{\timeShift}Gx = G\shiftOp{\timeShift}x, \qquad \forall x \in \sigSpace,
        \]
        with the shift operator   $\shiftOp{\timeShift}: \R^{\ZZ} \rightarrow \R^{\ZZ}$  defined as
        $
            (\shiftOp{\timeShift}x)[t] = x[t - \timeShift].
        $
\end{definition}

Next, we follow Volterra, who suggested that \cite[p. 188]{voltera59} ``a first extremely natural postulate is to suppose that the influence of the [input] a long time before the given moment gradually fades out.'' This property was termed ``fading memory'' in \cite{boyd1985fading}, and here we introduce a more quantitative version thereof, namely the concept of ``Lipschitz fading memory''.
% describing the speed at which system memory fades.
% This definition 
The concept is inspired by examples in \cite{lakshmikantham1995theory, alves2014superdiffusion, moura2016transient, nagel2022asset}, which will be discussed in more detail later. 

\begin{definition}[\acrlong{lfm}]
\label{def:lip_fading_memory}
    We say that $(w[t])_{t\in \Nzero}$ is a weight sequence if it is non-increasing and satisfies $ w[t] \in (0, 1], \forall t\in \Nzero$, and $\lim_{t\rightarrow\infty} w[t] = 0$.  A system $G: \sigSpace \rightarrow \R^{\ZZ}$ has \acrlong{lfm} (\glsfirst{lfm}) with respect to the weight sequence $w$ if
        \[
            |(Gx)[t] - (Gy)[t]| \leq \sup_{\timeShift\geq 0} w[\tau]|(x[t-\timeShift]-y[t-\timeShift])|, \quad \forall t \in \ZZ, \; \forall x, y \in \sigSpace.
        \]
\end{definition}

The class of \glsfirst{lfm} systems considered in the remainder of the paper can now formally be defined as follows. 

\begin{definition}[\acrlong{lfm} systems]
\label{def:lfm_sys_set}
    Given a weight sequence $w[\cdot]$, we define
        \begin{align}
        \LFMset{w} = \{G: \sigSpace \rightarrow \R^{\ZZ} \mid &G \text{ is causal, time-invariant, has \acrlong{lfm}} \notag\\
        &\text{ w.r.t. }  w,\text{ and satisfies } (G 0)[t] = 0, \; \forall t \in \ZZ \label{eq:0_in_0_out}
         \}.
        \end{align}
\end{definition}

As we will want to approximate \gls{lfm} systems $\exampleSystem \in \LFMset{w}$ by \glspl{rnn}, we need a metric that quantifies approximation quality. This metric should take into account that the RNNs we consider start running at time $t=0$ and will, moreover, be of worst-case nature.

\begin{definition}\label{def:metric}
    Let $\sigSpaceplus \defeq \{s\in\sigSpace \hspace{-2pt} \mid \hspace{-2pt}  s[t]=0, \forall t < 0\}$. 
    For $G, G' \in \systemsAmbientSpace$,
    we define the metric
    \[
            \opMetric(G, G') = \sup_{x \in \sigSpaceplus} \sup_{t\in \Nzero} |(Gx)[t] - (G'x)[t]|.
    \]
\end{definition}
We hasten to add that the restriction to one-sided input signals in Definition \ref{def:metric} and to taking the supremum over $t\in \Nzero$ 
in the output signals does not impact the hardness of the approximation task, as shown by the next result.
\begin{lemma}\label{lem:sup_over_N_same}
Let $(w[t])_{t\in \Nzero}$ be a weight sequence.
    For $G, G' \in \LFMset{w}$, we have 
    \[
        \opMetric({G, G'}) = \sup_{x \in \sigSpace} \sup_{t\in\ZZ} |(Gx)[t] - (G'x)[t]|.
    \]
\end{lemma}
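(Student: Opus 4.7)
The plan is to establish two inequalities. The direction $\opMetric(G,G') \leq \sup_{x \in \sigSpace} \sup_{t\in\ZZ} |(Gx)[t] - (G'x)[t]|$ is immediate from the inclusions $\sigSpaceplus \subset \sigSpace$ and $\N \subset \ZZ$, so the real work lies in the reverse direction. My strategy is, given arbitrary $x \in \sigSpace$ and $t \in \ZZ$, to construct a sequence of one-sided signals $y_N \in \sigSpaceplus$ and nonnegative times $t_N \in \N$ such that $|(Gy_N)[t_N] - (G'y_N)[t_N]| \to |(Gx)[t] - (G'x)[t]|$. Since each term in the sequence is bounded above by $\opMetric(G,G')$, the limit inherits this bound, and taking the supremum over $x$ and $t$ completes the argument.

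I plan to build the sequence in two stages. First, by time-invariance, I can replace $x$ with $\shiftOp{-t}x$ and thereby reduce to the case $t=0$, i.e., to showing $|(Gx)[0] - (G'x)[0]| \leq \opMetric(G,G')$ for all $x \in \sigSpace$. Then I would truncate: for each $N \geq 0$, set $x_N[s] := x[s]$ if $s \geq -N$ and $x_N[s] := 0$ otherwise, and define $y_N := \shiftOp{N} x_N$. A direct check shows that $y_N[s] = x_N[s-N] = 0$ for $s < 0$, so $y_N \in \sigSpaceplus$, and a second application of time-invariance yields $(Gy_N)[N] = (G x_N)[0]$ together with the analogous identity for $G'$. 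Thus $|(Gy_N)[N] - (G'y_N)[N]| = |(Gx_N)[0] - (G'x_N)[0]| \leq \opMetric(G,G')$, and the task reduces to showing $(Gx_N)[0] \to (Gx)[0]$ as $N \to \infty$, with the analogous statement for $G'$.

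This final convergence is precisely where fading memory enters. Since $x$ and $x_N$ agree at indices $-\tau$ for $0 \leq \tau \leq N$, and $|x[-\tau]| \leq \bound$ while $w$ is non-increasing, the \gls{lfm} bound gives
\[
|(Gx)[0] - (Gx_N)[0]| \leq \sup_{\tau \geq 0} w[\tau]\,|x[-\tau] - x_N[-\tau]| \leq \bound \cdot w[N+1],
\]
which tends to $0$ because $w[N] \to 0$. I do not anticipate any real obstacle; the only conceptual step is recognizing that the truncate-and-shift construction is exactly what allows time-invariance and fading memory to cooperate in converting a two-sided input evaluated at an arbitrary time into a one-sided input evaluated at a nonnegative time. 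Note that causality is not explicitly required, since by taking inputs $x, y$ that agree at all indices $\leq t$ in the \gls{lfm} bound one sees that \gls{lfm} already implies causality.
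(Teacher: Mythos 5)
Your proposal is correct and takes essentially the same route as the paper's proof: both use time-invariance to move the evaluation time into $\N$ and a truncation of the far past, with the Lipschitz fading-memory property controlling the truncation error (your bound $\bound\, w[N+1]\to 0$ plays the role of the paper's threshold $w[t]\leq \Delta/(4\bound)$ for $t>T$). The only difference is bookkeeping: you argue pointwise for each $(x,t)$ and let $N\to\infty$, whereas the paper fixes a $\Delta$-near-maximizer of the two-sided supremum and performs a single shift-and-truncate determined by $\Delta$.
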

\begin{proof}
    See \ref{app:proof_sub_over_N_same}.
\end{proof}

We are now ready to formally state the main goal of this paper, which is to prove that $(\LFMset{w}, \opMetric)$ is optimally representable by the canonical \gls{rnn} decoder in Definition \ref{def:canonical_rnn_decoder}. In fact, we will be seeking a quantitative version of this statement comparing the description complexity of the class $\LFMset{w}$ to that of the RNNs approximating it.

\section{Metric entropy of LFM systems}\label{sec:app_rate_lfms}
In this section, we study the ($\epsilon$-)scaling behavior of $\coveringExt(\epsilon; \LFMset{w}, \opMetric )$ for general weight sequences $w$. This will be effected by deriving an upper bound on $\coveringExt(\epsilon; \LFMset{w}, \opMetric )$ through the construction of a covering and a matching (in terms of scaling behavior) lower bound by identifying an explicit packing. We first define the concept of packings.

\begin{definition}%[Packing number \cite{wainwright2019high}]
\label{def:packing}
Let $(\setX,\genMetric)$ be a metric space and $\entSet \subset \setX$ compact. An $\epsilon$-packing for $(\entSet, \genMetric)$ is a set $\{x_1,x_2,\dots,x_N\}\subset \entSet$ such that $\genMetric(x_i,x_j) > \epsilon$, for all distinct $i,j$. The $\epsilon$-packing number $\packing (\epsilon;\entSet,\genMetric)$ is the cardinality of a largest $\epsilon$-packing for $(\entSet, \genMetric)$.
\end{definition}
We shall frequently make use of the following two results relating the packing, covering, and exterior covering numbers.
\begin{lemma}
[\hspace{1sp}\cite{kolmogorov1959varepsilon}, Theorem IV] % insert an almost 0 space so that the cite package does not try to do some space guessing
\label{lm:number_relation}
Let $(\setX, \genMetric)$ be a metric space and $\entSet \subset \setX$ compact. For all $\epsilon >0$, we have
\begin{equation}\label{eq:number_relation}
    \packing (2\epsilon;\entSet,\genMetric) \leq \covering^{\text{ext}} (\epsilon;\entSet,\genMetric) \leq \covering (\epsilon;\entSet,\genMetric) \leq \packing (\epsilon;\entSet,\genMetric).
\end{equation}
\end{lemma}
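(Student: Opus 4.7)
The plan is to establish the three inequalities separately, using increasingly more structure of the packing/covering relationship. All three are classical and follow from short triangle-inequality arguments, so I do not expect any genuine obstacle; the only point requiring a small amount of care is the factor-of-two loss in the leftmost inequality.

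First I would argue the rightmost inequality $\covering(\epsilon;\entSet,\genMetric) \leq \packing(\epsilon;\entSet,\genMetric)$ via the standard ``maximal packing is a covering'' trick. Pick any maximal $\epsilon$-packing $\{x_1,\ldots,x_N\} \subset \entSet$, i.e., one that cannot be extended without violating the strict packing property. Then for every $x \in \entSet$ there must exist some index $i$ with $\genMetric(x,x_i) \leq \epsilon$, since otherwise $\{x_1,\ldots,x_N,x\}$ would be a strictly larger $\epsilon$-packing, contradicting maximality. Hence the packing itself is an $\epsilon$-covering, which yields the stated bound after passing to a maximum-cardinality packing.

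The middle inequality $\covering^{\text{ext}}(\epsilon;\entSet,\genMetric) \leq \covering(\epsilon;\entSet,\genMetric)$ is immediate from Definition \ref{def:both_covering_numbers}: an $\epsilon$-covering is a set of points in $\entSet \subset \setX$ that covers $\entSet$, so it is also a valid $\epsilon$-net. Consequently every covering is a net, and the minimum net cardinality cannot exceed the minimum covering cardinality.

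For the leftmost inequality $\packing(2\epsilon;\entSet,\genMetric) \leq \covering^{\text{ext}}(\epsilon;\entSet,\genMetric)$, let $\{x_1,\ldots,x_N\}$ be a $2\epsilon$-packing of $\entSet$ and $\{y_1,\ldots,y_M\}$ an $\epsilon$-net for $(\entSet,\genMetric)$. For each $x_i$ choose some $y_{\phi(i)}$ with $\genMetric(x_i, y_{\phi(i)}) \leq \epsilon$, which exists by the definition of an $\epsilon$-net. The key observation is that $\phi$ must be injective: if $\phi(i)=\phi(j)$ for distinct $i,j$, the triangle inequality gives
\[
\genMetric(x_i, x_j) \leq \genMetric(x_i, y_{\phi(i)}) + \genMetric(y_{\phi(j)}, x_j) \leq 2\epsilon,
\]
contradicting the strict packing inequality $\genMetric(x_i,x_j) > 2\epsilon$ from Definition \ref{def:packing}. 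Injectivity of $\phi$ forces $N \leq M$, and taking extrema over all $2\epsilon$-packings and $\epsilon$-nets yields the claim. No step requires more than the triangle inequality and the definitions, so the proof is essentially mechanical.
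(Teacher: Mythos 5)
Your proof is correct. The paper does not reproduce an argument for this lemma---it is quoted directly from the cited reference \cite{tikhomirov1993varepsilon}---and your three steps (a maximum-cardinality, hence non-extendable, $\epsilon$-packing is an $\epsilon$-covering; any $\epsilon$-covering with centers in $\entSet\subset\setX$ is in particular an $\epsilon$-net; and the triangle inequality forces distinct points of a $2\epsilon$-packing to be assigned to distinct points of any $\epsilon$-net) constitute exactly the standard argument behind that classical result, so there is nothing to add.
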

\begin{comment}
\begin{proof}
    We shall prove inequality \eqref{eq:number_relation} from left to right. First, choose a maximal $2\epsilon$-packing and a minimal $\epsilon$-net for $\entSet$. Since any point in the $2\epsilon$-packing must lie in at least one ball of the $\epsilon$-net and no two points of the $2\epsilon$-packing can lie in the same ball of the $\epsilon$-net, it follows that $$\packing (2\epsilon;\entSet,\metric) \leq \covering^{\text{ext}} (\epsilon;\entSet,\metric).$$
    Obviously, every $\epsilon$-covering is an $\epsilon$-net, which means
    $$\covering^{\text{ext}} (\epsilon;\entSet,\metric) \leq \covering (\epsilon;\entSet,\metric).$$
    Finally, suppose $x_1,x_2,\dots,x_{M_{\epsilon}}$ constitutes a maximal $\epsilon$-packing for $\entSet$. Then, this set is obviously an $\epsilon$-covering, since in the converse case there would exist a point $x'\in \entSet$ such that $\metric(x',x_i)>\epsilon$ for any $i \in \{1,2,\dots,M_{\epsilon}\}$. This contradicts the maximality of $x_1,x_2,\dots,x_{M_{\epsilon}}$. Thus, we obtain 
    $$\covering (\epsilon;\entSet,\metric) \leq \packing (\epsilon;\entSet,\metric).$$
\end{proof}
\end{comment}
\begin{lemma}[\hspace{1sp}\cite{kolmogorov1959varepsilon}, p. 93]\label{lm:ent_inv} 
Let $(\setX, \genMetric_{\setX})$ and $(\mathcal{Y}, \genMetric_{\mathcal{Y}})$ be metric spaces and consider the compact sets $\entSet_{\setX}\subset \setX$ and $\entSet_{\mathcal{Y}}\subset \mathcal{Y}$. Assume that there exists an isometric isomorphism $f : \entSet_{\setX} \rightarrow \entSet_{\mathcal{Y}}$, i.e., $f$ is bijective and for every pair $a,b\in \entSet_{\setX}$, one has $\genMetric_{\mathcal{Y}}(f(a),f(b))=\genMetric_{\setX}(a,b)$. Then, 
\begin{align}
    \covering(\epsilon;\entSet_{\setX},\genMetric_{\setX}) =\covering(\epsilon;\entSet_{\mathcal{Y}},\genMetric_{\mathcal{Y}}) \quad \textrm{and} \quad
    \packing(\epsilon;\entSet_{\setX},\genMetric_{\setX}) =\packing(\epsilon;\entSet_{\mathcal{Y}},\genMetric_{\mathcal{Y}}).
\end{align}
\end{lemma}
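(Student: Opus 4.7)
The plan is to prove both equalities by showing, for each direction, that an optimal covering (or packing) on one side can be transported by $f$ (or by its inverse $f^{-1}$) to a covering (or packing) on the other side of the same cardinality. The key observation is that since $f$ is bijective and isometric on $\entSet_{\setX}$, the inverse $f^{-1} : \entSet_{\mathcal{Y}} \rightarrow \entSet_{\setX}$ exists and is also an isometric isomorphism, so the argument is perfectly symmetric and we only need to establish one inequality in each case.

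For the covering-number equality, I would start with a minimal $\epsilon$-covering $\{x_1, \dots, x_N\} \subset \entSet_{\setX}$, where $N = \covering(\epsilon; \entSet_{\setX}, \genMetric_{\setX})$, and argue that $\{f(x_1), \dots, f(x_N)\}$ is an $\epsilon$-covering of $\entSet_{\mathcal{Y}}$. Indeed, for any $y \in \entSet_{\mathcal{Y}}$, bijectivity yields a unique $x \in \entSet_{\setX}$ with $f(x) = y$; pick $x_i$ with $\genMetric_{\setX}(x, x_i) \leq \epsilon$, and then $\genMetric_{\mathcal{Y}}(y, f(x_i)) = \genMetric_{\setX}(x, x_i) \leq \epsilon$ by the isometry property. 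This gives $\covering(\epsilon; \entSet_{\mathcal{Y}}, \genMetric_{\mathcal{Y}}) \leq \covering(\epsilon; \entSet_{\setX}, \genMetric_{\setX})$. Applying the same argument to $f^{-1}$ yields the reverse inequality.

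For the packing-number equality, I would take a maximal $\epsilon$-packing $\{x_1, \dots, x_M\} \subset \entSet_{\setX}$ with $M = \packing(\epsilon; \entSet_{\setX}, \genMetric_{\setX})$, and verify that $\{f(x_1), \dots, f(x_M)\}$ is an $\epsilon$-packing in $\entSet_{\mathcal{Y}}$. Here I must check that the images are pairwise distinct (which follows from injectivity of $f$) and that $\genMetric_{\mathcal{Y}}(f(x_i), f(x_j)) = \genMetric_{\setX}(x_i, x_j) > \epsilon$ for all $i \neq j$, which is immediate from the isometry assumption. This yields $\packing(\epsilon; \entSet_{\mathcal{Y}}, \genMetric_{\mathcal{Y}}) \geq \packing(\epsilon; \entSet_{\setX}, \genMetric_{\setX})$, and applying the same reasoning with $f^{-1}$ produces the matching upper bound.

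There is no real obstacle here; the proof is essentially a transport argument using only bijectivity and the defining property of an isometry. The only point that warrants a moment of care is emphasizing that one does \emph{not} need $f$ to extend to an isometry of the ambient spaces $\setX$ and $\mathcal{Y}$; all distances arising in the definitions of $\covering$ and $\packing$ on $(\entSet_{\setX}, \genMetric_{\setX})$ and $(\entSet_{\mathcal{Y}}, \genMetric_{\mathcal{Y}})$ are between points of $\entSet_{\setX}$ and $\entSet_{\mathcal{Y}}$, respectively, which is exactly where the isometry hypothesis applies. I would conclude by noting that the statement for the exterior covering number $\coveringExt$ does not follow from this lemma as stated (since $\epsilon$-nets live in the ambient space), which is presumably why only $\covering$ and $\packing$ are claimed.
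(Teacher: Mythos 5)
Your proof is correct: the transport argument via $f$ and $f^{-1}$, using injectivity for distinctness of packing points and the isometry property for all distance comparisons, is exactly the standard proof of this fact. Note that the paper itself does not supply a proof — it cites the result from the Tikhomirov reference — so there is nothing to compare against beyond confirming that your argument is the canonical one; your closing remark that the exterior covering number is deliberately excluded (since $\epsilon$-nets live in the ambient space, where the isometry is not defined) is also correct and consistent with how the paper instead handles $\coveringExt$ via the sandwich inequalities of Lemma \ref{lm:number_relation}.
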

\begin{comment}
\begin{proof}
    We focus on proving $\covering(\epsilon;\entSet_{\setX},\genMetric_{\setX}) =\covering(\epsilon;\entSet_{\mathcal{Y}},\genMetric_{\mathcal{Y}})$, since the case of packing number is similar. Assume $\{x_1,x_2,\dots,x_n\}$ is a minimal $\epsilon$-covering for $\entSet_{\setX}$. Let $y\in \entSet_{\mathcal{Y}}$ and note that $f^{-1}(y) \in \entSet_{\setX}$. Thus by definition of $\epsilon$-covering, we have $\genMetric_{\mathcal{Y}}(y, f(x_j)) = \genMetric_{\setX}(f^{-1}(y), x_j)\leq \epsilon$ for some $j$. Consequently, $\{f(x_1),f(x_2),\dots,f(x_n)\}$ is an $\epsilon$-covering for $\entSet_{\mathcal{Y}}$, which implies that 
    \begin{equation}\label{eq:NY_leq_NX}\covering(\epsilon;\entSet_{\mathcal{Y}},\genMetric_{\mathcal{Y}}) \leq \covering(\epsilon;\entSet_{\setX},\genMetric_{\setX}).
    \end{equation}
    Similarly, we can prove that $\covering(\epsilon;\entSet_{\setX},\genMetric_{\setX}) \leq \covering(\epsilon;\entSet_{\mathcal{Y}},\genMetric_{\mathcal{Y}})$, which, combined with \eqref{eq:NY_leq_NX}, concludes the proof.
\end{proof}
\end{comment}

Lemma \ref{lm:ent_inv} will allow us to work with a simplified metric space $(\LFMsetZero{w}, \metricZero)$ instead of the original one $(\LFMset{w}, \opMetric)$. Concretely, we exploit the properties of \gls{lfm} systems to effect this reduction as follows. 
First, as \gls{lfm} systems are causal, their output at time $t$ depends on the history of inputs up to and including time $t$ only. Second, time-invariance implies that the mapping taking the history of the input signal to the current output at time $t$ does not change with $t$ and we can therefore restrict ourselves to $t=0$ w.l.o.g. Thus, the mapping
realized by an \gls{lfm} system is completely characterized by the response to signals in the set
\begin{equation}
\label{eq:sig_space_minus}
\sigSpaceminus \defeq \{s\in\sigSpace \mid \forall \ell \in \Nplus: s[\ell]=0\}.
\end{equation}
Using this insight, the simplified metric space can be defined as follows.
\begin{definition}
Let $(w[t])_{t\in \Nzero}$ be a weight sequence. We define the metric space
$(\LFMsetZero{w}, \metricZero)$  according to 
\begin{align}
    \LFMsetZero{w} = & \, \{g: \sigSpaceminus \rightarrow \R \mid 
     |g(x) - g(x')| \leq \norm{x-x'}_w, \; \forall x,x' \in \sigSpaceminus, 
    g(0) = 0 
     \},\label{eq:nonLinSpace} \\
    &\textrm{where } \quad \norm{x-x'}_w \coloneqq \sup_{t\in \Nzero}  w[t]|(x[-t] -x'[-t])| \label{eq:w_norm}.
\end{align}
The metric $\metricZero$ is given by
\begin{equation}\label{eq:def_zero_metric}
\metricZero(g, g') = \sup_{x\in\sigSpaceminus} |g(x) - g'(x)|, \qquad
\textrm{for }g, g' \in \{\sigSpaceminus \to \R\}.
\end{equation} 
\end{definition}
 Next, we define the projection operator $\projLeftSide: \sigSpace \rightarrow \sigSpaceminus$ as
    \[
        (\projLeftSide x)[t] = x[t]\cdot \indArg{t \leq 0}
    \]
and formalize the isometric isomorphism between functionals $g \in \LFMsetZero{w}$ and systems $G \in \LFMset{w}$ as follows.
\begin{lemma}\label{lem:g0_g_isomorphism}
\letWbeWeightSequence{}
    The map
\begin{align}\label{eq:isometry_G0_2_G1}
    \isometry : \LFMsetZero{w} &\rightarrow \LFMset{w} \\
    g & \rightarrow G \coloneqq (x \rightarrow \{g(\projLeftSide\shiftOp{-t} x)\}_{t\in\ZZ})\label{eq:isometry}
\end{align}
    is an isometric isomorphism between $(\LFMsetZero{w}, \metricZero)$ and $(\LFMset{w}, \opMetric)$.
Furthermore, $\covering(\epsilon; \LFMsetZero{w}, \metricZero) = \covering(\epsilon; \LFMset{w}, \opMetric)$ and $\packing(\epsilon; \LFMsetZero{w}, \metricZero) = \packing(\epsilon; \LFMset{w}, \opMetric)$, for all $\epsilon>0$.
\end{lemma}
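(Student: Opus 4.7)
The plan is to verify the three ingredients required by an isometric isomorphism, namely (i) that $\isometry$ maps into $\LFMset{w}$, (ii) that it is a bijection, and (iii) that it preserves the respective metrics; the final claim on covering and packing numbers then follows directly from Lemma \ref{lm:ent_inv}.

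\textbf{Well-definedness.} Given $g\in\LFMsetZero{w}$, set $G=\isometry(g)$, i.e., $(Gx)[t]=g(\projLeftSide\shiftOp{-t}x)$. Causality is immediate since $(\projLeftSide\shiftOp{-t}x)[s]=x[s+t]\cdot\indArg{s\leq 0}$ depends only on $x[\tau]$ for $\tau\leq t$. Time-invariance follows from the identity $\projLeftSide\shiftOp{-(t-\timeShift)}=\projLeftSide\shiftOp{-t}\shiftOp{\timeShift}$, while $(G0)[t]=g(0)=0$ by the second defining property of $\LFMsetZero{w}$. The \gls{lfm} property is obtained by applying the Lipschitz condition in the definition of $\LFMsetZero{w}$:
\[
|(Gx)[t]-(Gy)[t]|\leq \|\projLeftSide\shiftOp{-t}x-\projLeftSide\shiftOp{-t}y\|_w=\sup_{\tau\geq 0}|w[\tau](x[t-\tau]-y[t-\tau])|.
\]

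\textbf{Bijectivity.} For injectivity, suppose $\isometry(g)=\isometry(g')$. For any $s\in\sigSpaceminus$, view $s$ as an element of $\sigSpace$ and evaluate at $t=0$: since $\projLeftSide s=s$ (the projection acts trivially on $\sigSpaceminus$), we obtain $g(s)=g'(s)$. For surjectivity, given $G\in\LFMset{w}$ define $g:\sigSpaceminus\to\R$ by $g(s)\defeq (Gs)[0]$. The \gls{lfm} property and the zero-input-zero-output condition of $G$ imply that $g\in\LFMsetZero{w}$. To verify $\isometry(g)=G$, apply time-invariance to get $(Gx)[t]=(\shiftOp{-t}Gx)[0]=(G\shiftOp{-t}x)[0]$, and then use causality together with $(G\shiftOp{-t}x)[0]=(G(\projLeftSide\shiftOp{-t}x))[0]$, since zeroing the entries at positive time indices does not affect the output at time $0$. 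The right-hand side equals $g(\projLeftSide\shiftOp{-t}x)=(\isometry(g)x)[t]$.

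\textbf{Isometry.} Invoking Lemma \ref{lem:sup_over_N_same},
\[
\opMetric(\isometry(g),\isometry(g'))=\sup_{x\in\sigSpace}\sup_{t\in\ZZ}|g(\projLeftSide\shiftOp{-t}x)-g'(\projLeftSide\shiftOp{-t}x)|.
\]
Since $\projLeftSide\shiftOp{-t}x\in\sigSpaceminus$ for every $x\in\sigSpace$ and $t\in\ZZ$, the right-hand side is bounded above by $\metricZero(g,g')$. Conversely, for any $s\in\sigSpaceminus\subset\sigSpace$, taking $x=s$ and $t=0$ yields $\projLeftSide\shiftOp{0}s=s$, so $|g(s)-g'(s)|$ appears in the supremum; thus we obtain the reverse inequality and equality overall. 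The claim about covering and packing numbers then follows immediately from Lemma \ref{lm:ent_inv}.

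The only step requiring mild care is the surjectivity argument, specifically the combined use of time-invariance and causality to justify that the projection $\projLeftSide$ inserted inside $G$ does not alter the output at time $0$; everything else is a direct translation between the two formulations.
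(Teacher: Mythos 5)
Your proposal is correct and follows essentially the same route as the paper: verify well-definedness (causality, time-invariance, LFM, zero-in-zero-out), establish the isometry via Lemma \ref{lem:sup_over_N_same} and the fact that $\projLeftSide$ maps $\sigSpace$ onto $\sigSpaceminus$, prove surjectivity by setting $g(s)\defeq(Gs)[0]$ and using causality plus time-invariance, and conclude with Lemma \ref{lm:ent_inv}. The only cosmetic difference is that you prove injectivity directly by evaluating at $t=0$ on $\sigSpaceminus$, whereas the paper obtains it as a consequence of the isometry; both are immediate.
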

\begin{proof}See \ref{app:proof_g0_g_isomorphism}.\end{proof}

In the remainder of this section, we first lower-bound $\packing(\epsilon; \LFMsetZero{w}, \metricZero)$, then upper-bound $\covering(\epsilon; \LFMsetZero{w}, \metricZero)$, and finally use Lemmata \ref{lm:ent_inv} and \ref{lem:g0_g_isomorphism} to translate these bounds into bounds on $\coveringExt(\epsilon; \LFMset{w}, \opMetric)$. The lower bound is established as follows.
\begin{lemma}\label{lm:packing_number}
\letWbeWeightSequence{}
The $\epsilon$-packing number of $(\LFMsetZero{w}, \metricZero)$ satisfies
    \[
     \log \packing(\epsilon; \LFMsetZero{w}, \metricZero) \geq
        \left (\prod_{\ell=0}^T \ceil{\frac{2\bound w[\ell]}{{\epsilon}}} \right ) - 1 ,
    \]
    with $T := \max \{ T' \in \Nzero \mid w[T']> \frac{\epsilon}{2\bound }\}$.
\end{lemma}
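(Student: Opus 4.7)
I will prove the bound by exhibiting an explicit $\epsilon$-packing of $\LFMsetZero{w}$ of cardinality $2^{N-1}$ with $N \coloneqq \prod_{\ell=0}^T \ceil{2\bound w[\ell]/\epsilon}$, via a grid-of-bumps construction, and then invoke Lemma \ref{lm:number_relation} implicitly through the strict-inequality requirement of Definition \ref{def:packing}.

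\textbf{Grid.} Set $n_\ell \coloneqq \ceil{2\bound w[\ell]/\epsilon}$ for $\ell \in \ssidedEnum{T}$; by the definition of $T$, $n_\ell \geq 2$, and $n_\ell \leq 2\bound w[\ell]/\epsilon + 1$ yields $(n_\ell - 1)\epsilon/w[\ell] \leq 2\bound$. In the $\ell$-th coordinate, place $n_\ell$ values in $[-\bound, \bound]$ at spacing $\epsilon/w[\ell]$ (e.g., $-\bound, -\bound + \epsilon/w[\ell], \ldots, -\bound + (n_\ell-1)\epsilon/w[\ell]$). Their Cartesian product yields a grid $V \subset [-\bound, \bound]^{T+1}$ of cardinality $N$ in which any two distinct points differ by at least $\epsilon/w[\ell]$ in some $\ell$-th coordinate. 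Embed each $v \in V$ into a signal $\sigma_v \in \sigSpaceminus$ via $\sigma_v[-\ell] \coloneqq v_\ell$ for $\ell \in \ssidedEnum{T}$ and $\sigma_v[t] \coloneqq 0$ otherwise, so that $\|\sigma_u - \sigma_v\|_w \geq \epsilon$ for all distinct $u, v \in V$. Let $v^\star \in V$ minimize $\|\sigma_{v^\star}\|_w$; since each coordinate of $v^\star$ can be the grid value closest to $0$ (within $\epsilon/(2w[\ell])$), one has $\|\sigma_{v^\star}\|_w \leq \epsilon/2$. For $v \in V \setminus \{v^\star\}$ the triangle inequality then yields $\|\sigma_v\|_w \geq \|\sigma_v - \sigma_{v^\star}\|_w - \|\sigma_{v^\star}\|_w \geq \epsilon/2$.

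\textbf{Bumps and packing.} Define bumps $\phi_v(x) \coloneqq \max\{0, \epsilon/2 - \|x - \sigma_v\|_w\}$ for $v \in V \setminus \{v^\star\}$. Each $\phi_v$ is $1$-Lipschitz in $\|\cdot\|_w$, attains $\epsilon/2$ at $\sigma_v$, and is supported in the open ball $B_v \coloneqq \{x \in \sigSpaceminus : \|x - \sigma_v\|_w < \epsilon/2\}$; the pairwise separation $\geq \epsilon$ of the centers $\sigma_v$ makes the $B_v$ pairwise disjoint, and $0 \notin B_v$ since $\|\sigma_v\|_w \geq \epsilon/2$. For each sign function $s : V \setminus \{v^\star\} \to \{-1, +1\}$, define $g_s \coloneqq \sum_{v \in V \setminus \{v^\star\}} s(v)\, \phi_v$; this yields $2^{N-1}$ candidate functionals, and $g_s(0) = 0$ is immediate from $\phi_v(0) = 0$. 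For $1$-Lipschitzness of $g_s$: disjointness of the $B_v$ means that at any $x \in \sigSpaceminus$ at most one bump contributes, and a case analysis (both $x, y$ in the same $B_v$; $x \in B_v$, $y \in B_u$ with $u \neq v$; at most one of $x, y$ inside any $B_v$) reduces every case to $|g_s(x) - g_s(y)| \leq \|x-y\|_w$; the crucial cross-bump case uses $\phi_v(x) + \phi_u(y) = \epsilon - \|x-\sigma_v\|_w - \|y-\sigma_u\|_w \leq \|\sigma_v-\sigma_u\|_w - \|x-\sigma_v\|_w - \|y-\sigma_u\|_w \leq \|x-y\|_w$. For $s \neq s'$ there exists $v$ with $s(v) \neq s'(v)$; evaluation at $\sigma_v$ gives $|g_s(\sigma_v) - g_{s'}(\sigma_v)| = 2\phi_v(\sigma_v) = \epsilon$, so $\metricZero(g_s, g_{s'}) \geq \epsilon$. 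The strict inequality demanded by Definition \ref{def:packing} is obtained by rerunning the construction with bump height $(\epsilon+\eta)/2$ and grid spacing $(\epsilon+\eta)/w[\ell]$ for $\eta > 0$ small enough that $\ceil{2\bound w[\ell]/(\epsilon+\eta)} = n_\ell$ for every $\ell$ (possible because $n_\ell - 1 < 2\bound w[\ell]/\epsilon$). Taking $\log$ yields the claim.

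\textbf{Main obstacle.} The principal subtlety is the $1$-Lipschitz verification in the cross-bump case, which requires simultaneously exploiting disjointness of supports, unit Lipschitzness of each bump, and the $\epsilon$-separation of centers---none being sufficient in isolation. The surrounding bookkeeping (fitting $n_\ell$ values into $[-\bound, \bound]$ and achieving $g_s(0) = 0$ without forcing $0 \in V$) is routine once the device of excluding the single closest grid point $v^\star$ is in place.
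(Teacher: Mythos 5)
Your construction is correct and is essentially the paper's own argument: an $\epsilon$-separated coordinate grid of truncated signals, signed tent functions of height roughly $\epsilon/2$ centered at all grid points except the one nearest the zero signal, and the $2^{N-1}$ sign patterns forming the packing. The only differences are cosmetic --- the paper builds strict separation directly into the grid spacing and excludes the ball containing the zero signal, whereas you obtain strictness via the perturbation $\epsilon \to \epsilon + \eta$ and verify the Lipschitz property through a direct cross-bump inequality instead of the paper's segment-crossing argument.
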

\begin{proof}
    The proof is taken from \cite{kolmogorov1959varepsilon} and is detailed, for completeness, in \ref{app:proof_packing_number}.
\end{proof}

To upper-bound $\covering(\epsilon; \LFMsetZero{w}, \metricZero)$, we construct an $\epsilon$-net for $(\LFMsetZero{w}, \metricZero)$. This construction is again inspired by \cite{kolmogorov1959varepsilon} but we need to modify it to ensure that the elements of the $\epsilon$-net can efficiently be realized by \gls{relu} networks. To be specific, we employ piecewise linear mappings to approximate \gls{lfm} systems instead of piecewise constant mappings as considered in \cite{kolmogorov1959varepsilon}; this requires significant adjustments to the proof in \cite[Section 7.2]{kolmogorov1959varepsilon}.

\textcolor{Iter1Color}{We start by introducing the ``spike'' function $\phi: \R^{d} \rightarrow \R$  considered in \cite{karnik2024neural}, and defined as
\begin{equation}\label{eq:phi_def}
    \phi(z) = \max \{1 + \min\{z_1,\dots,z_d,0\} - \max\{z_1,\dots,z_d,0\}, 0\}.
\end{equation}
An illustration of spike functions for $d=1$ and $d=2$ is provided in Figure \ref{fig:spike}. }The idea underlying the construction of such ``spike'' functions can be traced back to \cite{yarotsky2018optimal}, where the convex set
\begin{equation}\label{eq:simplexes}
\mathcal{T}\defeq\{z\in\R^d\mid \max\{z_1,\dots,z_d,0\} - \min\{z_1,\dots,z_d,0\} \leq 1\}
\end{equation}
is considered and shown to be the union of $(d+1)!$ simplices, each having $0$ as a vertex, given by
\begin{equation}\label{eq:simplexes_exs}
    \{z\in \mathcal{T}\mid~ z_{\permutation(0)}\leq z_{\permutation(1)}\leq \dots\leq z_{\permutation(d)}\},
\end{equation}
where $\permutation$ is a permutation 
of the integers $0,1,\dots, d$ and $z_0\defeq 0$. In \cite{yarotsky2018optimal} this result is employed to approximate continuous functions mapping $\R^d$ to $\R$ by functions that are piecewise linear on the simplices (\ref{eq:simplexes_exs}).

We remark that the spike function \eqref{eq:phi_def} is a composition of affine functions and min/max functions, which, as shown in Section \ref{sec:RNN2G}, renders it uniquely
suitable for realization through \gls{relu} networks. 
To be specific, in Lemmata \ref{lm:NN2minmax} and \ref{lm:NN2phi}, we provide concrete constructions of spike functions using \gls{relu} networks. 
To the best of our knowledge, these constructions are novel.

\begin{figure}[ht]
    \centering
    \begin{subfigure}[b]{0.45\textwidth}
        \centering
        \includegraphics[width=\textwidth]{./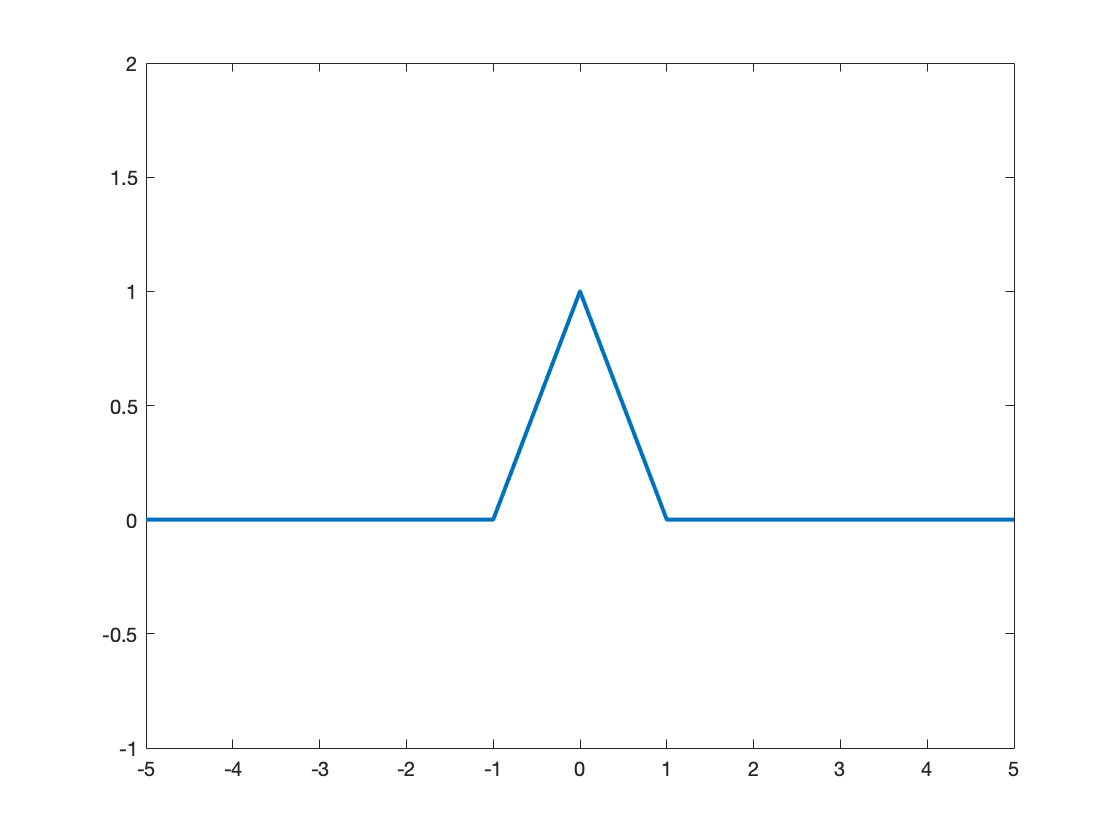} 
        \caption{$d=1$}
        \label{fig:subfig1}
    \end{subfigure}
    \hfill
    \begin{subfigure}[b]{0.45\textwidth}
        \centering
        \includegraphics[width=\textwidth]{./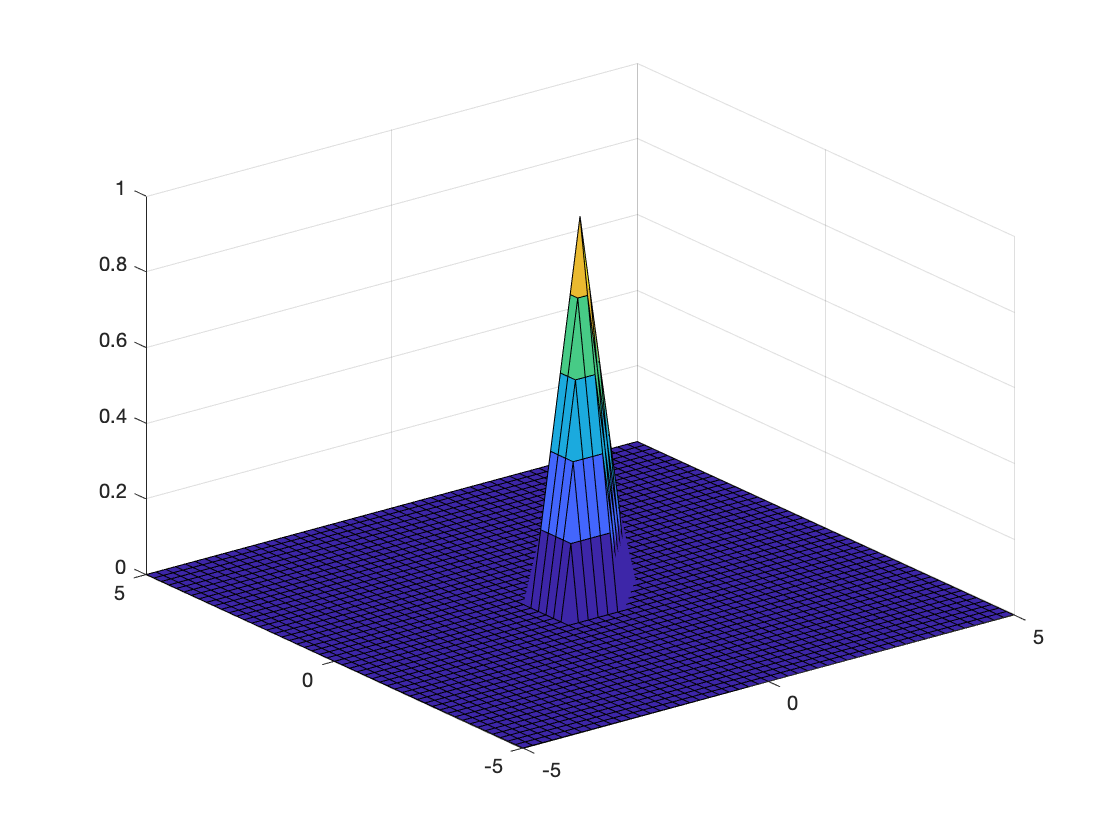}
        \caption{$d=2$}
        \label{fig:subfig2}
    \end{subfigure}
    \caption{Spike functions in dimensions $1$ and $2$.}
    \label{fig:spike}
\end{figure}

We proceed to show how a \gls{pou} can be realized as a weighted linear combination of shifted spike functions, a property that will be of key importance in the RNN constructions described in Section \ref{sec:RNN2G}.

% To this end, we consider the lattice 
% \begin{equation}\label{eq:lattice_def}
%     \lattice = \dsidedEnum{M_1}\times \cdots \times \dsidedEnum{M_d} \subset \R^{d}, \text{ where }M_{\ell}\in \textcolor{ltc}{\Nplus} \text{, for  }\ell\in \{1,2,\dots,d\},
% \end{equation}
% with the associated collection of shifted ``spike'' functions on $\R^d$
% \begin{equation}\label{eq:POU}
%     \Xi \defeq\{\phi(\cdot-\vecindex)\}_{\vecindex\in \lattice}.
% \end{equation}
% The construction of the \gls{pou} is as follows.

\begin{lemma}\label{lm:partition_of_unity}
Consider the spike function 
\begin{equation}\label{eq:phi_def_lm}
    \phi(z) = \max \{1 + \min\{z_1,\dots,z_d,0\} - \max\{z_1,\dots,z_d,0\}, 0\},
\end{equation}
the lattice 
\begin{equation}\label{eq:lattice_def_lm}
    \lattice = \dsidedEnum{M_1}\times \cdots \times \dsidedEnum{M_d} \subset \R^{d}, \text{ where }M_{\ell}\in \Nplus \text{, for  }\ell\in \{1,2,\dots,d\},
\end{equation}
and the set 
\begin{equation}\label{eq:POU_lm}
    \Xi \defeq\{\phi(\cdot-\vecindex)\}_{\vecindex\in \lattice}.
\end{equation}
Then, $\Xi$ forms a \gls{pou} on $\prod_{\ell =1}^{d}[-M_{\ell},M_{\ell}]$, i.e.,
\begin{enumerate}
    \item[(i)] $0\leq \phi(z-\vecindex)\leq 1$, for $z\in \R^d$ and $\vecindex\in \lattice$;
    \item[(ii)] $\phi(\cdot-\vecindex)$ is compactly supported, specifically $\operatorname{supp}(\phi(\cdot-\vecindex)) \subset \vecindex+[-1,1]^{d}$; 
    \item[(iii)] it holds that
    \begin{equation*}
        \sum_{\vecindex\in \lattice} \phi(z-\vecindex) = 1,\quad \text{for }z\in \prod_{\ell =1}^{d}[-M_{\ell},M_{\ell}].
    \end{equation*}
\end{enumerate}
\end{lemma}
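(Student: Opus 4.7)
The plan is to verify each of (i)--(iii) directly from the definition of $\phi$. Claim (i) is immediate: since $0$ is among the arguments of both the max and the min, we have $\min\{z_1,\dots,z_d,0\}\leq 0\leq \max\{z_1,\dots,z_d,0\}$, so $1+\min\{z_1,\dots,z_d,0\}-\max\{z_1,\dots,z_d,0\}\leq 1$, while the outer $\max\{\cdot,0\}$ ensures non-negativity. For (ii), I would argue contrapositively: if $\phi(z-\vecindex)>0$, then $\max\{(z-\vecindex)_1,\dots,(z-\vecindex)_d,0\}-\min\{(z-\vecindex)_1,\dots,(z-\vecindex)_d,0\}<1$, and since each coordinate $(z-\vecindex)_i$ lies between this min (which is $\leq 0$) and this max (which is $\geq 0$), it follows that $(z-\vecindex)_i\in[-1,1]$ for every $i$.

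The bulk of the work is (iii). The plan is to fix $z\in\prod_{\ell=1}^{d}[-M_\ell,M_\ell]$, write $k_i=\lfloor z_i\rfloor$ and $t_i=z_i-k_i\in[0,1)$, and invoke (ii) to restrict attention to lattice points $\vecindex\in\lattice$ with $\vecindex_i\in\{k_i,k_i+1\}$ for all $i$. (Out-of-range cases such as $\vecindex_i=M_i+1$ occurring when $z_i=M_i$ are inert, since $\phi$ vanishes there; see the final paragraph.) Using the invariance of $\phi$ under permutations of its arguments, I would reduce WLOG to the sorted case $t_1\geq t_2\geq\dots\geq t_d$. The main computation is then: for any subset $A\subseteq\{1,\dots,d\}$, setting $\vecindex_i=k_i+1$ iff $i\in A$, the point $y=z-\vecindex$ satisfies $y_i=t_i-1\leq 0$ for $i\in A$ and $y_i=t_i\geq 0$ otherwise, so
\[
\phi(y)=\max\Bigl\{1+\bigl(\min_{i\in A}t_i-1\bigr)-\max_{i\notin A}t_i,\ 0\Bigr\}=\max\Bigl\{\min_{i\in A}t_i-\max_{i\notin A}t_i,\ 0\Bigr\},
\]
under the conventions $\min\emptyset:=1$ and $\max\emptyset:=0$. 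In the sorted setting, this is strictly positive only when $A$ is an initial segment $\{1,\dots,j\}$ for some $j\in\{0,\dots,d\}$, in which case $\phi(y)=t_j-t_{j+1}$ with $t_0:=1$ and $t_{d+1}:=0$. Summing over $j$ yields the telescope $\sum_{j=0}^{d}(t_j-t_{j+1})=t_0-t_{d+1}=1$.

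The main obstacle I anticipate is the tie and boundary bookkeeping: when several $t_i$'s coincide, or when $z_i$ is an integer (so $t_i=0$), the sorted order is non-unique and additional subsets $A$ become candidate staircases; simultaneously, for $z$ on the boundary of $\prod_\ell[-M_\ell,M_\ell]$, some staircase points $\vecindex^{(j)}$ may fall outside $\lattice$. Both issues resolve cleanly because in precisely these ambiguous or boundary cases the corresponding term $t_j-t_{j+1}$ equals $0$, so dropping (or duplicating) such terms does not alter the telescoping identity, and the sum over $\vecindex\in\lattice$ still equals $1$.
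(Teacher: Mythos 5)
Your proposal is correct and follows essentially the same route as the paper's proof: (i) and (ii) are verified directly from the definition, and (iii) is reduced to a single unit cell via floors/shifts, sorted using the permutation invariance of $\phi$, with only the ``initial-segment'' lattice points contributing, and the resulting terms telescoping to $1$. Your explicit subset formula $\phi(y)=\max\{\min_{i\in A}t_i-\max_{i\notin A}t_i,0\}$ and your handling of ties and boundary points ($t_j-t_{j+1}=0$, out-of-range shifts giving $\phi=0$) match, and if anything slightly sharpen, the computation in the paper.
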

\begin{proof}
To prove (i), we note that $\phi(z) \geq 0$ by definition and 
$$1 + \min\{z_1,\dots,z_d,0\} - \max\{z_1,\dots,z_d,0\}\leq 1.$$

For (ii), it suffices to show that
\begin{equation*}
    \phi(z) =0, \quad \text{for } z\in \R^d \setminus [-1,1]^{d}.
\end{equation*}
To this end, we pick $z\in \R^d \setminus [-1,1]^{d}$ arbitrarily and fix an arbitrary $\ell \in \{1,2,\dots,d\}$ such that $|z_{\ell}|>1$. Assume that $z_{\ell}>1$. (The case $z_{\ell}<-1$ follows similarly.) Then,
$$1 + \min\{z_1,\dots,z_d,0\} - \max\{z_1,\dots,z_d,0\}\leq 1+0-z_{\ell} <0,$$
which by \eqref{eq:phi_def_lm} implies $\phi(z)=0$.

We proceed to prove (iii). Since $\operatorname{supp}(\phi(\cdot-\vecindex)) \subset \vecindex+[-1,1]^{d}$, we have
\begin{equation*}
        \sum_{\vecindex\in \lattice} \phi(z-\vecindex) = \sum_{\vecindex\in\prod_{\ell=1}^{d} \{\lfloor z_{\ell}\rfloor, \lceil z_{\ell} \rceil\}} \phi(z-\vecindex), \quad \text{for }z\in \prod_{\ell =1}^{d}[-M_{\ell},M_{\ell}]. 
\end{equation*}
Defining $\bar{z}\in \R^d$ according to $\bar{z}_\ell = \floor{z_\ell}$, for $\ell=1,2,\dots, d$, and noting that $\phi(z-\vecindex) = \phi((z-\bar{z})-(\vecindex-\bar{z}))$, it suffices to show that 
\begin{equation*}
        \sum_{\vecindex\in\{0,1\}^d} \phi(z-\vecindex) = 1,\quad \text{for }  z\in [0,1]^d \text{ and } \vecindex\in \{0,1\}^d.
\end{equation*}
As $\min\{x_1,\dots,x_d\}$ and $\max\{x_1,\dots,x_d\}$ are permutation-invariant, so is $\phi$ by \eqref{eq:phi_def_lm}. In what follows, we can therefore assume, w.l.o.g., that $z_1\geq z_2 \geq \dots\geq z_d$.

Now, with $e_k$ the $k$-th unit vector in $\R^d$, let
\begin{equation*}
    A \defeq \left\{0,e_1,e_1+e_2,\dots, \sum_{i=1}^{k}e_i,\dots, \sum_{i=1}^{d}e_i\right\}\subset \{0,1\}^d.
\end{equation*}
We claim that 
\begin{equation}
    \phi(z-\vecindex)=0,\quad \text{for } z\in[0,1]^d \text{ and }  \vecindex\in \{0,1\}^d\setminus A.
\end{equation}
This can be verified as follows. First, thanks to
\begin{equation*}
\vecindex \in \{0,1\}^d\setminus A ~\Leftrightarrow ~\exists~ i,j\in\{1,2,\dots, d\},  i<j, \text{ s.t. }\vecindex_i=0, \vecindex_j=1,
\end{equation*}
we get for $z\in[0,1]^d $ and $\vecindex\in \{0,1\}^d\setminus A$, 
\begin{align*}
    &\min\{z_1-\vecindex_1, z_2-\vecindex_2, \dots,z_d-\vecindex_d, 0\} \leq z_j-\vecindex_j=z_j-1,\\
    &\max\{z_1-\vecindex_1, z_2-\vecindex_2, \dots,z_d-\vecindex_d, 0\} \geq z_i-\vecindex_i=z_i,\\
    &\Rightarrow  1+\min\{z_1-\vecindex_1, z_2-\vecindex_2, \dots,z_d-\vecindex_d, 0\} -\max\{z_1-\vecindex_1, z_2-\vecindex_2, \dots,z_d-\vecindex_d, 0\}\leq 0,
\end{align*}
and hence $\phi(z-\vecindex)=0$.
It thus suffices to show that 
\begin{equation}\label{eq:spike_sum_proof}
        \sum_{\vecindex\in A} \phi(z-\vecindex) = 1, \quad \text{for }z\in [0,1]^d.
\end{equation}
Now, a direct calculation yields, for $z\in [0,1]^d$, \begin{align}
   & \phi(z) = 1-z_1,\label{eq:spike_sum1}\\
    &\phi\left(z-\sum_{i=1}^k e_i\right) = z_{k}-z_{k+1},\quad \text{for } k=1,\dots,d-1,\label{eq:spike_sum2}\\
    &\phi\left(z-\sum_{i=1}^d e_i\right) = z_{d}.\label{eq:spike_sum3}
\end{align}
Summing \eqref{eq:spike_sum1}-\eqref{eq:spike_sum3} results in \eqref{eq:spike_sum_proof}.
\end{proof}

Next, we establish a traversal property of the lattice $\lattice$.
\begin{definition}[Regular path]
    For every $d\in \Nplus$, $M_{\ell} \in \Nplus,~\ell\in\{1,\dots,d\}$, and corresponding lattice $\lattice = \dsidedEnum{M_1}\times \cdots \times \dsidedEnum{M_d}$, we call a path of lattice points $\vecindex_1\leftrightarrow \vecindex_2 \leftrightarrow \dots \leftrightarrow \vecindex_{|\lattice|}$ regular for $\lattice$ if 
    \begin{enumerate}
        \item[(i)] the path visits each lattice point in $\lattice$ exactly once, 
        \item[(ii)] $\vecindex_{i+1}$ and $\vecindex_i$, for each $i=1,\dots, |\lattice|-1$, differ in exactly one position, specifically by $+1$ or $-1$.
    \end{enumerate}
\end{definition}

\begin{lemma}\label{lm:lattice_traversal}
    For every $d\in \Nplus$, $M_{\ell} \in \Nplus,~\ell\in\{1,\dots,d\}$, and corresponding lattice $\lattice = \dsidedEnum{M_1}\times \cdots \times \dsidedEnum{M_d}$, there exists a regular path $\vecindex_1\leftrightarrow \vecindex_2 \leftrightarrow \dots \leftrightarrow \vecindex_{|\lattice|}$ for $|\lattice|$.
\end{lemma}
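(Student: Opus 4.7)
The plan is to proceed by induction on the ambient dimension $d$, using the classical ``boustrophedon'' (back-and-forth) construction that stitches together lower-dimensional regular paths slice by slice.

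For the base case $d=1$, the lattice $\dsidedEnum{M_1}=\{-M_1,-M_1+1,\dots,M_1\}$ admits the obvious regular path $-M_1\leftrightarrow -M_1+1\leftrightarrow\cdots\leftrightarrow M_1$, which visits each point exactly once and advances by $+1$ at every step.

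For the inductive step, assume the statement for dimension $d-1$; fix $M_1,\dots,M_d\in\N$, let $\lattice'=\dsidedEnum{M_1}\times\cdots\times\dsidedEnum{M_{d-1}}$ and, by the induction hypothesis, pick a regular path $\vecindex_1\leftrightarrow\vecindex_2\leftrightarrow\cdots\leftrightarrow\vecindex_N$ for $\lattice'$, where $N=|\lattice'|$. Partition $\lattice$ into $2M_d+1$ slices $\lattice_k=\lattice'\times\{k\}$, for $k\in\dsidedEnum{M_d}$. Build the $d$-dimensional path slice-by-slice, ordering the slices by increasing last coordinate, and traversing each slice with the induction-hypothesis path, but alternating its orientation from slice to slice: traverse $\lattice_{-M_d}$ in the order $(\vecindex_1,-M_d),(\vecindex_2,-M_d),\dots,(\vecindex_N,-M_d)$; then jump to $(\vecindex_N,-M_d+1)$ (a legal step, since only the last coordinate changes, by $+1$); traverse $\lattice_{-M_d+1}$ in reverse order $(\vecindex_N,-M_d+1),\dots,(\vecindex_1,-M_d+1)$; then jump to $(\vecindex_1,-M_d+2)$, and so on.

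It remains to verify the two defining properties of a regular path. Every lattice point is visited exactly once because each slice is covered exactly once by the induction hypothesis and the slices partition $\lattice$. Each within-slice transition changes a single coordinate among the first $d-1$ by $\pm 1$ (by the induction hypothesis applied to the reversed or forward path), and each between-slice transition changes only the last coordinate by $+1$; hence every consecutive pair differs in exactly one coordinate, by $\pm 1$. The crucial point that makes the stitching work is that reversing the path $\vecindex_1\leftrightarrow\cdots\leftrightarrow\vecindex_N$ yields a path starting at $\vecindex_N$ and ending at $\vecindex_1$, so the endpoint of the traversal of $\lattice_k$ always shares its first $d-1$ coordinates with the starting point of the next slice's traversal, making the single-step jump in the last coordinate legal. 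No serious obstacle arises; the only bookkeeping issue is to ensure that the alternation of orientations is consistent with the stitching moves, which is immediate from the observation above.
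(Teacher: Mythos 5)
Your proposal is correct and matches the paper's proof: both argue by induction on the dimension, with the same base case and the same boustrophedon stitching of the $(d-1)$-dimensional regular path across slices of the last coordinate, alternating the traversal direction so that consecutive slices connect by a single $+1$ step in the last coordinate.
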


\begin{proof}
    We write $\lattice_d$ for the $d$-dimensional lattice $\dsidedEnum{M_1}\times \cdots \times \dsidedEnum{M_d}$ to emphasize the dependence on the dimension $d$ and prove the statement by induction over $d$. The base case $d=1$ follows by simply considering the path $-M_1 \leftrightarrow-M_1+1\leftrightarrow \dots \leftrightarrow 0 \leftrightarrow \dots\leftrightarrow M_1-1\leftrightarrow M_1$ for lattice $\lattice_1$.
    Assume now that the statement holds for $d=k$, i.e., there exists a path 
    $\vecindex_1 \leftrightarrow \dots \leftrightarrow \vecindex_{|\lattice_k|}$ that is regular for lattice $\lattice_k$.
    Then, for $d=k+1$, 
     % consider $\lattice_{k+1} = \dsidedEnum{M_1}\times \cdots \times \dsidedEnum{M_{k+1}}$. 
    % Then, 
    the path $(\vecindex_1,-M_{k+1})\leftrightarrow \dots \leftrightarrow (\vecindex_{|\lattice_k|},-M_{k+1}) \leftrightarrow(\vecindex_{|\lattice_k|},-M_{k+1}+1)\leftrightarrow \dots \leftrightarrow (\vecindex_1,-M_{k+1}+1) \leftrightarrow(\vecindex_1,-M_{k+1}+2)\leftrightarrow \dots \leftrightarrow (\vecindex_{|\lattice_k|},-M_{k+1}+2) \leftrightarrow \dots$
    is regular for lattice $\lattice_{k+1}$.
\end{proof}

We are now ready to describe our construction of the $\epsilon$-net for $(\LFMsetZero{w},\metricZero)$. 
In fact, we shall specify not only the elements of the $\epsilon$-net, but also the mapping taking a given
functional $g \in \LFMsetZero{w}$ to an $\epsilon$-close element of the net. Counting the number of ball centers needed to
ensure that every $g \in \LFMsetZero{w}$ is in the $\epsilon$-vicinity of some ball center, then yields the cardinality of the $\epsilon$-net. 
The mapping proceeds by constructing a functional $\tilde{g}$ which is approximately faithful with respect to $g$ on amplitude-discretized and time-truncated input signals. 
% Discretization reflects that we are interested only in an $\epsilon$-precise characterization of the action of the functional $g$ and truncation is rendered possible by the fading-memory property. 
The formal result is as follows.

\begin{lemma}\label{lm:covering_spike}
For every \textcolor{Iter2Color}{$\epsilon>0$ and $s\geq 1$}, let
\begin{align*}
    T&\defeq \max\left\{ \ell \in \Nzero \Bigm| w[\ell] > \frac{\epsilon}{\bound} \frac{s}{s+1} \right\},\\
    \delta_{\ell} &\defeq \frac{s}{s+1} \frac{\epsilon}{w[\ell]},\qquad\forall \ell \in \ssidedEnum{T},\\
    \latticeBound_{\ell} & \defeq \ceil{\frac{\bound}{\delta_{\ell}}} ,\quad \forall\ell \in \ssidedEnum{T},
\end{align*}
and define the mapping 
\begin{align*}
    k&: S_{-} \rightarrow \R^{T+1}\\
    k_{\ell}(x)& = \frac{x[-\ell]}{\delta_{\ell}}, \qquad \forall \ell \in \ssidedEnum{T}.
\end{align*}
Furthermore, consider the lattice 
$$\latticeN \defeq \dsidedEnum{N_0}\times \cdots\times \dsidedEnum{N_T} \subset \R^{T+1}.$$
Then, there exists a set $\coveringset_0 \subset \R^{|\latticeN |}$ with $|\coveringset_0 | = \left(2\left\lceil\frac{s+1}{2}\right\rceil+1\right)^{|\latticeN |-1}$
such that
$$\coveringset \defeq 
\left\{\tilde{g}: S_- \rightarrow \R \biggm| \tilde{g}(x) = \sum_{\vecindex\in\latticeN } \hat{g}_{\vecindex} \phi(k(x)-\vecindex), ~\{\hat{g}_{\vecindex}\}_{\vecindex\in\latticeN } \in \coveringset_0 \right\}$$
is an $\epsilon$-net
for $(\LFMsetZero{w},\metricZero)$. Furthermore, it holds that
$$|\coveringset| = \left(2\left\lceil\frac{s+1}{2}\right\rceil+1\right)^{|\latticeN |-1} = \left(2\left\lceil\frac{s+1}{2}\right\rceil+1\right)^{\left[\prod_{\ell=0}^{T}\left(2\ceil{\frac{\bound w[\ell]}{\epsilon}  \frac{s+1}{s}} + 1\right)\right]-1}.$$
\end{lemma}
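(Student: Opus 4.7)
My plan is to construct the $\epsilon$-net explicitly by pairing each lattice point with a ``test signal''. Given $g \in \LFMsetZero{w}$, for every $\vecindex = (n_0,\dots,n_T) \in \latticeN$ I would introduce a representative signal $x_\vecindex \in \sigSpaceminus$ by setting $x_\vecindex[-\ell] := \operatorname{sign}(n_\ell)\min(|n_\ell|\delta_\ell,\bound)$ for $\ell\in\ssidedEnum{T}$ and $x_\vecindex[t] := 0$ otherwise, where the $\min$ clips those boundary indices with $|n_\ell|=N_\ell$ and $|n_\ell|\delta_\ell>\bound$. Setting $\Delta := 2\epsilon/(s+1)$, I would take $\hat g_\vecindex$ to be the nearest integer multiple of $\Delta$ to $g(x_\vecindex)$, so that $|\hat g_\vecindex - g(x_\vecindex)| \leq \epsilon/(s+1)$ and $\hat g_\vecindex = 0$ at the lattice origin (since $g(0)=0$).

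For the approximation bound, I would fix an arbitrary $x \in \sigSpaceminus$ and expand via the partition-of-unity identity of Lemma \ref{lm:partition_of_unity}(iii), which applies because $|k_\ell(x)| \leq \bound/\delta_\ell \leq N_\ell$ places $k(x)$ inside $\prod_\ell [-N_\ell,N_\ell]$:
\[
|g(x) - \tilde g(x)| \leq \sum_{\vecindex\in\latticeN} |g(x) - \hat g_\vecindex|\,\phi(k(x) - \vecindex).
\]
For each $\vecindex$ with $\phi(k(x)-\vecindex)>0$, the definition of $\phi$ forces $|k_\ell(x) - n_\ell| < 1$ for every $\ell$, hence $|x[-\ell] - x_\vecindex[-\ell]| \leq \delta_\ell$ (the clipping only shrinks this gap because $x[-\ell]\in[-\bound,\bound]$). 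The \gls{lfm} property combined with the defining choices of $\delta_\tau$ and $T$ (which guarantee $w[\tau]\delta_\tau \leq s\epsilon/(s+1)$ for $\tau\leq T$ and $w[\tau]\bound \leq s\epsilon/(s+1)$ for $\tau>T$) then yields $|g(x) - g(x_\vecindex)| \leq s\epsilon/(s+1)$. Adding the quantization error gives $|g(x) - \hat g_\vecindex| \leq \epsilon$ for every contributing $\vecindex$, and hence $|g(x) - \tilde g(x)| \leq \epsilon$.

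For the cardinality count, I would invoke Lemma \ref{lm:lattice_traversal} to traverse $\latticeN$ along a regular path $\vecindex_1 \leftrightarrow \dots \leftrightarrow \vecindex_{|\latticeN|}$. Each step flips one coordinate by $\pm 1$, so $x_{\vecindex_{i+1}}$ and $x_{\vecindex_i}$ differ at a single time index by at most $\delta_{\ell^*}$, and \gls{lfm} gives $|g(x_{\vecindex_{i+1}}) - g(x_{\vecindex_i})| \leq w[\ell^*]\delta_{\ell^*} = s\epsilon/(s+1)$. Adding the two quantization errors $\Delta/2$, the difference $\hat g_{\vecindex_{i+1}} - \hat g_{\vecindex_i}$ is an integer multiple of $\Delta$ with magnitude at most $((s+2)/2)\Delta$; since $\lfloor(s+2)/2\rfloor \leq \lceil(s+1)/2\rceil$ (checked by parity), the admissible integer coefficients lie in $\{-\lceil(s+1)/2\rceil,\dots,\lceil(s+1)/2\rceil\}$, giving at most $2\lceil(s+1)/2\rceil + 1$ choices per transition. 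Anchoring $\hat g = 0$ at the lattice origin eliminates one degree of freedom, so the $|\latticeN| - 1$ remaining transitions produce a set of size $(2\lceil(s+1)/2\rceil + 1)^{|\latticeN| - 1}$, matching the claimed $|\coveringset_0|$.

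The main subtlety I anticipate concerns the boundary lattice indices with $|n_\ell| = N_\ell$, for which $|n_\ell|\delta_\ell$ may exceed $\bound$, so the ``raw'' tuple $(n_0\delta_0,\dots,n_T\delta_T)$ would not lie in $\sigSpaceminus$. Clipping in the definition of $x_\vecindex$ restores admissibility; the key saving observation is that clipping moves $n_\ell\delta_\ell$ only toward $[-\bound,\bound]$ and thus only brings $x_\vecindex[-\ell]$ closer to both $x[-\ell]$ and any neighboring clipped grid value. Consequently, every Lipschitz-type distance used above is preserved (in fact tightened) on the boundary, and no step of the argument needs separate treatment there.
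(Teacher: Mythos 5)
Your proposal is correct and follows essentially the same route as the paper's proof: interpolate $g$ at the lattice signals using the spike-function partition of unity, quantize the interpolation weights to within $\epsilon/(s+1)$, and count the admissible quantized weight vectors via bounded increments along a regular traversal of $\latticeN$ (Lemma \ref{lm:lattice_traversal}), anchored at the lattice origin where $g$ vanishes. The only harmless deviations are that you round each $g(x_{\vecindex})$ independently to the nearest multiple of $2\epsilon/(s+1)$ and use the path solely for counting, whereas the paper assigns the quantized values sequentially along the path (splitting it at the origin if needed), and that you clip the boundary grid signals into $[-\bound,\bound]$ so they lie in $\sigSpaceminus$ — a detail the paper's construction leaves implicit.
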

\begin{proof}
The proof proceeds in three steps.

\textit{Step 1:} We pick amplitude-discretized and time-truncated signals from the set $\sigSpaceminus$ on the lattice $\latticeN$ and approximately interpolate the functional $g \in \LFMsetZero{w}$ on these signals, using the elements of the \gls{pou} $\Xi=\{\phi(\cdot-\vecindex)\}_{\vecindex\in \latticeN}$ as interpolation basis functions, to get a new functional $\hat{g}$.

Fix $g \in \LFMsetZero{w}$ arbitrarily. For each $\vecindex\in\latticeN $, define $\xhat_{\vecindex} \in \sigSpaceminus$ according to
\begin{equation}\label{eq:sig_lattice}
\xhat_{\vecindex}[-\ell] =
\begin{cases}
    \delta_\ell \vecindex_{\ell}, & \text{ if } \ell \in \ssidedEnum{T} \\
    0, & \text{ if } \ell \in \ZZ\setminus\ssidedEnum{T}
\end{cases}
\end{equation}
and let $\hat{g}: S_- \rightarrow \R$ be given by
\begin{equation}\label{eq:hat_g}
    \hat{g}(x) = \sum_{\vecindex\in\latticeN } g(\xhat_{\vecindex}) \phi(k(x)-\vecindex).
\end{equation}
We first prove that 
\begin{equation}\label{eq:hat_g2g}
    |\hat{g}(x) - g(x)|\leq \frac{s\epsilon}{s+1}, \quad \text{for }x\in \sigSpaceminus.
\end{equation}
By Lemma \ref{lm:partition_of_unity}, $\{\phi(\cdot-\vecindex)\}_{\vecindex\in\latticeN }$ constitutes a \gls{pou} on $\prod_{\ell =0}^{T}[-N_{\ell},N_{\ell}]$, and hence
\begin{equation}\label{eq:kx_vanish}
    \phi(k(x)-\vecindex)=0, \quad \text{for all } x\in \sigSpaceminus \text{ s.t. }\|k(x)-\vecindex\|_{\infty}> 1.
\end{equation}
Furthermore, for $x\in S_-$ and $\vecindex \in \latticeN$ such that $\|k(x)-\vecindex\|_{\infty}\leq 1 $, we have
\begin{equation*}
     \left|\frac{x[-\ell]}{\delta_{\ell}} - \vecindex_{\ell}\right| \leq 1 ,\quad \text{for } \ell \in \ssidedEnum{T},\nonumber
\end{equation*}
which gives
\begin{align*}
    w[\ell]\left|x[-\ell] - \xhat_{\vecindex}[-\ell]\right| &\leq w[\ell] \delta_{\ell} = \frac{s\epsilon}{s+1}, \qquad ~ \text{for } \ell \in \ssidedEnum{T}, \\
    w[\ell]\left|x[-\ell] - \xhat_{\vecindex}[-\ell]\right| &=  w[\ell]\left|x[-\ell]\right|\\
    &\leq \frac{\epsilon}{D}\frac{s}{s+1} D = \frac{s\epsilon}{s+1}, \quad\text{for } \ell > T,
\end{align*}
and therefore
\begin{equation} \label{eq:x_xn_distance}
    \|x-\xhat_{\vecindex}\|_w \leq \frac{s\epsilon}{s+1}.
\end{equation}

Hence, we can bound the approximation error $|g(x) - \hat{g}(x)|$, for $x\in \sigSpaceminus$, according to
\begin{align}
    |g(x) - \hat{g}(x)| 
    \overset{\eqref{eq:hat_g}, \text{ \gls{pou}}}&{=} \left|\sum_{\vecindex\in\latticeN } (g(x)- g(\xhat_{\vecindex})) \phi(k(x)-\vecindex )\right| \label{eq:ghat2g_2}\\
    \overset{\eqref{eq:kx_vanish}}&{=} \left|\sum_{\|k(x)-\vecindex \|_{\infty}\leq 1} (g(x)- g(\xhat_{\vecindex})) \phi(k(x)-\vecindex )\right|\label{eq:ghat2g_3}\\
     \overset{(\phi\geq 0)}&{\leq} \sum_{\|k(x)-\vecindex \|_{\infty}\leq 1}\left| (g(x)- g(\xhat_{\vecindex}))\right| \phi(k(x)-\vecindex )\label{eq:ghat2g_4}\\
     \overset{\text{Lipschitz}}&{\leq} \sum_{\|k(x)-\vecindex \|_{\infty}\leq 1}\left\| x- \xhat_{\vecindex}\right\|_w \phi(k(x)-\vecindex ) \label{eq:ghat2g_5}\\
     \overset{\eqref{eq:x_xn_distance}}&{\leq} \frac{s\epsilon}{s+1}\sum_{\|k(x)-\vecindex \|_{\infty}\leq 1} \phi(k(x)-\vecindex ) \label{eq:ghat2g_6}\\
     \overset{\eqref{eq:kx_vanish}}&{=} \frac{s\epsilon}{s+1}\sum_{\vecindex\in\latticeN } \phi(k(x)-\vecindex ) \label{eq:ghat2g_7}\\
    \overset{\text{\gls{pou}}}&{=} \frac{s\epsilon}{s+1}.\label{eq:ghat2g_8}
\end{align}

\textit{Step 2:} We modify the interpolation weights $g(\xhat_{\vecindex})$ in $\hat{g}$ to weights $\hat{g}_{\vecindex}$, for every $\vecindex\in\latticeN$, along a fixed path of $\latticeN$, to get a new functional $\tilde{g}$ that approximates $g$ to within an error of at most $\epsilon$.

Specifically, we construct, by induction, a functional $\tilde{g}$ such that 
\begin{equation}
    |\tilde{g}(x)-\hat{g}(x)|\leq \Delta \defeq \frac{\epsilon}{s+1}.
\end{equation}
To this end, we let
\begin{equation}\label{eq:tilde_g_def}
    \tilde{g}(x) \defeq \sum_{\vecindex\in\latticeN } \hat{g}_{\vecindex} \phi(k(x)-\vecindex )
\end{equation}
and then find, for every $\vecindex\in\latticeN$, a value $\hat{g}_{\vecindex}$ so that 
\begin{equation}\label{eq:store_value}
    |g(\xhat_{\vecindex})- \hat{g}_{\vecindex}| \leq \Delta,
\end{equation}
which, in turn, yields
\begin{align*}
    |\hat{g}(x) - \tilde{g}(x)| &= \left| \sum_{\vecindex\in\latticeN } (g(\xhat_{\vecindex})- \hat{g}_{\vecindex}) \phi(k(x)-\vecindex ) \right|\\
    & \leq \sum_{\vecindex\in\latticeN } \left|g(\xhat_{\vecindex})- \hat{g}_{\vecindex}\right| \phi(k(x)-\vecindex ) \\
    & \leq \Delta \sum_{\vecindex\in\latticeN }  \phi(k(x)-\vecindex )\\
     \overset{\text{\gls{pou}}}&{=} \Delta.
\end{align*}
Therefore,
\begin{equation}\label{eq:covering_gtilde2g}
    |\tilde{g}(x)-g(x)| \leq |\tilde{g}(x)-\hat{g}(x)| + |\hat{g}(x)-g(x)| \leq \Delta + \frac{s\epsilon}{s+1} = \epsilon.
\end{equation}
It remains to specify how the values $\hat{g}_{\vecindex}$, $\vecindex\in\latticeN$, can be obtained such that \eqref{eq:store_value} holds. This will be done by performing the mapping from $g(\xhat_{\vecindex})$ to $\hat{g}_{\vecindex}$ along a judiciously chosen path $\vecindex_1\leftrightarrow \vecindex_2 \leftrightarrow \dots \leftrightarrow \vecindex_{|\latticeN|}$ that is regular for $\latticeN$ (Lemma \ref{lm:lattice_traversal}). To this end, we distinguish two cases.
% of $\latticeN$.
% To this end, we note that by Lemma \ref{lm:lattice_traversal}, there exists a path $\vecindex_1\leftrightarrow \vecindex_2 \leftrightarrow \dots \leftrightarrow \vecindex_{|\latticeN|}$ that is regular for $\latticeN$. We proceed with the proof by considering two cases.
\begin{itemize}
    \item \textit{Case 1: The path $\vecindex_1\leftrightarrow \vecindex_2 \leftrightarrow \dots \leftrightarrow \vecindex_{|\latticeN|}$ starts or ends at $0\in \latticeN$.} \\In particular, we assume, w.l.o.g., that $\vecindex_1=0$. Next, we find the values $\hat{g}_{\vecindex_k}$ satisfying \eqref{eq:store_value} inductively over the index $k=1,2\dots,|\latticeN|$. The base case $k=1$ is immediate as we can simply set $\hat{g}_{\vecindex_{1}} = 0$ and thereby obtain $$|g(\xhat_{\vecindex_{1}})- \hat{g}_{\vecindex_{1}}| \overset{\eqref{eq:nonLinSpace}}{=} |g(0)-0| = 0 \leq \Delta.$$
Now, assume that \eqref{eq:store_value} holds for some $\vecindex_k$, $k\in \{1,2,\dots,|\latticeN|\}$, on the path $\vecindex_{1} \rightarrow \dots \rightarrow \vecindex_{|\latticeN|}$. Then, by Lemma \ref{lm:lattice_traversal}, $\vecindex_{k+1}$ differs in exactly one position from $\vecindex_k$, namely by $+1$ or $-1$, which, thanks to \eqref{eq:w_norm} and \eqref{eq:sig_lattice}, yields $\left\|\widehat{x}_{\vecindex_{k+1}}-\widehat{x}_{\vecindex_k}\right\|_{w} \leq \frac{\epsilon s}{s+1}$. Upon noting that 
\begin{align*}
\left|g\left(\xhat_{\vecindex_{k+1}}\right)-\hat{g}_{\vecindex_k}\right| &=\left|\underbrace{g\left(\xhat_{\vecindex_{k+1}}\right)-g\left(\widehat{x}_{\vecindex_k}\right)}_{\text{Lipschitz}}+\underbrace{g\left(\widehat{x}_{\vecindex_k}\right)-\hat{g}_{\vecindex_k}}_{\eqref{eq:store_value}}\right| \\ & \leq\left\|\widehat{x}_{\vecindex_{k+1}}-\widehat{x}_{\vecindex_k}\right\|_{w}+\Delta \\ & \leq  \frac{\epsilon s}{s+1}+\Delta=(s+1) \Delta,
\end{align*}
we can conclude that there exists an 
\begin{equation}\label{eq:deviation_store_g}
\begin{aligned}
    &~m\in\left\{ -2\left\lceil\frac{s+1}{2}\right\rceil, -2\left\lceil\frac{s+1}{2}\right\rceil+2, \dots, 2\left\lceil\frac{s+1}{2}\right\rceil \right\},\\
    &\text{s.t. }~ \hat{g}_{\vecindex_{k+1}} = \hat{g}_{\vecindex_k}+ m\Delta  ~\text{ and }~ \left|\hat{g}_{\vecindex_{k+1}}-g(\xhat_{\vecindex_{k+1}})\right|\leq \Delta.
\end{aligned}
\end{equation}
This completes the induction. 
    \item \textit{Case 2: The path $\vecindex_1\leftrightarrow \vecindex_2 \leftrightarrow \dots \leftrightarrow \vecindex_{|\latticeN|}$ does not start or end at $0\in \latticeN$.} \\
    With $\vecindex_i=0$, for some $i\in \{2,3,\dots, |\latticeN|-1\}$, and following the spirit of \textit{Case 1}, we split the path 
    % In particular, we assume $\vecindex_i=0\in\latticeN$, for some $i\in \{2,3,\dots, |\latticeN|-1\}$. To follow the spirit of \textit{Case 1}, we start the search of $\{\hat{g}_{\vecindex}\}_{\vecindex\in \latticeN}$ from $\vecindex_i=0$.
    % We then split the path 
    $\vecindex_1\leftrightarrow \vecindex_2 \leftrightarrow \dots \leftrightarrow \vecindex_{|\latticeN|}$ into $\operatorname{path}_{\leftarrow}\defeq\vecindex_{i} \rightarrow \vecindex_{i-1}\rightarrow  \dots \rightarrow \vecindex_{1}$ and $\operatorname{path}_{\rightarrow}\defeq\vecindex_{i} \rightarrow \vecindex_{i+1}\rightarrow  \dots \rightarrow \vecindex_{|\latticeN|}$. The idea is to prove \eqref{eq:store_value} by performing induction across $\operatorname{path}_{\leftarrow}$ and $\operatorname{path}_{\rightarrow}$ separately. This can be done following the same procedure as in \textit{Case 1}. 
\end{itemize}

\textit{Step 3:} Repeat \textit{Steps 1} and \textit{2} for all $g\in\LFMsetZero{w}$ and collect all the resulting $\{\hat{g}_{\vecindex}\}_{\vecindex\in\latticeN}$ in the set $\coveringset_0$. For \textit{Case 1} in \textit{Step 2}, we need to store one value for $\hat{g}_{\vecindex_{1}}$ and, according to \eqref{eq:deviation_store_g}, $\left(2\ceil{\frac{s+1}{2}}+1\right)$ increments or decrements from $\hat{g}_{\vecindex_{i}}$ to $\hat{g}_{\vecindex_{i+1}}$, for $i=0,1,\dots,|\latticeN|-1$. 
% As the path $\vecindex_{1} \rightarrow \dots \rightarrow \vecindex_{|\latticeN|}$ is of \textcolor{red}{length} $|\latticeN|-1$, 
This yields a total of
$$\left(2\left\lceil\frac{s+1}{2}\right\rceil+1\right)^{|\latticeN|-1}$$
values.
For \textit{Case 2}, noting that $\operatorname{path}_{\leftarrow}$ and $\operatorname{path}_{\rightarrow}$ are of length $i-1$ and $|\latticeN|-i$, respectively, and applying the same argument as above, there is again a total of
$$\left(2\left\lceil\frac{s+1}{2}\right\rceil+1\right)^{i-1} \left(2\left\lceil\frac{s+1}{2}\right\rceil+1\right)^{|\latticeN|-i}=\left(2\left\lceil\frac{s+1}{2}\right\rceil+1\right)^{|\latticeN|-1}$$
increments or decrements. The set $\coveringset_0$ 
% containing all the values
% $\{\hat{g}_{\vecindex}\}_{\vecindex\in\latticeN}$ required for \eqref{eq:store_value} to hold for all $g\in \LFMsetZero{w}$ and all $\vecindex\in\latticeN$ 
is hence of cardinality
\begin{equation}\label{eq:count_round_combi}
   |\coveringset_0| = \left(2\left\lceil\frac{s+1}{2}\right\rceil+1\right)^{|\latticeN |-1} = \left(2\left\lceil\frac{s+1}{2}\right\rceil+1\right)^{\left[\prod_{\ell=0}^{T}\left(2\ceil{\frac{\bound w[\ell]}{\epsilon}  \frac{s+1}{s}} + 1\right)\right]-1}.
\end{equation}

Finally, setting
\begin{equation*}
    \coveringset = 
\left\{\tilde{g}: S_- \rightarrow \R \biggm| \tilde{g}(x) = \sum_{\vecindex\in\latticeN } \hat{g}_{\vecindex} \phi(k(x)-\vecindex), ~\{\hat{g}_{\vecindex}\}_{\vecindex\in\latticeN } \in \coveringset_0 \right\},
\end{equation*}
concludes the proof.
\end{proof}

Based on the $\epsilon$-net constructed in Lemma \ref{lm:covering_spike}, we can now upper-bound the metric entropy of $(\LFMsetZero{w},\metricZero)$ as follows.

\begin{corollary}\label{cor:upper_bound_g0}
    \label{col:covering_number}
    For every \textcolor{Iter2Color}{$s\geq 1$}, the covering number of $(\LFMsetZero{w},\metricZero)$ satisfies
    \[
        \log \covering(\epsilon; \LFMsetZero{w}, \metricZero)\leq \log \left(2\left\lceil\frac{s+1}{2}\right\rceil+1\right)\prod_{\ell=0}^{T}\left(2\ceil{\frac{2\bound w[\ell]}{\epsilon}  \frac{s+1}{s}} + 1\right),     
    \]
    where $T\defeq \max\left\{ \ell \in \Nzero \mid w[\ell] > \frac{\epsilon}{2\bound} \frac{s}{s+1} \right\}$.
\end{corollary}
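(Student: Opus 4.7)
The plan is to apply Lemma \ref{lm:covering_spike} with $\epsilon/2$ substituted for $\epsilon$, and then to convert the resulting exterior net into a genuine $\epsilon$-covering of $\LFMsetZero{w}$ via a standard triangle-inequality argument. The key bookkeeping observation is that the substitution $\epsilon\mapsto\epsilon/2$ turns the threshold defining $T$ in Lemma \ref{lm:covering_spike} into $\frac{\epsilon}{2\bound}\frac{s}{s+1}$ and replaces $\frac{\bound w[\ell]}{\epsilon}$ inside the ceilings by $\frac{2\bound w[\ell]}{\epsilon}$, which is exactly the form appearing in the corollary.

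Carrying this out, Lemma \ref{lm:covering_spike} at $\epsilon/2$ produces a set $\coveringset$ of functionals $S_-\to\R$ that is an $(\epsilon/2)$-net for $(\LFMsetZero{w},\metricZero)$ whose cardinality coincides, modulo the $-1$ in the exponent, with the bound stated in the corollary. Since the elements of $\coveringset$, constructed as sums of scaled shifted spike functions, need not themselves lie in $\LFMsetZero{w}$, I convert the net into a genuine covering as follows. For each $\tilde{g}\in\coveringset$ whose $(\epsilon/2)$-ball in $\metricZero$ has nonempty intersection with $\LFMsetZero{w}$, pick one representative $g_{\tilde{g}}$ from that intersection and discard the remaining net elements. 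Given any $g\in\LFMsetZero{w}$, the net property furnishes a retained $\tilde{g}$ with $\metricZero(g,\tilde{g})\leq \epsilon/2$, and the triangle inequality yields $\metricZero(g,g_{\tilde{g}})\leq \metricZero(g,\tilde{g}) + \metricZero(\tilde{g},g_{\tilde{g}})\leq \epsilon$; the retained representatives therefore form an $\epsilon$-covering of $\LFMsetZero{w}$ of cardinality at most $|\coveringset|$.

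Taking $\log$ in $\covering(\epsilon;\LFMsetZero{w},\metricZero)\leq |\coveringset|$, applying $\log(a^b)=b\log a$, and discarding the harmless $-1$ in the exponent to obtain the slightly cleaner statement of the corollary then completes the argument. There is no genuinely hard step here: the only content is recognising that the factor of two appearing in the corollary relative to what Lemma \ref{lm:covering_spike} directly delivers is precisely the cost of passing from an external net to an internal covering, paid once via the $\epsilon\mapsto\epsilon/2$ substitution.
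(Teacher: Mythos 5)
Your proposal is correct and follows essentially the same route as the paper: apply Lemma \ref{lm:covering_spike} at accuracy $\epsilon/2$ (which produces exactly the stated $T$ and the ceilings $\lceil \tfrac{2\bound w[\ell]}{\epsilon}\tfrac{s+1}{s}\rceil$) and then pass from the exterior $\epsilon/2$-net to an internal $\epsilon$-covering. The only cosmetic difference is that you carry out the net-to-covering conversion by hand via representatives and the triangle inequality, whereas the paper obtains the same inequality $\covering(\epsilon;\LFMsetZero{w},\metricZero)\leq \covering^{\text{ext}}(\epsilon/2;\LFMsetZero{w},\metricZero)$ by invoking Lemma \ref{lm:number_relation}.
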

\begin{proof}
    For every $\epsilon> 0$, Lemma \ref{lm:covering_spike} delivers an $\frac{\epsilon}{2}$-net for $(\LFMsetZero{w},\metricZero)$ with $$\left(2\left\lceil\frac{s+1}{2}\right\rceil+1\right)^{\left[\prod_{\ell=0}^{T}\left(2\ceil{\frac{2\bound w[\ell]}{\epsilon}  \frac{s+1}{s}} + 1\right)\right]-1}$$ elements. By Definition \ref{def:both_covering_numbers}, it hence follows that
    \begin{align*}
        \covering^{\text{ext}}(\epsilon/2; \LFMsetZero{w}, \metricZero) 
        & \leq \left(2\left\lceil\frac{s+1}{2}\right\rceil+1\right)^{\left[\prod_{\ell=0}^{T}\left(2\ceil{\frac{2\bound w[\ell]}{\epsilon}  \frac{s+1}{s}} + 1\right)\right]}.
    \end{align*}
    Application of Lemma \ref{lm:number_relation} then yields an upper bound on the covering number according to
    $$\covering(\epsilon; \LFMsetZero{w}, \metricZero) \leq \covering^{\text{ext}}(\epsilon/2; \LFMsetZero{w}, \metricZero) \leq \left(2\left\lceil\frac{s+1}{2}\right\rceil+1\right)^{\left[\prod_{\ell=0}^{T}\left(2\ceil{\frac{2\bound w[\ell]}{\epsilon}  \frac{s+1}{s}} + 1\right)\right]},$$
    which finishes the proof.
\end{proof}

We conclude the developments in this section by deriving a lower bound and an upper bound on the exterior covering number of $(\LFMset{w}, \opMetric)$.

\begin{theorem}\label{th:main_entropy_result}
    The exterior covering number of $(\LFMset{w}, \opMetric)$ satisfies
    \begin{equation}
        \left (\prod_{\ell=0}^{T'} \ceil{\frac{\bound w[\ell]}{{\epsilon}}} \right ) - 1 
        \leq \log \covering^{\text{ext}}(\epsilon; \LFMset{w}, \opMetric) 
        \leq \log \left(3\right)\prod_{\ell=0}^{T''}\left(2\ceil{\frac{4\bound w[\ell]}{\epsilon} } + 1\right),
    \end{equation}
    where $T' \defeq \max \left\{ \ell \in \Nzero \mid w[\ell]> \frac{\epsilon}{\bound}\right\}$ and $T''  \defeq  \max \left\{ \ell \in \Nzero \mid w[\ell]> \frac{\epsilon}{4\bound } \right\}$.
\end{theorem}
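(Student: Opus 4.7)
The statement splits into a lower bound and an upper bound on $\covering^{\text{ext}}(\epsilon; \LFMset{w}, \opMetric)$, both of which follow by combining previously established results: Lemma \ref{lm:packing_number} (lower bound on $\packing(\epsilon; \LFMsetZero{w}, \metricZero)$), Corollary \ref{col:covering_number} (upper bound on $\covering(\epsilon; \LFMsetZero{w}, \metricZero)$), Lemma \ref{lem:g0_g_isomorphism} (isometric isomorphism between $(\LFMsetZero{w}, \metricZero)$ and $(\LFMset{w}, \opMetric)$), and Lemma \ref{lm:number_relation} (which sandwiches the exterior covering number between packing and covering numbers).

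For the lower bound, my plan is to chain these ingredients as follows. Lemma \ref{lm:number_relation} gives $\packing(2\epsilon; \LFMset{w}, \opMetric) \leq \covering^{\text{ext}}(\epsilon; \LFMset{w}, \opMetric)$, while Lemma \ref{lem:g0_g_isomorphism} (combined with Lemma \ref{lm:ent_inv}) yields $\packing(2\epsilon; \LFMset{w}, \opMetric) = \packing(2\epsilon; \LFMsetZero{w}, \metricZero)$. Applying Lemma \ref{lm:packing_number} at argument $2\epsilon$ then produces
\[
\log \packing(2\epsilon; \LFMsetZero{w}, \metricZero) \geq \left(\prod_{\ell=0}^{T'} \ceil{\tfrac{2\bound w[\ell]}{2\epsilon}}\right) - 1 = \left(\prod_{\ell=0}^{T'} \ceil{\tfrac{\bound w[\ell]}{\epsilon}}\right) - 1,
\]
with $T' = \max\{\ell \in \N \mid w[\ell] > \epsilon/\bound\}$, which is precisely the desired expression.

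For the upper bound, I would proceed dually. Lemma \ref{lm:number_relation} gives $\covering^{\text{ext}}(\epsilon; \LFMset{w}, \opMetric) \leq \covering(\epsilon; \LFMset{w}, \opMetric)$, and Lemma \ref{lem:g0_g_isomorphism} together with Lemma \ref{lm:ent_inv} yields $\covering(\epsilon; \LFMset{w}, \opMetric) = \covering(\epsilon; \LFMsetZero{w}, \metricZero)$. The key observation is to pick $s=1$ in Corollary \ref{cor:upper_bound_g0}, which makes $2\ceil{(s+1)/2} + 1 = 3$, turns the factor $(s+1)/s$ into $2$, and renders the cutoff $\max\{\ell : w[\ell] > \epsilon/(2\bound) \cdot s/(s+1)\}$ equal to $T'' = \max\{\ell : w[\ell] > \epsilon/(4\bound)\}$. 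Substituting these yields exactly
\[
\log \covering(\epsilon; \LFMsetZero{w}, \metricZero) \leq \log(3) \prod_{\ell=0}^{T''} \left(2\ceil{\tfrac{4\bound w[\ell]}{\epsilon}} + 1\right).
\]

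There is no real obstacle here, since the heavy lifting has been done in Lemma \ref{lm:packing_number} and Corollary \ref{col:covering_number}; the theorem is essentially a bookkeeping exercise that (i) translates bounds on packing/covering numbers of the functional space $\LFMsetZero{w}$ to the system space $\LFMset{w}$ via the isometric isomorphism, (ii) passes through Lemma \ref{lm:number_relation} to land on the exterior covering number, and (iii) chooses the parameter $s=1$ to match the numerical constants in the stated bound. The only point requiring mild care is tracking the rescaling $\epsilon \mapsto 2\epsilon$ in the packing step and verifying that the resulting truncation index matches the definition of $T'$.
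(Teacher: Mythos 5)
Your proposal is correct and follows essentially the same route as the paper: the paper also chains Lemma \ref{lm:packing_number} (evaluated at $2\epsilon$) with the isometric isomorphism of Lemma \ref{lem:g0_g_isomorphism} and the inequality $\packing(2\epsilon)\leq \covering^{\text{ext}}(\epsilon)\leq \covering(\epsilon)$ from Lemma \ref{lm:number_relation}, and then applies Corollary \ref{cor:upper_bound_g0} with $s=1$ to obtain the upper bound. Your bookkeeping of the rescaling $\epsilon\mapsto 2\epsilon$ and of the truncation indices $T'$ and $T''$ matches the paper's argument exactly.
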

\begin{proof}
The result follows by noting that
\begin{equation}
\begin{aligned}
    \left (\prod_{\ell=0}^{T'} \ceil{\frac{\bound w[\ell]}{{\epsilon}}} \right ) - 1 \overset{\text{Lemma \ref{lm:packing_number}}}&{\leq}\log\packing(2\epsilon; \LFMsetZero{w}, \metricZero) \\
    \overset{\text{Lemma \ref{lem:g0_g_isomorphism}}}&{=} \log\packing(2\epsilon; \LFMset{w}, \opMetric) \\
    \overset{\text{Lemma \ref{lm:number_relation}}}&{\leq} \log\covering^{\text{ext}}(\epsilon; \LFMset{w}, \opMetric) \leq \log\covering(\epsilon; \LFMset{w}, \opMetric) \\
    \overset{\text{Lemma \ref{lem:g0_g_isomorphism}}}&{=} \log \covering(\epsilon; \LFMsetZero{w}, \metricZero) \\
    \overset{\overset{\text{Corollary \ref{cor:upper_bound_g0}}}{\text{for } s=1}}&{\leq} \log\left(3\right)\prod_{\ell=0}^{T''}\left(2\ceil{\frac{4\bound w[\ell]}{\epsilon} } + 1\right). 
\end{aligned}  
\end{equation}
\end{proof}
\section{Metric entropy of \gls{elfm} and \gls{plfm} systems}\label{sec:rate_elfm_plfm}

We now discuss two specific classes of LFM systems, namely \glsfirst{elfm} and \glsfirst{plfm} systems. Specifically, we characterize the \textcolor{Iter1Color}{description complexity} of these two classes by computing their type, order, and generalized dimension. \textcolor{Iter1Color}{The corresponding results will then serve as a reference for the RNN approximations in Section \ref{sec:RNN2G}. Specifically, we will establish, in Section \ref{sec:optimality_RNN}, that RNNs can \textcolor{black}{approximate} \gls{elfm} and \gls{plfm} systems in metric-entropy-optimal manner.}

\subsection{Metric entropy of \gls{elfm} systems}\label{sec:ELFM_rate}

The concept of ELFM systems is inspired, inter alia, by applications in finance, such as those discussed in \cite{nagel2022asset}, where asset pricing decisions are influenced by past observations. Instead of relying solely on finite-length observations, the model integrates infinite past observations by endowing them with exponentially decaying memory. Similar settings are also considered in random walk models \cite{alves2014superdiffusion,moura2016transient}. 
These examples, when appropriately adapted to the setup in the present paper, fit into the setting of \gls{lfm} systems according to Definition \ref{def:lfm_sys_set}, with respect to exponentially decaying weight sequences. 
We formally define exponentially decaying memory and the corresponding 
ELFM systems as follows.

\begin{definition}\label{def:weight_exp}
For $a\in (0,1]$ and $b>0$, let
$$\weightSeqExp{a}{b}[t] \defeq ae^{-bt}, \qquad \text{for all } t\in \Nzero.$$
An LFM system with respect to $\{\weightSeqExp{a}{b}[t]\}_{t\in \Nzero}$ is said to be \glsfirst{elfm}.
We write $\LFMset{\weightSeqExp{a}{b}}$ for the class of systems that are ELFM with respect to the weight sequence $\weightSeqExp{a}{b}[t]$.
\end{definition}

The remainder of this section is devoted to computing the order, type, and generalized dimension of $(\LFMset{\weightSeqExp{a}{b}},\opMetric)$.
To this end, we first establish an auxiliary result.

\begin{lemma}\label{lm:estimation} Let $a\in (0,1]$, $b,c,d>0$ and consider the weight sequence $\weightSeqExp{a}{b}$ as per Definition \ref{def:weight_exp}. Let
$$T \defeq \max \left\{ t \in \Nzero \;\middle|\; \weightSeqExp{a}{b}[t]> \frac{\epsilon}{d}\right\}.$$
Then,
\begin{equation}\label{eq:estimation}
    \log\left( \prod_{\ell=0}^T \frac{c \weightSeqExp{a}{b}[\ell]}{\epsilon}\right) = \frac{1}{2b\log (e)} \log^2(\epsilon^{-1}) + \smallo\left(\log^2(\epsilon^{-1})\right).
\end{equation}
\end{lemma}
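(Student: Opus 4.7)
The plan is to reduce the estimate to an elementary asymptotic computation by making $T$ explicit and then evaluating the resulting arithmetic sum.

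First, I would unravel the definition of $T$. The condition $a e^{-bT} > \epsilon/d$ is equivalent to $T < \log(ad\epsilon^{-1})/(b\log e)$, so
$$T \;=\; \left\lfloor \frac{\log(ad\epsilon^{-1})}{b\log e} \right\rfloor \;=\; \frac{1}{b\log e}\log(\epsilon^{-1}) + O(1), \qquad \text{as } \epsilon \to 0.$$
In particular $T = \Theta(\log(\epsilon^{-1}))$, a fact I will use when bounding remainder terms.

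Next, I would expand the logarithm of the product into a sum. Writing $\log(ae^{-b\ell}) = \log a - b\ell\,\log e$ (the base change from $e$ to $2$ is where the factor $\log e = 1/\ln 2$ enters), we obtain
\begin{equation*}
\log\prod_{\ell=0}^{T}\frac{c\,\weightSeqExp{a}{b}[\ell]}{\epsilon} \;=\; \sum_{\ell=0}^{T}\log\!\Bigl(\frac{ca}{\epsilon}\Bigr) \;-\; b\log(e)\sum_{\ell=0}^{T}\ell \;=\; (T+1)\log(ca\epsilon^{-1}) \;-\; \frac{b\log e}{2}\,T(T+1).
\end{equation*}

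Finally, I would substitute the asymptotic identity $T = \log(\epsilon^{-1})/(b\log e) + O(1)$ into this expression. The first term expands to $\log^2(\epsilon^{-1})/(b\log e) + O(\log(\epsilon^{-1}))$, since $\log(ca\epsilon^{-1}) = \log(\epsilon^{-1}) + O(1)$. The second term expands to $-\log^2(\epsilon^{-1})/(2b\log e) + O(\log(\epsilon^{-1}))$. Adding these gives
\begin{equation*}
\log\prod_{\ell=0}^{T}\frac{c\,\weightSeqExp{a}{b}[\ell]}{\epsilon} \;=\; \frac{1}{2b\log e}\,\log^2(\epsilon^{-1}) + O(\log(\epsilon^{-1})),
\end{equation*}
and the remainder is $\smallo(\log^2(\epsilon^{-1}))$, as desired.

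There is no substantive obstacle here; the computation is elementary. The only thing to watch out for is consistency between natural and base-$2$ logarithms, which is the source of the $1/(b\log e)$ factor, and the fact that the floor in the definition of $T$ only contributes an $O(1)$ error which is absorbed into the $\smallo(\log^2(\epsilon^{-1}))$ remainder.
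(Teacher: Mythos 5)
Your proposal is correct and takes essentially the same route as the paper's proof: make $T$ explicit from the threshold condition, expand the logarithm of the product into $(T+1)\log(ca\epsilon^{-1}) - \tfrac{b\log e}{2}T(T+1)$, and substitute the asymptotic value of $T$, with the rounding error absorbed into the $\smallo(\log^2(\epsilon^{-1}))$ term (the paper does this by explicit upper and lower bounds on the ceiling, you by $O(1)$ bookkeeping). The only minor quibble is that the largest integer $t$ with $t< X$ is $\lceil X\rceil-1$ rather than $\lfloor X\rfloor$ when $X$ is an integer, but since you only use $T = \tfrac{1}{b\log e}\log(\epsilon^{-1})+O(1)$, this has no effect on the argument.
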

\begin{proof}
    See \ref{sec:ELFM_est}.
\end{proof}

We are now ready to state the main result of this section quantifying the massiveness of the class of \gls{elfm} systems.

\begin{lemma}\label{lm:dim_ELFM}
Let $a\in (0,1]$ and $b>0$. The class of \gls{elfm} systems $(\LFMset{\weightSeqExp{a}{b}},\opMetric)$ is of order $1$ and type $2$, with generalized dimension $$\genDim{} = \frac{1}{2b \log (e)}.$$
\end{lemma}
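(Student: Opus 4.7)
The strategy is to plug the bounds from Theorem \ref{th:main_entropy_result} into Definition \ref{def:order_type_dim} and then use Lemma \ref{lm:estimation} to evaluate the resulting asymptotics. Because we have order $\kappa=1$ and type $\tau=2$ as the claimed values, we need to show that
\[
\limsup_{\epsilon\to 0}\frac{\log^{(2)}\covering^{\text{ext}}(\epsilon;\LFMset{\weightSeqExp{a}{b}},\opMetric)}{\log^{2}(\epsilon^{-1})}=\frac{1}{2b\log(e)},
\]
and that this value is finite and non-zero (so $(\kappa,\tau)=(1,2)$ are indeed the coarsest/next-finest descriptors admitting a non-trivial generalized dimension).

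For the lower bound, I would start from Theorem \ref{th:main_entropy_result}, which gives
\[
\log\covering^{\text{ext}}(\epsilon;\LFMset{\weightSeqExp{a}{b}},\opMetric)\ \geq\ \prod_{\ell=0}^{T'}\ceil{\tfrac{\bound\weightSeqExp{a}{b}[\ell]}{\epsilon}}-1,
\]
with $T'=\max\{\ell\in\N:\weightSeqExp{a}{b}[\ell]>\epsilon/\bound\}$. Using $\ceil{x}\ge x$ together with Lemma \ref{lm:estimation} applied with $c=d=\bound$, one has
\[
\log\!\Biggl(\prod_{\ell=0}^{T'}\tfrac{\bound\weightSeqExp{a}{b}[\ell]}{\epsilon}\Biggr)=\frac{1}{2b\log(e)}\log^{2}(\epsilon^{-1})+\smallo\bigl(\log^{2}(\epsilon^{-1})\bigr),
\]
so the product itself diverges. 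This absorbs the pesky $-1$ since $\log(\mathrm{prod}-1)=\log(\mathrm{prod})+\smallo(1)$. Taking one more logarithm then yields $\log^{(2)}\covering^{\text{ext}}\ge \frac{1}{2b\log(e)}\log^{2}(\epsilon^{-1})+\smallo(\log^{2}(\epsilon^{-1}))$.

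For the upper bound, Theorem \ref{th:main_entropy_result} gives
\[
\log\covering^{\text{ext}}(\epsilon;\LFMset{\weightSeqExp{a}{b}},\opMetric)\ \leq\ \log(3)\prod_{\ell=0}^{T''}\Bigl(2\ceil{\tfrac{4\bound\weightSeqExp{a}{b}[\ell]}{\epsilon}}+1\Bigr),
\]
with $T''=\max\{\ell\in\N:\weightSeqExp{a}{b}[\ell]>\epsilon/(4\bound)\}$. On this range, $y_{\ell}\defeq 4\bound\weightSeqExp{a}{b}[\ell]/\epsilon>1$, hence $2\ceil{y_{\ell}}+1\le 2(y_{\ell}+1)+1\le 5y_{\ell}$, so the bracketed factor is at most $20\bound\weightSeqExp{a}{b}[\ell]/\epsilon$. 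Taking logarithms and invoking Lemma \ref{lm:estimation} with $c=20\bound$, $d=4\bound$ gives
\[
\log^{(2)}\covering^{\text{ext}}(\epsilon;\LFMset{\weightSeqExp{a}{b}},\opMetric)\ \leq\ \frac{1}{2b\log(e)}\log^{2}(\epsilon^{-1})+\smallo\bigl(\log^{2}(\epsilon^{-1})\bigr).
\]
Combining the matching bounds proves the claimed value of $\genDim$. Finiteness and non-vanishing of the resulting $\limsup$ (since $b>0$) confirm that $(\kappa,\tau)=(1,2)$ are the correct descriptors per Definition \ref{def:order_type_dim}; a quick sanity check against smaller/larger exponents (order $0$ blows up, type $1$ diverges, type $3$ vanishes) rules out the alternatives.

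The only part that requires care, rather than being purely mechanical, is the interplay between the ceilings and the $-1$ appearing in the Theorem \ref{th:main_entropy_result} bounds: in both directions one has to verify that the logarithm of a product that has already been shown (via Lemma \ref{lm:estimation}) to grow like $\exp(\Theta(\log^{2}(\epsilon^{-1})))$ is insensitive to multiplicative constants per factor and to an additive $-1$. Once these asymptotic nuisances are dispatched, Lemma \ref{lm:estimation} delivers the exact constant $\tfrac{1}{2b\log(e)}$ simultaneously from above and below.
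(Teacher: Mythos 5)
Your proposal is correct and follows essentially the same route as the paper's proof: both plug the bounds of Theorem \ref{th:main_entropy_result} into Definition \ref{def:order_type_dim} and evaluate the asymptotics via Lemma \ref{lm:estimation}, with only cosmetic differences in how the ceilings and the additive $-1$ are absorbed (the paper uses explicit constant factors valid for $\epsilon<aD/2$, while you use the divergence of the product, which is equally valid).
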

\begin{proof}
Consider $\epsilon \in (0,\epsilon_0)$ with $\epsilon_0=\frac{D\weightSeqExp{a}{b}[0]}{2} = \frac{aD}{2}$. By Theorem \ref{th:main_entropy_result}, the exterior covering number of $(\LFMset{\weightSeqExp{a}{b}},\opMetric)$ satisfies
    \begin{equation}\label{eq:bound_covering}
        \left (\prod_{\ell=0}^{T'} \ceil{\frac{\bound \weightSeqExp{a}{b}[\ell]}{{\epsilon}}} \right ) - 1 
        \leq \log \covering^{\text{ext}}(\epsilon; \LFMset{\weightSeqExp{a}{b}}, \opMetric) 
        \leq \log \left(3\right)\prod_{\ell=0}^{T''}\left(2\ceil{\frac{4\bound \weightSeqExp{a}{b}[\ell]}{\epsilon}  } + 1\right),
    \end{equation}
where 
\begin{equation*}
    T^{\prime} \defeq \max \left\{ \ell \in \Nzero \Bigm| \weightSeqExp{a}{b}[\ell]> \frac{\epsilon}{\bound}\right\} \text{ and } T''  \defeq  \max \left\{ \ell \in \Nzero \Bigm| \weightSeqExp{a}{b}[\ell]> \frac{\epsilon}{4\bound } \right\} .
\end{equation*}
We can further lower-bound the left-most term in \eqref{eq:bound_covering} according to
\begin{align}
    \left (\prod_{\ell=0}^{T^{\prime}} \left\lceil\frac{\bound \weightSeqExp{a}{b}[\ell]}{{\epsilon}}\right\rceil \right ) - 1 &\geq \left(\prod_{\ell=0}^{T^{\prime}} \frac{\bound \weightSeqExp{a}{b}[\ell]}{{\epsilon}}\right) -1 \nonumber \\
    & \geq \frac{1}{2} \prod_{\ell=0}^{T^{\prime}} \frac{\bound \weightSeqExp{a}{b}[\ell]}{{\epsilon}} \label{eq:lower_bound_covering},
\end{align}
where \eqref{eq:lower_bound_covering} follows from
\begin{equation*}
    \frac{1}{2} \prod_{\ell=0}^{T^{\prime}} \frac{\bound \weightSeqExp{a}{b}[\ell]}{{\epsilon}}\geq 1,
\end{equation*}
which, in turn, is a consequence of 
\begin{align*}
   & \frac{\bound \weightSeqExp{a}{b}[0]}{{\epsilon}} \geq \frac{\bound \weightSeqExp{a}{b}[0]}{{\epsilon_0}} = 2,\\
   & \frac{\bound \weightSeqExp{a}{b}[\ell]}{{\epsilon}} \geq \frac{\bound \weightSeqExp{a}{b}[T']}{{\epsilon}} > 1, \quad \text{for}~~\ell \in \ssidedEnum{T'}\setminus\{0\}.
\end{align*}
Similarly, we can further upper-bound the right-most term in (\ref{eq:bound_covering}) according to
\begin{align}
    \log(3)\prod_{\ell=0}^{T''} \left(2\ceil{\frac{4\bound \weightSeqExp{a}{b}[\ell]}{\epsilon}  } + 1\right)  &\leq \log(3)\prod_{\ell=0}^{T''} \left(\frac{8\bound \weightSeqExp{a}{b}[\ell]}{{\epsilon}}+3\right)  \nonumber \\
    & \leq \log(3)\prod_{\ell=0}^{T''} \frac{20\bound \weightSeqExp{a}{b}[\ell]}{{\epsilon}} \label{eq:upper_bound_covering},
\end{align}
where \eqref{eq:upper_bound_covering} follows from 
$$\frac{4\bound \weightSeqExp{a}{b}[\ell]}{{\epsilon}} \geq \frac{4\bound \weightSeqExp{a}{b}[T'']}{{\epsilon}} > 1, \quad \text{for}~~\ell \in \ssidedEnum{T''} .$$
Combining 
(\ref{eq:bound_covering})--(\ref{eq:upper_bound_covering}), 
taking logarithms one more time, and dividing by $\log^2 (\epsilon^{-1})$, yields
\begin{equation} \label{eq:covering_final_bound}
\begin{aligned}
    \frac{\log\left( \prod_{\ell=0}^{T^{\prime}} \frac{\bound \weightSeqExp{a}{b}[\ell]}{{\epsilon}} \right)-1}{\log^2 (\epsilon^{-1})} &\leq \frac{\log^{(2)} \covering^{\text{ext}}(\epsilon; \LFMset{\weightSeqExp{a}{b}},\opMetric)}{\log^2 (\epsilon^{-1})} \\
    &\leq \frac{\log\left (\prod_{\ell=0}^{T''} \frac{20\bound \weightSeqExp{a}{b}[\ell]}{{\epsilon}} \right )+\log^{(2)}(3)}{\log^2 (\epsilon^{-1})}.    
\end{aligned}
\end{equation}
Taking the limit $\epsilon \rightarrow 0$ and applying Lemma \ref{lm:estimation} to the lower and the upper bound in (\ref{eq:covering_final_bound}), we obtain 
$$\lim_{\epsilon \rightarrow 0} \frac{\log^{(2)} \covering^{\text{ext}}(\epsilon; \LFMset{\weightSeqExp{a}{b}}
,\opMetric)}{\log^2 (\epsilon^{-1})} = \frac{1}{2b \log (e)},$$
which implies that $(\LFMset{\weightSeqExp{a}{b}},\opMetric)$ is of order 1 and type 2, with generalized dimension $$\genDim{} = \frac{1}{2b \log (e)}.$$
This concludes the proof.
\end{proof}
The generalized dimension being inversely proportional to $b$ reflects that faster memory decay rates, i.e., larger $b$, result in system classes that are less massive. Additionally, we note that the description complexity of \gls{elfm} systems is primarily determined by the order being equal to 1 and the type equal to 2. Compared to the function classes in Table \ref{tbl:nn}, which are all of order 1 and type 1, this shows that the class of \gls{elfm} systems is significantly more massive than unit balls in function spaces.

\subsection{Metric entropy of \gls{plfm} systems}\label{sec:PLFM_rate}

Next, we consider systems with polynomially decaying memory, a concept used, e.g., in the context of PDEs \cite{mainini2009exponential, fabrizio2002asymptotic}. Specifically, the references \cite{mainini2009exponential, fabrizio2002asymptotic} are concerned with Volterra integro-differential equations \cite{lakshmikantham1995theory} of polynomially decaying memory kernels. These examples, when suitably adjusted to the framework in the present paper,
align with dynamical systems that are \gls{lfm} %systems 
% (Definition \ref{def:lfm_sys_set}) 
with respect to polynomially decaying weight sequences. 
We formalize the concept of polynomially decaying memory and \glsfirst{plfm} systems as follows.

\begin{definition}\label{def:weight_poly}
    For $q\in (0,1]$ and $p>0$, let 
$$\weightSeqPoly{p}{q}[t] \defeq \frac{q}{(1+t)^p}, \qquad \text{for all }  t\in \Nzero.$$
An LFM system with respect to $\{\weightSeqPoly{p}{q}[t]\}_{t\in \Nzero}$ is said to be \glsfirst{plfm}.
We write $\LFMset{\weightSeqPoly{p}{q}}$ for the class of systems that are PLFM with respect to the weight sequence $\weightSeqPoly{p}{q}[t]$.
\end{definition}

As in the previous section, we first need an auxiliary result.

\begin{lemma}\label{lm:estimation_poly}
    Let $q\in (0,1]$, $p,c,d>0$ and consider $\weightSeqPoly{p}{q}$ as per Definition \ref{def:weight_poly}. Let 
$$T \defeq \max \left\{ t \in \Nzero \;\middle|\; \weightSeqPoly{p}{q}[t]> \frac{\epsilon}{d}\right\}.$$
Then,
\begin{equation}\label{eq:estimation_poly}
    \log\left( \prod_{\ell=0}^T \frac{c \weightSeqPoly{p}{q}[\ell]}{\epsilon}\right) = \asymporder(\epsilon^{-1/p}).
\end{equation}
\end{lemma}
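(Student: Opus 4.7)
I would follow the same template used in Lemma \ref{lm:estimation} for the exponential case: pin down $T$ sharply, expand the logarithm of the product into a sum, apply Stirling's formula to the resulting factorial, and substitute the tight estimate for $T$ back in.

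First, from $\weightSeqPoly{p}{q}[T] > \epsilon/d \geq \weightSeqPoly{p}{q}[T+1]$, one obtains $(1+T)^p < qd/\epsilon \leq (2+T)^p$, so
\[
T+1 = (qd/\epsilon)^{1/p} + O(1), \qquad T+1 = \asymporder(\epsilon^{-1/p}).
\]
Next, writing
\[
\log\prod_{\ell=0}^T \frac{c\,\weightSeqPoly{p}{q}[\ell]}{\epsilon} = (T+1)\log\frac{cq}{\epsilon} - p\log\bigl((T+1)!\bigr)
\]
and applying Stirling's formula $\log(n!) = n\log n - n\log e + O(\log n)$ yields
\[
\log\prod_{\ell=0}^T \frac{c\,\weightSeqPoly{p}{q}[\ell]}{\epsilon} = (T+1)\Bigl[\log\frac{cq}{\epsilon} - p\log(T+1) + p\log e\Bigr] + O(\log(1/\epsilon)).
\]
A first-order expansion based on Step 1 gives $p\log(T+1) = \log(qd/\epsilon) + O(\epsilon^{1/p})$, whence $\log(cq/\epsilon) - p\log(T+1) = \log(c/d) + O(\epsilon^{1/p})$. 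Multiplying by $T+1 = \asymporder(\epsilon^{-1/p})$ absorbs the $O(\epsilon^{1/p})$ remainder into $O(1)$, and since $\log(1/\epsilon) = \smallo(\epsilon^{-1/p})$, we conclude
\[
\log\prod_{\ell=0}^T \frac{c\,\weightSeqPoly{p}{q}[\ell]}{\epsilon} = \bigl[\log(c/d) + p\log e\bigr](T+1) + \smallo(\epsilon^{-1/p}) = \asymporder(\epsilon^{-1/p}),
\]
provided $\log(c/d) + p\log e > 0$; this positivity holds in every application of the lemma in this paper (where $c \geq d$, so the constant is at least $p\log e > 0$).

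The main obstacle is the delicate cancellation between the $-\log\epsilon$ appearing in $\log(cq/\epsilon)$ and the $\log(1/\epsilon)$ extracted from $p\log(T+1)$ via Stirling. A naive per-summand bound by $\log(cq/\epsilon)$ would yield only $O(\epsilon^{-1/p}\log(1/\epsilon))$, off by a logarithmic factor; extracting the Stirling correction precisely and using the tight $(qd/\epsilon)^{1/p} + O(1)$ estimate for $T+1$ is essential to recover the sharp $\asymporder(\epsilon^{-1/p})$ scaling claimed by the lemma.
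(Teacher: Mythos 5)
Your proposal is correct and follows essentially the same route as the paper's proof: a sharp estimate $T+1=\asymporder(\epsilon^{-1/p})$, rewriting the log-product as $(T+1)\log(cq/\epsilon)-p\log((T+1)!)$, Stirling's formula, and the exact cancellation of the $\log(\epsilon^{-1})$ terms, yielding a leading term of the form $(T+1)\bigl(\log(c/d)+p\log e\bigr)+\bigo(\log(\epsilon^{-1}))$. Your explicit caveat that the leading constant must be positive for the lower ($\Omega$) direction is a point the paper's own proof leaves implicit (its lower bound carries the constant $\log(ce^p/d)$), and, as you note, it is satisfied in every invocation of the lemma, where $c\geq d$.
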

\begin{proof}
    See \ref{sec:PLFM_est}.
\end{proof}

We obtain the order, type, and generalized dimension of \gls{plfm} systems as follows.

\begin{lemma}\label{lm:dim_PLFM}
    Let $q\in (0,1]$ and $p>0$. The class of \gls{plfm} systems $(\LFMset{\weightSeqPoly{p}{q}},\opMetric)$ is of order $2$ and type $1$, with generalized dimension 
    $$\genDim{} = \frac{1}{p}.$$
\end{lemma}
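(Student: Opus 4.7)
The plan is to mirror the strategy used in the proof of Lemma \ref{lm:dim_ELFM}, substituting the polynomial weight sequence $\weightSeqPoly{p}{q}$ for the exponential one and invoking Lemma \ref{lm:estimation_poly} in place of Lemma \ref{lm:estimation}. Concretely, I would first apply Theorem \ref{th:main_entropy_result} with $w = \weightSeqPoly{p}{q}$ to sandwich $\log\covering^{\text{ext}}(\epsilon;\LFMset{\weightSeqPoly{p}{q}},\opMetric)$ between two products of the form $\prod_{\ell=0}^{T}c\,\weightSeqPoly{p}{q}[\ell]/\epsilon$. Restricting to $\epsilon<\epsilon_{0}\defeq Dq/2$ ensures that every factor on either side is bounded below by a universal constant strictly larger than one, so absorbing ceilings and additive constants into multiplicative ones, exactly as in \eqref{eq:lower_bound_covering} and \eqref{eq:upper_bound_covering}, yields
\[
\tfrac{1}{2}\prod_{\ell=0}^{T'}\frac{D\weightSeqPoly{p}{q}[\ell]}{\epsilon} \;\leq\; \log\covering^{\text{ext}}(\epsilon;\LFMset{\weightSeqPoly{p}{q}},\opMetric) \;\leq\; \log(3)\prod_{\ell=0}^{T''}\frac{c_{2}\weightSeqPoly{p}{q}[\ell]}{\epsilon},
\]
where $T'$, $T''$ are the cutoffs from Theorem \ref{th:main_entropy_result} and $c_{2}$ is a universal constant (e.g., $c_{2}=20$ as in the ELFM proof).

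The key structural difference compared to the ELFM case now appears. Lemma \ref{lm:estimation} gave $\log(\prod cw[\ell]/\epsilon) \sim \tfrac{1}{2b\log e}\log^{2}(\epsilon^{-1})$, which matched the $\log^{2}(\epsilon^{-1})$ denominator (order $1$, type $2$) after a \emph{single} outer logarithm of the sandwich. Lemma \ref{lm:estimation_poly} instead gives $\log(\prod cw[\ell]/\epsilon)=\asymporder(\epsilon^{-1/p})$, i.e., polynomial rather than polylogarithmic growth in $\epsilon^{-1}$, so I would take the logarithm of the sandwich \emph{twice}. The first logarithm, applied to each side of the sandwich together with Lemma \ref{lm:estimation_poly} for $(c,d)=(D,1/D)$ on the left and $(c,d)=(c_{2},1/(4D))$ on the right, produces $\log^{(2)}\covering^{\text{ext}}(\epsilon;\LFMset{\weightSeqPoly{p}{q}},\opMetric)=\asymporder(\epsilon^{-1/p})$. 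A second logarithm converts the multiplicative constants hidden in $\asymporder(\cdot)$ into additive $O(1)$ terms, and dividing by $\log(\epsilon^{-1})$ delivers
\[
\lim_{\epsilon\to 0}\frac{\log^{(3)}\covering^{\text{ext}}(\epsilon;\LFMset{\weightSeqPoly{p}{q}},\opMetric)}{\log(\epsilon^{-1})} \;=\; \frac{1}{p},
\]
which, by Definition \ref{def:order_type_dim}, establishes that $(\LFMset{\weightSeqPoly{p}{q}},\opMetric)$ is of order $2$ and type $1$ with generalized dimension $1/p$. A sanity check on uniqueness: the pair $(\order,\type)=(1,1)$ would force the ratio to diverge since $\log^{(2)}\covering^{\text{ext}}=\asymporder(\epsilon^{-1/p})$ grows polynomially, while $(\order,\type)=(2,2)$ or $(3,1)$ would force it to vanish, so $(2,1)$ is the only admissible pair.

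The main obstacle is not in this unpacking argument but in the auxiliary Lemma \ref{lm:estimation_poly} itself. There, the sum defining $\log(\prod cw[\ell]/\epsilon)$ has $T\sim (Dq/\epsilon)^{1/p}$ summands each of logarithmic size $\log(\epsilon^{-1})$, so the naive bound is $\Theta(\epsilon^{-1/p}\log(\epsilon^{-1}))$; a near-cancellation arising from the $p\log(1+\ell)$ term against $\log(Dq/\epsilon)$ at the Stirling-type level is precisely what brings the asymptotics down to the claimed $\asymporder(\epsilon^{-1/p})$. Once that estimate is available, the present proof is a transparent two-log unpacking of Theorem \ref{th:main_entropy_result}, structurally identical to Lemma \ref{lm:dim_ELFM} but with an extra outer logarithm.
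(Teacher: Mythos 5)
Your proposal is correct and follows essentially the same route as the paper: sandwich $\log\covering^{\text{ext}}$ via Theorem \ref{th:main_entropy_result}, absorb ceilings and additive constants into multiplicative ones exactly as in the ELFM case, invoke Lemma \ref{lm:estimation_poly}, take the logarithm twice, divide by $\log(\epsilon^{-1})$, and let $\epsilon\to 0$ to obtain $1/p$, giving order $2$ and type $1$. The only slip is cosmetic: when invoking Lemma \ref{lm:estimation_poly} the parameter $d$ should be $D$ on the left and $4D$ on the right (not their reciprocals), which is immaterial since the lemma holds for all $c,d>0$ and yields $\asymporder(\epsilon^{-1/p})$ in either case.
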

\begin{proof}
    See \ref{sec:app_rate_plfm}.
\end{proof}
Compared to the class of \gls{elfm} systems, which exhibits order $1$, the class of \gls{plfm} systems is more massive
as it has order $2$.
This is intuitively meaningful as polynomial decay is significantly slower than exponential decay, rendering \gls{plfm} systems to depend more strongly on past inputs. Additionally, the generalized dimension exhibiting inverse proportionality to $p$, reflects that faster (polynomial) decay, i.e., larger $p$, leads to smaller description complexity.

\section{Approximating LFM systems through RNNs} \label{sec:RNN2G}
\textcolor{Iter1Color}{ 
We now proceed to realize the elements in the $\epsilon$-net for $(\LFMsetZero{w},\metricZero)$ constructed in Lemma \ref{lm:covering_spike} by \gls{relu} networks. Based on the connection between $(\LFMsetZero{w},\metricZero)$ and $(\LFMset{w},\opMetric)$ identified in Section \ref{sec:app_rate_lfms}, this will then allow us to construct \glspl{rnn} approximating general \gls{lfm} systems.
}
\subsection{Approximating $(\LFMsetZero{w},\metricZero)$ with \gls{relu} networks}\label{sec:nn2g0}
The building block in Lemma \ref{lm:covering_spike} for approximating functionals in $\LFMsetZero{w}$ is the \gls{pou} $\Xi =\{\phi(\cdot-\vecindex)\}_{\vecindex\in \lattice}$, which is why we first focus on constructing \gls{relu} networks realizing $\Xi$. 
As the elements of $\Xi$ are shifted versions of the spike function $\phi$ and shifts, by virtue of being affine transformations, are trivially realized by single-layer \gls{relu} networks, it suffices to find a \gls{relu} network realization of $\phi$. Note that this argument made use of the fact that compositions of \gls{relu} networks are again \gls{relu} networks (see Lemma \ref{lm:compo_nn}).

\begin{lemma}\label{lm:NN2phi}
For $d \in \Nplus$, consider the spike function $\phi: \R^{d} \rightarrow \R$,
\begin{equation}
    \phi(z) = \max \{1 + \min\{z_1,\dots,z_d,0\} - \max\{z_1,\dots,z_d,0\}, 0\}.
\end{equation}
There exists a \gls{relu} network $\Phi \in \relunn_{d,1}$ with $\depth(\Phi)=\ceil{\log(d+1)}+4$, $\nnz(\Phi)\leq 60d-28$, $\width(\Phi)\leq 6d$, $\weightsSet(\Phi)=\{1,-1\}$, and $\weightmeg(\Phi)=1$,
such that 
\begin{equation}
    \Phi(z) = \phi(z), \qquad \text{for all} \quad z \in \R^{d}.
\end{equation}
\end{lemma}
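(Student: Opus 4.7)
The plan is to realize $\phi(z) = \rho(1 + M^-(z) - M^+(z))$, with $M^-(z) := \min\{z_1,\dots,z_d,0\}$ and $M^+(z) := \max\{z_1,\dots,z_d,0\}$, by cascading four modules: (i) a tournament sub-network $\Psi$ for the extrema $M^-, M^+$; (ii) a sign-expansion layer that emits $(M^-, -M^-, M^+, -M^+)$ immediately before the next $\rho$, so that no information is lost across the activation; (iii) an affine combination producing $1 + M^- - M^+$; (iv) a final $\rho$ together with an identity output, yielding $\phi(z)$. Since ReLU networks are closed under composition (Lemma \ref{lm:compo_nn}) and affine-only layers are trivially realized, it suffices to construct each module with weights in $\{-1,+1\}$ and then concatenate.

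For module (i) I would invoke Lemma \ref{lm:NN2minmax}, which supplies a tournament network for the extrema of $d+1$ scalars of depth $\lceil\log(d+1)\rceil + O(1)$, with width and connectivity linear in $d$ and all weights in $\{-1,+1\}$; the constant $0$ is injected through the bias of the first affine (equivalently as an extra input $z_0 := 0$). The underlying tournament is driven by the $\pm 1$-weight identities $\max(a,b) = \rho(a-b) + \rho(b) - \rho(-b)$ and $\min(a,b) = -\rho(a-b) + \rho(a) - \rho(-a)$, whose shared hidden unit $\rho(a-b)$ allows the min- and max-trees to advance in lockstep. To carry the signed quantities $M^\pm$ through the subsequent ReLU without collapse (recall $M^- \le 0 \le M^+$), module (ii) is absorbed into the last affine of $\Psi$ by enlarging its output to the four-vector $(M^-, -M^-, M^+, -M^+)$, at $O(1)$ extra width and no extra depth. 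After the ensuing $\rho$ gives $(0,-M^-,M^+,0)$, a single affine $W$ with $\pm 1$ coefficients and bias $+1$ realizes module (iii) as $1 - \rho(-M^-) - \rho(M^+) = 1 + M^- - M^+$; a final $\rho$ and identity output close the construction, producing $\Phi(z) = \rho(1 + M^- - M^+) = \phi(z)$. Padding with an identity layer if necessary fixes the total depth at exactly $\lceil\log(d+1)\rceil + 4$.

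The remaining size bounds follow by level-by-level bookkeeping inside the tournament. The first level is widest, with at most $6\lceil(d+1)/2\rceil \le 6d$ pre-ReLU units (five $\rho$-units per pair under the identities above, plus constant bias tracks), and subsequent levels halve in width; the three trailing modules contribute only $O(1)$, yielding $\width(\Phi) \le 6d$. The per-level connectivities form a geometric series in $d$, and summing the constant number of $\pm 1$ weights per tournament pair plus the $O(1)$ cost of modules (ii)--(iv) gives the explicit bound $\nnz(\Phi) \le 60d - 28$. Every coefficient used is $\pm 1$ and every nonzero bias equals $+1$, so $\weightsSet(\Phi) = \{-1,+1\}$ and $\weightmeg(\Phi) = 1$. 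The main obstacle is combinatorial rather than conceptual: pinning down the precise constants $60d - 28$ and $6d$ requires careful accounting of hidden units and nonzero weights at each tournament level together with a clean treatment of the edge case when $d + 1$ is not a power of two (handled either by $0$-padding or by passing the ``odd-out'' element unchanged through a level).
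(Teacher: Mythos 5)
Your overall strategy---compute the extrema with a logarithmic-depth tournament (Lemma \ref{lm:NN2minmax}), then wrap them in two further affine/ReLU layers to obtain $\relu(1+M^--M^+)$, with bookkeeping via Lemmas \ref{lm:compo_nn} and \ref{lm:para_nn}---is exactly the paper's skeleton, and your algebraic identities ($\max(a,b)=\relu(a-b)+\relu(b)-\relu(-b)$, and the sign expansion giving $1-\relu(-M^-)-\relu(M^+)=1+M^--M^+$) are correct. The implementation differs in two ways that matter for the stated constants, and this is where the proposal falls short of a proof. First, Lemma \ref{lm:NN2minmax} does not supply a single network emitting both extrema of $d+1$ scalars; it gives separate $\min$- and $\max$-networks, so your ``shared'' tree is a new construction whose size must be accounted for from scratch---and the sharing of $\relu(a-b)$ only helps at the first level, since at later levels the max-comparisons act on running maxima and the min-comparisons on running minima, so the arguments differ. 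Second, the exact bounds $\nnz(\Phi)\leq 60d-28$, $\width(\Phi)\leq 6d$, and depth exactly $\ceil{\log(d+1)}+4$ are asserted rather than derived, and your architecture---extrema over $d+1$ elements (including the injected $0$) plus a four-fold sign expansion---is more expensive than the one these constants were tailored to: for instance, simply parallelizing the two networks of Lemma \ref{lm:NN2minmax} on $d+1$ inputs already costs up to $2\left(14(d+1)-9\right)$ nonzeros before the factor $2$ from Lemma \ref{lm:compo_nn}, which overshoots $60d-28$ for small $d$ (the paper's bound is tight at $d=1$).

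The paper avoids both issues with a simpler identity: since $\relu\left(\max\{z_1,\dots,z_d\}\right)=\max\{z_1,\dots,z_d,0\}$ and $\relu\left(\max\{-z_1,\dots,-z_d\}\right)=-\min\{z_1,\dots,z_d,0\}$, one has $\phi(z)=\relu\bigl(1-\relu(\Phi_d^{max}(-z))-\relu(\Phi_d^{max}(z))\bigr)$, so $0$ never needs to enter the tournament and no signed quantities must be carried through a ReLU; the network is just $W_1(z)=(z,-z)^T$, the parallelization $P(\Phi_d^{max},\Phi_d^{max})$, and two small affine layers, and the constants $\ceil{\log(d+1)}+4$, $60d-28$, $6d$ drop out directly from Lemmas \ref{lm:NN2minmax}, \ref{lm:para_nn}, and \ref{lm:compo_nn}. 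To complete your proposal you would either have to carry out the level-by-level count for your shared $(d+1)$-element tournament and verify it stays within the stated bounds (not obvious, and likely false for small $d$ as sketched), or switch to this clipping identity.
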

\begin{proof}
    Let $\Phi_d^{max}$ be the \gls{relu} network realization of $\max\{z_1,z_2,\dots,z_d\}$ according to Lemma \ref{lm:NN2minmax}. Then, using $\min\{z_1,z_2,\dots,z_d\} = -\max\{-z_1,-z_2,\dots,-z_d\}$, we obtain 
    \begin{equation}\label{eq:spike_writeout}
    \begin{aligned}
        \phi(z) &= \relu (1-\relu(\Phi_d^{max}(-z)) - \relu(\Phi_d^{max}(z)))\\
        &= (\underbrace{(W_3\circ \relu\circ W_2\circ \relu\circ P(\Phi_d^{max},\Phi_d^{max}))}_{\eqdef\Phi_2}\circ \underbrace{W_1}_{\eqdef\Phi_1}) (z),
    \end{aligned}
    \end{equation}
    where $W_1(z)=\begin{pmatrix}
        \Imat{d}&-\Imat{d}
    \end{pmatrix}^Tz$,
    $W_2(z)=\begin{pmatrix}
        -1&-1
    \end{pmatrix}z+1$, $W_3(z)=z$, and $P(\Phi_d^{max},\Phi_d^{max})$ is the parallelization of two $\Phi_d^{max}$-networks according to Lemma \ref{lm:para_nn} such that $P(\Phi_d^{max},\Phi_d^{max})(z)=(\Phi_d^{max}(z),\Phi_d^{max}(z))^T$. Now, by Lemma \ref{lm:compo_nn}, there exists a \gls{relu} network $\Phi$ realizing $\Phi_2\circ \Phi_1$ in \eqref{eq:spike_writeout}, with 
    \begin{equation*}
        \begin{aligned}
            \depth(\Phi) \overset{\text{Lemma \ref{lm:compo_nn}}}&{=} \depth(\Phi_2) + \depth(\Phi_1)\\
            \overset{\text{Lemma \ref{lm:NN2minmax}, \ref{lm:para_nn}}}&{=} \ceil{\log(d+1)}+4,\\
            \nnz(\Phi) \overset{\text{Lemma \ref{lm:compo_nn}}}&{\leq} 2\nnz(\Phi_1)+2\nnz(\Phi_2)\\
            \overset{\text{Lemma \ref{lm:para_nn}}}&{=} 2\nnz(W_3)+ 2\nnz(W_2) + 2\nnz(W_1) + 4\nnz(\Phi_d^{max})\\
            \overset{\text{Lemma \ref{lm:NN2minmax}}}&{\leq} 60d-28,\\
            \width(\Phi) \overset{\text{Lemma \ref{lm:compo_nn}}}&{\leq} \max\left\{4d,\max\{\width(\Phi_2),\width(\Phi_1)\}\right\}\\
            \overset{\text{Lemma \ref{lm:para_nn}}}&{\leq} \max\left\{4d,2\width(\Phi_d^{max}),\width(W_3),\width(W_2),\width(W_1)\right\}\\
            \overset{\text{Lemma \ref{lm:NN2minmax}}}&{\leq} 6d,\\
            \weightsSet(\Phi) \overset{\text{Lemma \ref{lm:compo_nn}}}&{\subset} \weightsSet(\Phi_2)\cup (-\weightsSet(\Phi_2))\cup\weightsSet(\Phi_1)\cup (-\weightsSet(\Phi_1))\\
            \overset{\text{Lemma \ref{lm:compo_nn},\ref{lm:para_nn}}}&{=} 
            \left(\bigcup_{i=1}^3\weightsSet(W_i)\cup(-\weightsSet(W_i))\right) \cup \weightsSet(\Phi_d^{max})\cup(-\weightsSet(\Phi_d^{max}))\\
            \overset{\text{Lemma \ref{lm:NN2minmax}}}&{=}\{1,-1\},\\
            \weightmeg(\Phi) &= \max_{b\in \weightsSet(\Phi)}|b|=1.
        \end{aligned}
    \end{equation*}
    This concludes the proof.
\end{proof}

We now show how the elements of the $\epsilon$-net constructed in Lemma \ref{lm:covering_spike} can be realized by \gls{relu} networks. In particular, our construction will be found, in Section \ref{sec:optimality_RNN}, to be encodable by bitstrings of length scaling (in $\epsilon$) in a metric-entropy optimal manner.
% accomplished by quantizing the network weights according to Definition \ref{def:quan_nn}. This will be needed to establish metric-entropy optimality in Section \ref{sec:optimality_RNN}.

\begin{lemma}\label{lm:NN2G_0}
For every \textcolor{Iter2Color}{$s\geq 1$}, there exists $\epsilon_0>0$, such that for $\epsilon\in (0,\epsilon_0)$, with 
$$T\defeq \max\left\{ \ell \in \Nzero \Bigm| w[\ell] > \frac{\epsilon}{\bound} \frac{s}{s+1} \right\},$$
for every $g \in \LFMsetZero{w}$, there exists a $\Phi \in \relunn_{T+1,1}$ with $(2,\epsilon)$-quantized weights (Definition \ref{def:quan_nn}) satisfying
\begin{equation*}
    \left|\Phi\left(\left\{x[-\ell]\right\}_{\ell=0}^{T}\right)-g(x)\right| \leq \epsilon, \qquad \text{for all }  x\in \sigSpaceminus.
\end{equation*}
Moreover, 
\begin{equation}
    \depth(\Phi) = \ceil{\log(T+2)}+6 ~\text{ and }  ~  \nnz(\Phi)\leq 244(T+1)  \prod_{\ell=0}^{T} \left(\frac{2Dw[\ell]}{\epsilon}\frac{s+1}{s} +4\right).
\end{equation}
\end{lemma}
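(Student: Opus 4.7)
The plan is to realize an appropriate $\epsilon$-net element from Lemma \ref{lm:covering_spike} as a \gls{relu} network built from parallelized spike subnetworks (Lemma \ref{lm:NN2phi}) flanked by two affine layers, and then to quantize the weights.

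Since Lemma \ref{lm:covering_spike} is tight at $\epsilon$, the first step is to invoke it with slightly adjusted parameters in order to retain a definite slack for quantization. Setting $s' = 2s + 1$ and $\epsilon' = \tfrac{2s}{2s+1}\epsilon$ yields $\tfrac{\epsilon'}{D}\tfrac{s'}{s'+1} = \tfrac{\epsilon}{D}\tfrac{s}{s+1}$, so the truncation index $T$ and lattice $\latticeN$ produced by Lemma \ref{lm:covering_spike} coincide with those in our statement, and one obtains coefficients $\{\hat{g}_\vecindex\}_{\vecindex\in\latticeN}$ such that
\[
    \tilde{g}(x) := \sum_{\vecindex\in\latticeN}\hat{g}_\vecindex\,\phi\bigl(k(x)-\vecindex\bigr)
\]
satisfies $|\tilde{g}(x)-g(x)|\le \epsilon' = \epsilon - \tfrac{\epsilon}{2s+1}$, with $k_\ell(x) = x[-\ell]/\delta_\ell$ and $\delta_\ell = \tfrac{s}{s+1}\tfrac{\epsilon}{w[\ell]}$.

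Next, I build $\Phi$ in three blocks: (i) an input affine layer that computes $\{k(x)-\vecindex\}_{\vecindex\in\latticeN}$ in parallel, using weights $1/\delta_\ell$ and biases $-\vecindex_\ell$; (ii) a parallelization, via Lemma \ref{lm:para_nn}, of $|\latticeN|$ copies of the spike network $\Phi^{\mathrm{spike}}\in\relunn_{T+1,1}$ from Lemma \ref{lm:NN2phi}; and (iii) an output affine layer combining the spike outputs with weights $\hat{g}_\vecindex$. The internal weights of $\Phi^{\mathrm{spike}}$ already belong to $\{-1,1\}$ and hence sit on the $(2,\epsilon)$-quantization grid; only the $1/\delta_\ell$ and the $\hat{g}_\vecindex$ require rounding, and both are bounded by $\epsilon^{-2}$ for $\epsilon$ small enough (using the Lipschitz property together with $g(0)=0$ to control $|\hat{g}_\vecindex|\le D+\epsilon'$). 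Using the $2$-Lipschitzness of $\phi$ together with the simplex structure underlying Lemma \ref{lm:partition_of_unity} (which ensures that only $O(T)$ of the $|\latticeN|$ spikes are simultaneously non-zero at any fixed $x$, with values summing to $1$), the output perturbation produced by $(2,\epsilon)$-rounding is bounded uniformly on $\sigSpaceminus$ by $\tfrac{\epsilon}{2s+1}$, provided $\epsilon < \epsilon_0$ for a sufficiently small $\epsilon_0 > 0$. Combined with the previous step, this gives $|\Phi(\{x[-\ell]\}_{\ell=0}^T)-g(x)|\le \epsilon$. For the size bookkeeping, Lemma \ref{lm:NN2phi} gives $\depth(\Phi^{\mathrm{spike}}) = \lceil\log(T+2)\rceil+4$, parallelization preserves depth (Lemma \ref{lm:para_nn}), and the two flanking affine layers add one layer each, totalling $\lceil\log(T+2)\rceil+6$; each parallel spike contributes at most $60(T+1)-28$ non-zeros, there are $|\latticeN|\le\prod_{\ell=0}^T\bigl(\tfrac{2Dw[\ell]}{\epsilon}\tfrac{s+1}{s}+3\bigr)$ of them, the flanking affine layers add a further $O((T+1)|\latticeN|)$ entries, and absorbing all absolute constants into the generous factor $244$ yields the stated bound.

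The main obstacle is the quantization analysis: the large weights $1/\delta_\ell = \Theta(\epsilon^{-1})$ can amplify per-weight rounding errors of size $\asymp \epsilon^2$, and the network width $|\latticeN|$ grows polynomially in $\epsilon^{-1}$. The rescue comes from the fact that, at any fixed input, only $O(T)$ spikes in the parallel block are simultaneously non-zero; making this quantitative and choosing $\epsilon_0$ to absorb the resulting polynomial-in-$T$ error factor is the delicate part.
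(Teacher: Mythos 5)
Your overall architecture (two affine layers flanking a parallelization of spike networks, depth $\ceil{\log(T+2)}+6$, the $(s',\epsilon')=(2s+1,\tfrac{2s}{2s+1}\epsilon)$ trick to reserve slack) matches the paper's construction, but your quantization step has a genuine gap. You handle the rounding of the weights $1/\delta_\ell$ and $\hat g_{\vecindex}$ by a perturbation-propagation argument: per-weight errors of size $\asymp\epsilon^2$, amplified through the $2$-Lipschitz spikes and summed over the at most $O(T)$ simultaneously active spikes, giving an output perturbation of order $(T+1)\epsilon^2$ (up to constants depending on $D$), which you then want to make $\le \epsilon/(2s+1)$ ``by choosing $\epsilon_0$ to absorb the resulting polynomial-in-$T$ factor.'' This cannot work in the stated generality: $T=T(\epsilon)=\max\{\ell: w[\ell]>\tfrac{\epsilon}{D}\tfrac{s}{s+1}\}$ grows without bound as $\epsilon\to 0$, at a rate dictated by the arbitrary weight sequence $w$, so the offending factor is not uniform in $\epsilon$ and no fixed $\epsilon_0$ absorbs it. For example, $w[\ell]=1/\log_2(\ell+2)$ gives $T(\epsilon)\approx 2^{c/\epsilon}$, so $(T+1)\epsilon^2\gg\epsilon$; even within the paper's later applications, \gls{plfm} weights with small $p$ give $T\asymp\epsilon^{-1/p}$ and $(T+1)\epsilon^{2}\asymp\epsilon^{2-1/p}$, which exceeds $\epsilon$ for $p<1$. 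So the step ``the output perturbation produced by $(2,\epsilon)$-rounding is bounded uniformly by $\epsilon/(2s+1)$'' is exactly where the proof breaks.

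The paper avoids this by never doing a perturbation analysis at all: it quantizes $\delta_\ell^{-1}$ first, $\deltaQuan_\ell^{-1}\defeq\quantize{2}{\epsilon}(\delta_\ell^{-1})\ge\delta_\ell^{-1}$, then \emph{redefines the lattice and grid signals in terms of the quantized spacings} ($\latticeBoundQuan_\ell=\ceil{D/\deltaQuan_\ell}$, $\xtilde_{\vecindex}[-\ell]=\deltaQuan_\ell\vecindex_\ell$) and quantizes the values $g(\xtilde_{\vecindex})$ themselves. The network with quantized weights is then an \emph{exact} spike interpolant on this new grid, and the error against $g$ follows directly from the Lipschitz property ($\deltaQuan_\ell\le\delta_\ell$ gives $\|x-\xtilde_{\vecindex}\|_w\le \tfrac{s\epsilon}{s+1}$ on active cells) plus the \gls{pou} property ($\sum_{\vecindex}\phi=1$), so the coefficient-rounding error $\le\tfrac{\epsilon}{s+1}$ enters only once and no factor of $T$ ever appears. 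If you want to salvage your route, you should adopt this reinterpretation of the quantized network as an exact interpolant on a rescaled grid rather than bounding weight-rounding errors through the network.
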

\begin{proof}
Fix $g\in \LFMsetZero{w}$. To construct a $(2, \epsilon)$-quantized \gls{relu} network approximating $g$, we follow the spirit of the proof of Lemma \ref{lm:covering_spike} and first consider
\begin{equation}\label{eq:hat_g_2}
    \hat{g}(x) = 
    \sum_{\vecindex\in \latticeN} g(\xhat_{\vecindex}) \phi\left( \left\{\frac{x[-\ell]}{\delta_{\ell}}-\vecindex_{\ell} \right\}_{\ell=0}^{T}\right),
\end{equation}
with 
\begin{equation}\label{eq:lattice_parameters}
\begin{aligned}
    \delta_{\ell} &\defeq \frac{s}{s+1} \frac{\epsilon}{w[\ell]},\qquad\forall \ell \in \ssidedEnum{T},\\
    \latticeBound_{\ell} & \defeq \ceil{\frac{\bound}{\delta_{\ell}}} ,\quad \forall\ell \in \ssidedEnum{T},\\
    \latticeN &\defeq \dsidedEnum{N_0}\times \cdots\times \dsidedEnum{N_T} \subset \R^{T+1},\\
    \xhat_{\vecindex}[-\ell] &\defeq 
    \begin{cases*}
    \delta_\ell \vecindex_{\ell}, & if $\ell \in \ssidedEnum{T}$,\\
    0, & else.
    \end{cases*}
\end{aligned}
\end{equation}
It was shown in \eqref{eq:hat_g2g} that 
\begin{equation}
    |\hat{g}(x) - g(x)| \leq \frac{s\epsilon}{s+1}, \quad \text{for }x\in \sigSpaceminus.
\end{equation}
We next quantize the parameters $\delta_\ell^{-1}$ and the lattice $\latticeN$ in \eqref{eq:lattice_parameters} according to
% \begin{equation}
%     \deltaQuan_{\ell}^{-1} \defeq \quantize{2}{\epsilon}(\delta_\ell^{-1}),
% \end{equation}
\begin{equation}\label{eq:quantize_delta}
\begin{aligned}
    \deltaQuan_{\ell}^{-1} &\defeq \quantize{2}{\epsilon}(\delta_\ell^{-1}),\\
    \latticeBoundQuan_{\ell} &= \ceil{\frac{D}{\deltaQuan_{\ell}}}, \quad \ell\in \ssidedEnum{T},\\
    \latticeNQuan &= \dsidedEnum{\latticeBoundQuan_0}\times \cdots\times \dsidedEnum{\latticeBoundQuan_T},
\end{aligned}
\end{equation}
% \textcolor{ltc}{quantize the lattice $\latticeN$ as}
% and adjust the grid points $\xhat_{\vecindex}$ of the lattice $\latticeN$ as
% \begin{equation}\label{eq:quantized_lattice}
% \begin{aligned}
%     \latticeBoundQuan_{\ell} &= \ceil{\frac{D}{\deltaQuan_{\ell}}}, \quad \ell\in \ssidedEnum{T},\\
%     \latticeNQuan &= \dsidedEnum{\latticeBoundQuan_0}\times \cdots\times \dsidedEnum{\latticeBoundQuan_T},
% \end{aligned}
% \end{equation}
and adjust the grid points $\xhat_{\vecindex}$ to
\begin{equation}
    \xtilde_{\vecindex}[-\ell] = 
    \begin{cases*}
    \deltaQuan_\ell \vecindex_{\ell}, & if $\ell \in \ssidedEnum{T}$, \\
                0, & else.
    \end{cases*}
\end{equation}
Furthermore, we quantize $g(\xtilde_{\vecindex})$ according to
\begin{equation}\label{eq:quantize_g}
    \gQuan_{\vecindex}  \defeq \quantize{2}{\epsilon}(g(\xtilde_{\vecindex}))
\end{equation}
and consider the function 
\begin{equation}\label{eq:def_gtilde}
    \tilde{g}(x) = \sum_{\vecindex\in\latticeNQuan } \gQuan_{\vecindex} \phi\left( \left\{\frac{x[-\ell]}{\deltaQuan_{\ell}}-\vecindex_{\ell} \right\}_{\ell=0}^{T}\right) \eqdef f\left(\left\{x[-\ell]\right\}_{\ell=0}^{T}\right).
\end{equation}
For ease of notation, we define $\tilde{k}: S_{-} \rightarrow \R^{T+1}$ as $\tilde{k}_{\ell}(x) = \deltaQuan_{\ell}^{-1}x[-\ell],$ for $\ell \in \ssidedEnum{T}$. Next, we show that 
\begin{equation}\label{eq:gtilde2g}
    |\tilde{g}(x)-g(x)|\leq \epsilon.
\end{equation}
This follows from
\begin{align}
    |\tilde{g}(x) - g(x)| 
    =~&\left| \sum_{ \|\tilde{k}(x)-\vecindex \|_\infty \leq 1} (g(x)- \gQuan_{\vecindex}) \phi(\tilde{k}(x)-\vecindex)\right| \qquad \label{eq:lemma_nn2G0_app_proof_pou1}\\
     \leq~& \sum_{\|\tilde{k}(x)-\vecindex \|_{\infty}\leq 1}\left(\left| (g(x)- g(\xtilde_{\vecindex}))\right|+\left| (g(\xtilde_{\vecindex})- \gQuan_{\vecindex})\right| \right)\phi(\tilde{k}(x)-\vecindex )\\
    \leq ~& \sum_{\|\tilde{k}(x)-\vecindex \|_{\infty}\leq 1}\left(\left\| x- \xtilde_{\vecindex}\right\|_w +\frac{\epsilon}{s+1}\right)\phi(\tilde{k}(x)-\vecindex )\label{eq:lemma_nn2G0_app_proof_lip}\\
    \leq~& \epsilon\sum_{\|\tilde{k}(x)-\vecindex \|_{\infty}\leq 1} \phi(\tilde{k}(x)-\vecindex )\label{eq:lemma_nn2G0_app_proof_grid_size}\\
    \leq ~& \epsilon \label{eq:lemma_nn2G0_app_proof_pou2}.
\end{align}
Here, \eqref{eq:lemma_nn2G0_app_proof_pou1} and \eqref{eq:lemma_nn2G0_app_proof_pou2} are by the \gls{pou} property of $\phi$ and \eqref{eq:lemma_nn2G0_app_proof_lip} is a consequence of the Lipschitz property of $g$ according to \eqref{eq:nonLinSpace} and 
\begin{equation}
    \begin{aligned}
    \left| g(\xtilde_{\vecindex}) -\gQuan_{\vecindex} \right| & = \left| g(\xtilde_{\vecindex}) -\quantize{2}{\epsilon}(g(\xtilde_{\vecindex}))\right|\\
    & \leq 2^{-2\ceil{\log (\epsilon^{-1})}} \overset{\epsilon\in (0,\epsilon_0), \eqref{eq:eps0}}{\leq} \frac{\epsilon}{s+1}.
\end{aligned}
\end{equation}
Further, \eqref{eq:lemma_nn2G0_app_proof_grid_size} follows from the fact that for $\|\tilde{k}(x)-\vecindex\|_{\infty}\leq 1 $,
\begin{equation*}
    \left|\frac{x[-\ell]}{\deltaQuan_{\ell}} - \vecindex_{\ell}\right| \leq 1, \quad \text{for }\ell \in \ssidedEnum{T},
\end{equation*}
and hence 
\begin{equation}\label{eq:x2xtilde}
    \|x-\xtilde_{\vecindex}\|_w  \leq\max_{\ell\in \ssidedEnum{T}} \deltaQuan_{\ell}w[\ell]\leq\frac{s\epsilon}{s+1},
\end{equation}
where the second inequality in \eqref{eq:x2xtilde} is by $$\deltaQuan_{\ell} \overset{\eqref{eq:quantize_delta}}{=} \left(\quantize{2}{\epsilon} \left(\delta_{\ell}^{-1}\right)\right)^{-1}\leq \delta_{\ell}=\frac{s}{s+1}\frac{\epsilon}{w[\ell]}.$$
Based on \eqref{eq:def_gtilde}, we can rewrite \eqref{eq:gtilde2g} according to
\begin{equation}\label{eq:Phi2g}
    \left|f\left(\left\{x[-\ell]\right\}_{\ell=0}^{T}\right)-g(x)\right| \leq \epsilon, \qquad \text{for }  x\in S_-.
\end{equation}
It hence suffices to construct a \gls{relu} network $\Phi$ that realizes $f$, which then, thanks to \eqref{eq:Phi2g}, approximates $g$ to within an error of at most $\epsilon$. To this end, assume that $\vecindex^1,\vecindex^2,\dots,\vecindex^{|\latticeNQuan |}$ is an arbitrary, but fixed, enumeration of the elements of $\latticeNQuan$. Set $\widetilde{W}^{\Sigma} (x)= \Lambda x$ and $\widehat{W}_{\vecindex^i}(x) = \widehat{B}x + \hat{b}_{\vecindex^i}$, with 
\begin{equation}\label{eq:weights_coe_shift}
    \begin{aligned}
    \Lambda &= \begin{pmatrix}
        \gQuan_{\vecindex^1}& \gQuan_{\vecindex^2}&\dots&\gQuan_{\vecindex^{|\latticeNQuan |}}
    \end{pmatrix},
    \\
    \widehat{B} &= \operatorname{diag}\left(\deltaQuan_0^{-1},\deltaQuan_1^{-1},\dots,\deltaQuan_T^{-1}\right),\\
    \hat{b}_{\vecindex^i} &= \begin{pmatrix}
        -\vecindex^i_0&-\vecindex^i_1& \dots & -\vecindex^i_T
    \end{pmatrix}^T.
    \end{aligned}
\end{equation}
Moreover, let $\Psi$ be a \gls{relu} network realizing the spike function $\phi$ according to Lemma \ref{lm:NN2phi} and define the following \gls{relu} networks
\begin{equation}\label{eq:structure_NN_parts}
    \begin{aligned}
        \Phi_2 & = \widetilde{W}^{\Sigma},\\
        \Phi^{\vecindex^i}_{1,2} & = \Psi,\hspace{0.9cm}\text{for }i=1,\dots,|\latticeNQuan|,\\
        \Phi^{\vecindex^i}_{1,1} & = \widehat{W}_{\vecindex^i}, \hspace{0.6cm} \text{for }i=1,\dots,|\latticeNQuan|.
    \end{aligned}
\end{equation}
Next, we use Lemma \ref{lm:compo_nn} to compose $\Phi^{\vecindex^i}_{1,2}$ and $\Phi^{\vecindex^i}_{1,1}$ in order to realize the shifted versions of the spike function according to
\begin{equation*}
    \left(\Phi^{\vecindex^i}_{1,2}\circ \Phi^{\vecindex^i}_{1,1}\right) \left(\left\{x[-\ell]\right\}_{\ell=0}^{T}\right)=\phi\left( \left\{\frac{x[-\ell]}{\deltaQuan_{\ell}}-\vecindex^i_{\ell} \right\}_{\ell=0}^{T}\right), \quad\text{for }i=1,2,\dots, |\latticeNQuan |.
\end{equation*}
Then, we apply Lemma \ref{lm:para_nn} to construct a \gls{relu} network as the parallelization of the compositions $\Phi^{\vecindex^i}_{1,2}\circ \Phi^{\vecindex^i}_{1,1}$, for $i=1,2,\dots, |\latticeNQuan |$,
\begin{equation}\label{eq:para_shifted_spike}
    \Phi_1\defeq P\left(\left(\Phi^{\vecindex^1}_{1,2}\circ \Phi^{\vecindex^1}_{1,1}\right),\left(\Phi^{\vecindex^2}_{1,2}\circ \Phi^{\vecindex^2}_{1,1}\right),\dots,\left(\Phi^{\vecindex^{|\latticeNQuan |}}_{1,2}\circ \Phi^{|\latticeNQuan |}_{1,1}\right) \right).
\end{equation}
Finally, we use Lemma \ref{lm:compo_nn} again to get a \gls{relu} network that composes $\Phi_2$ and $\Phi_1$ according to
\begin{equation}\label{eq:structure_NN}
    \Phi = \Phi_2 \circ \Phi_1 = \Phi_2 \circ P\left(\left(\Phi^{\vecindex^1}_{1,2}\circ \Phi^{\vecindex^1}_{1,1}\right),\left(\Phi^{\vecindex^2}_{1,2}\circ \Phi^{\vecindex^2}_{1,1}\right),\dots,\left(\Phi^{\vecindex^{|\latticeNQuan |}}_{1,2}\circ \Phi^{|\latticeNQuan |}_{1,1}\right) \right),
\end{equation}
thereby realizing the linear combination 
\begin{equation*}
    \sum_{\vecindex\in\latticeNQuan } \gQuan_{\vecindex} \phi\left( \left\{\frac{x[-\ell]}{\deltaQuan_{\ell}}-\vecindex_{\ell} \right\}_{\ell=0}^{T}\right) \overset{\eqref{eq:def_gtilde}}{=}  f\left(\left\{x[-\ell]\right\}_{\ell=0}^{T}\right).
\end{equation*}

\textcolor{IterColor}{To conclude the proof, we verify that $\Phi$, indeed, has $(2,\epsilon)$-quantized weights, compute $\depth(\Phi)$, and derive an upper bound on $\nnz(\Phi)$. We defer the corresponding details to \ref{sec:NN2G0_est}.}
\end{proof}

\subsection{Approximating $(\LFMset{w},\opMetric)$ with \glspl{rnn}}
Having constructed \gls{relu} networks that realize elements of $\LFMsetZero{w}$ according to Lemma \ref{lm:NN2G_0}, we are now ready to describe the realization of systems in $\LFMset{w}$ through RNNs. This will be done by employing the connection between $\LFMset{w}$ and $\LFMsetZero{w}$, as established in Lemma \ref{lem:g0_g_isomorphism}. Specifically, we construct \glspl{rnn} that suitably remember past inputs and produce approximations of the desired output.

\begin{theorem}\label{th:RNN2G}
For every \textcolor{Iter1Color}{$s\geq 1$}, there exists $\epsilon_0>0$, such that for $\epsilon\in (0,\epsilon_0)$, with 
$$T\defeq \max\left\{ \ell \in \Nzero \Bigm| w[\ell] > \frac{\epsilon}{\bound} \frac{s}{s+1} \right\},$$
for every $G \in \LFMset{w}$, there is an \gls{rnn} $\rnn_{\Psi}$ associated with a \gls{relu} network $\Psi \in \relunn_{T+1,T+1}$,
satisfying 
\begin{equation*}
    \opMetric(\rnn_{\Psi}, G)\leq \epsilon.
\end{equation*}
Moreover, $\rnn_{\Psi}$ has $(2,\epsilon)$-quantized weights and there exists a {\uc} $C>0$
such that 
%% e.g. C_7 = 22, C_8 = 12
\begin{equation}\label{eq:RNN_nnz_bound}
    \nnz(\Psi)\leq C(T+1)^{2}  \prod_{\ell=0}^{T} \left(\frac{2Dw[\ell]}{\epsilon}\frac{s+1}{s} +4\right).
\end{equation}
\end{theorem}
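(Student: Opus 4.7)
The plan is to reduce the construction to the functional-approximation result of Lemma \ref{lm:NN2G_0} via the isomorphism of Lemma \ref{lem:g0_g_isomorphism}, and then wrap the resulting feedforward network into an \gls{rnn} whose hidden state implements a length-$T$ shift-register memory of past inputs. Concretely, given $G\in\LFMset{w}$, set $g\defeq\isometry^{-1}(G)\in\LFMsetZero{w}$, which by \eqref{eq:isometry_G0_2_G1} satisfies $g(x)=(Gx)[0]$ for all $x\in\sigSpaceminus$. Lemma \ref{lm:NN2G_0} furnishes a $(2,\epsilon)$-quantized \gls{relu} network $\Phi\in\relunn_{T+1,1}$ with $\bigl|\Phi(\{x[-\ell]\}_{\ell=0}^{T})-g(x)\bigr|\leq\epsilon$ for every $x\in\sigSpaceminus$, together with the depth and connectivity bounds stated there.

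Next I would construct $\Psi\in\relunn_{T+1,T+1}$ so that the associated \gls{rnn} $\rnn_{\Psi}$ computes, at each $t\geq 0$,
\begin{equation*}
    y[t]=\Phi\bigl(x[t],\,h[t-1]\bigr),\qquad h[t]=\bigl(x[t],\,h[t-1]_1,\,\dots,\,h[t-1]_{T-1}\bigr),
\end{equation*}
with initial hidden state $h[-1]=0_{T}$. Thus the hidden state acts as a shift register whose update is the sparse affine map $S:\R^{T+1}\to\R^{T}$, $S(u)=(u_1,\dots,u_T)^{T}$, having only $T$ nonzero entries, all equal to $1$. I would build $\Psi$ by parallelizing (Lemma \ref{lm:para_nn}) the network $\Phi$ with the affine map $S$, padding $S$ to the depth of $\Phi$ using identity layers of the form $z\mapsto\relu(z)-\relu(-z)$, and finally composing with an output reshuffling that concatenates $y[t]$ and $h[t]$. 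All additional weights introduced are in $\{-1,0,+1\}$ and therefore trivially $(2,\epsilon)$-quantized for sufficiently small $\epsilon$, so $\Psi$ remains $(2,\epsilon)$-quantized.

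To verify the error bound, note that $\opMetric$ is the supremum over $x\in\sigSpaceplus$, i.e., over signals vanishing for $s<0$. A straightforward induction on $t\geq 0$ shows that $h[t]=(x[t],x[t-1],\dots,x[t-T+1])$, where entries with negative time index are zero and match the initialization $h[-1]=0_T$. Consequently, for every $t\in\N$ and every $x\in\sigSpaceplus$, the input fed to $\Phi$ at time $t$ is exactly $\{x'[-\ell]\}_{\ell=0}^{T}$ with $x'\defeq\projLeftSide\shiftOp{-t}x\in\sigSpaceminus$, and by the isomorphism we have $g(x')=(Gx)[t]$. Lemma \ref{lm:NN2G_0} then yields $|(\rnn_{\Psi}x)[t]-(Gx)[t]|\leq\epsilon$ uniformly, i.e., $\opMetric(\rnn_{\Psi},G)\leq\epsilon$.

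Finally, the complexity estimate \eqref{eq:RNN_nnz_bound} follows by combining the $\nnz(\Phi)$ bound from Lemma \ref{lm:NN2G_0} with the overhead of the padded shift register, which contributes $\bigo(T\cdot\depth(\Phi))=\bigo(T\log T)$ nonzero weights, all absorbed into the factor $C(T+1)^{2}$ with a universal constant $C$. The main obstacle is bookkeeping: ensuring that the parallelization of $\Phi$ and the depth-padded identity respects the $(2,\epsilon)$-quantization requirement and that the shift register, when driven by one-sided inputs and zero initial state, precisely reproduces the argument required by Lemma \ref{lm:NN2G_0} at every time step, including the transient phase $0\leq t<T$.
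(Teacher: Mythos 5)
Your proposal is correct and follows essentially the same route as the paper's proof: invoke Lemma \ref{lem:g0_g_isomorphism} to reduce to a functional $g\in\LFMsetZero{w}$, approximate it by the quantized network of Lemma \ref{lm:NN2G_0}, and parallelize that network with a depth-padded shift-register map (the paper's $\Phi_T$ augmented via Lemma \ref{lm:aug_depth}) so that the hidden state stores the last $T$ inputs, with the induction over $t$ and the absorption of the shift-register overhead into the $C(T+1)^2$ factor matching the paper's argument. The only cosmetic deviation is your extra output-reshuffling step, which the parallelization of Lemma \ref{lm:para_nn} already renders unnecessary.
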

\begin{proof}

Fix $s\geq1$ and $G\in \LFMset{w}$ arbitrarily. We proceed in two steps.

\textit{Step 1:} \textit{We construct a \gls{relu} network $\Phi: \R^{T+1} \rightarrow \R $ such that
    \begin{equation}\label{eq:rnn_output_app2G}
         \sup_{x\in \sigSpaceplus}\sup_{t\in \Nzero}\left|\Phi\left(\{x[t-\ell]\}_{\ell=0}^{T}\right)-G(x)[t]\right|\leq \epsilon.
    \end{equation}
} \\
To this end, we first note that by Lemma \ref{lem:g0_g_isomorphism}, one can find a $g \in \LFMsetZero{w}$ so that 
\begin{equation}\label{eq:mainth_G2g}
    g\left(\projLeftSide\mathbf{T}_{-t}x\right) = G(x)[t], \quad \text{for all } x\in \sigSpace \text{ and }t\in \mathbb{Z}.
\end{equation}
Furthermore, by Lemma \ref{lm:NN2G_0}, there exists $\epsilon_0>0$, such that for every $\epsilon\in(0,\epsilon_0)$, there is a \gls{relu} network $\Phi$ satisfying 
\begin{equation}\label{eq:mainth_Phi2g}
    \left|\Phi\left(\left\{z[-\ell]\right\}_{\ell=0}^{T}\right)-g(z)\right| \leq \epsilon, \quad \text{for all } z \in \sigSpaceminus.
\end{equation}
Next, fix an input $x\in \sigSpaceplus$ and a time index $t\in \Nzero$ and let 
\begin{equation}\label{eq:mainth_z}
    z'\defeq \projLeftSide\mathbf{T}_{-t} \{x\}.
\end{equation}
Note that 
\begin{align}\label{eq:mainth_translated_x}
\begin{split}
    z'[\ell] &= 0, \quad \text{for }  \ell \in \Nplus, \text{ and hence }z'\in \sigSpaceminus,\\
    z'[-\ell] &= x[t-\ell], \qquad \text{for } \ell \in \Nzero.
\end{split}
\end{align}
Inserting $z'$ from \eqref{eq:mainth_z} into \eqref{eq:mainth_Phi2g} and using \eqref{eq:mainth_translated_x} and \eqref{eq:mainth_G2g}, it follows that
\[
\left|\Phi\left(\{x[t-\ell]\}_{\ell=0}^{T}\right)-G(x)[t]\right|\leq \epsilon.
\]
As $x\in\sigSpaceminus$ and $t\in \Nzero$ were arbitrary, this proves \eqref{eq:rnn_output_app2G}.

\textit{Step 2:} \textit{We construct an RNN $\rnn_{\Psi}$ realizing the mapping $x \rightarrow \Phi\left(\{x[t-\ell]\}_{\ell=0}^{T}\right)_{t\in \Nzero}$.}\\
Fix $x\in\sigSpaceplus$ arbitrarily. Recall the \gls{rnn} Definition \ref{def:RNN}. The basic idea is to identify an RNN $\rnn_{\Psi}$ which, for every time step $t\in \Nzero$,
\begin{itemize}
    \item delivers the output
    \begin{equation}\label{eq:output_vec}
    y[t] = \Phi\left(\{x[t-\ell]\}_{\ell=0}^{T}\right)
    \end{equation}
    \item and memorizes the $T$ past inputs $x[t],x[t-1],\dots, x[t-T+1]$ in the hidden state vector $h[t]$, i.e.,
    \begin{equation}\label{eq:hidden_vec}
    h_\ell[t] = x[t-\ell+1], \qquad \text{for}\quad \ell\in \{1,2,\dots, T\}.
    \end{equation}
\end{itemize}
To memorize the $T$ past inputs, we note that the one-layer neural network
\begin{equation}\label{eq:nn_memorize}
    \Phi_T(z) = \begin{pmatrix}
        \mathbb{I}_T & 0_{T}
    \end{pmatrix} z, \quad \text{for }z\in \R^{T+1},
\end{equation} 
satisfies 
\begin{equation}
\begin{aligned}
    \Phi_T\left(\{x[t-\ell]\}_{\ell=0}^{T}\right) &= \begin{pmatrix}
        \mathbb{I}_T & 0_{T}
    \end{pmatrix}(x[t],x[t-1],\dots,x[t-T])^T\\
    &= (x[t],x[t-1],\dots,x[t-T+1])^T \in \R^T.
\end{aligned}
\end{equation}
Now, we apply Lemma \ref{lm:aug_depth} to augment $\Phi_T$ to depth $\depth(\Phi)$ without changing its input-output relation. This results in the \gls{relu} network $\Phi_T^*$. Then, we apply Lemma \ref{lm:para_nn} to parallelize $\Phi$ and $\Phi_T^*$ leading to the desired \gls{relu} network 
\begin{equation}\label{eq:rnn_structure}
    \Psi = P(\Phi,\Phi_T^*).
\end{equation}
By Definition \ref{def:RNN}, the corresponding \gls{rnn} $\rnnOp{\Psi}$ effects the input-output mapping according to
\begin{equation*}
    \begin{pmatrix}
        y[t] \\ h[t]
    \end{pmatrix}=\Psi\left(\begin{pmatrix}
        x[t] \\ h[t-1]
    \end{pmatrix}\right), \quad \text{for all } t \in \Nzero.
\end{equation*}

With these choices, \eqref{eq:output_vec} and \eqref{eq:hidden_vec} can now be proven by induction over $t \in \Nzero$. The base case is immediate as $x[t] =0$, for $t<0$, owing to $x\in\sigSpaceplus$ and, by Definition \ref{def:RNN}, $h[-1] = 0_{T}$. To establish the induction step, we assume that \eqref{eq:output_vec} and \eqref{eq:hidden_vec} hold for $t-1$ with $t\in \Nplus$, i.e.,
\begin{align*}
    y[t-1] &= \Phi\left(\{x[t-1-\ell]\}_{\ell=0}^{T}\right),\\
    h_\ell[t-1] &= x[t-\ell], \qquad \text{for}\quad \ell\in \{1,2,\dots, T\}.
\end{align*}
Now, for time step $t$, we note that 
\begin{equation*}
    \begin{aligned}
        \Psi\left(\begin{pmatrix}
        x[t] \\ h[t-1]
    \end{pmatrix}\right) &= P(\Phi,\Phi_T^*) \left(\begin{pmatrix}
        x[t] \\ h[t-1]
    \end{pmatrix}\right)\\
        &= \begin{pmatrix}
        \Phi\left(\{x[t-\ell]\}_{\ell=0}^{T}\right) \\[1mm] \Phi^*_T\left(\{x[t-\ell]\}_{\ell=0}^{T}\right)\end{pmatrix}\\
        & = \begin{pmatrix}            
        \Phi\left(\{x[t-\ell]\}_{\ell=0}^{T}\right) \\[1mm] (x[t],x[t-1],\dots,x[t-T+1])^T\end{pmatrix}\\
        &= \begin{pmatrix}
        \Phi\left(\{x[t-\ell]\}_{\ell=0}^{T}\right) \\[1mm] h[t]\end{pmatrix}.
    \end{aligned}
\end{equation*}

As $x$ was arbitrary, this completes the induction and thereby Step 2.

To conclude, we combine the results in Steps 1 and 2 according to 
\begin{align}
    \opMetric(\rnn_{\Psi},G)& = \sup_{x\in \sigSpaceplus}\sup_{t\in \Nzero}\left|y[t]-G(x)[t]\right|\\
    &=\sup_{x\in \sigSpaceplus}\sup_{t\in \Nzero}\left|\Phi\left(\{x[t-\ell]\}_{\ell=0}^{T}\right)-G(x)[t]\right| \label{eq:lemma_rnn2G_error_est_1}\\
    &\leq \epsilon \label{eq:lemma_rnn2G_error_est_2},
\end{align}
where \eqref{eq:lemma_rnn2G_error_est_1} follows from \eqref{eq:output_vec} and \eqref{eq:lemma_rnn2G_error_est_2} is by \eqref{eq:rnn_output_app2G}. Furthermore, we have
\begin{equation*}
\begin{aligned}
    \weightsSet(\Psi) \overset{\text{\eqref{eq:rnn_structure}, Lemma \ref{lm:para_nn}}}&{=} \weightsSet(\Phi) \cup (\weightsSet(\Phi^*_T))\\
    \overset{\text{Lemma \ref{lm:aug_depth}}}&{\subset} \weightsSet(\Phi) \cup \weightsSet(\Phi_T)\cup (-\weightsSet(\Phi_T)) \cup \{1,-1\}\\
    \overset{\text{\eqref{eq:nn_memorize}}}&{=} \weightsSet(\Phi) \cup \{1,-1\} \\
    \overset{\text{Lemma \ref{lm:NN2G_0}}}&{\subset} 2^{-2\ceil{\log (\epsilon^{-1})}} \ZZ \cap \left[-\epsilon^{-2},\epsilon^{-2}\right].
\end{aligned}
\end{equation*}
Thus, $\Psi$ has $(2, \epsilon)$-quantized weights. Finally, we get an upper bound on $\nnz(\Psi)$ according to
\begin{align*}
    \nnz(\Psi) \overset{\text{\eqref{eq:rnn_structure}, Lemma \ref{lm:para_nn}}}&{=} \nnz(\Phi) + \nnz(\Phi_T^*)\\
    \overset{\text{\eqref{eq:nn_memorize}, Lemma \ref{lm:aug_depth}}}&{\leq} \nnz(\Phi) + \nnz(\Phi_T) + T\width(\Phi_T)+2T(\depth(\Phi)-\depth(\Phi_T))\\
     \overset{\text{\eqref{eq:nn_memorize}, Lemma \ref{lm:NN2G_0}}}&{\leq} C(T+1)^{2}  \prod_{\ell=0}^{T} \left(\frac{2Dw[\ell]}{\epsilon}\frac{s+1}{s} +4\right),
\end{align*}
with the {\uc} $C>0$ chosen sufficiently large. 
\end{proof}
\section{Metric-entropy-optimal approximation of \gls{elfm} and \gls{plfm} systems}\label{sec:optimality_RNN}
So far we have characterized the description complexity of \gls{elfm} and \gls{plfm} systems based on order, type, and generalized dimension (Section \ref{sec:rate_elfm_plfm}) and we constructed \glspl{rnn} approximating general \gls{lfm} systems (Section \ref{sec:RNN2G}). We are now ready to state the main results of the paper, namely that the \glspl{rnn} we constructed are optimal for \gls{elfm} and \gls{plfm} system approximation in terms of description complexity. 

\textcolor{Iter2Color}{To this end, we first compute the number of bits needed by the canonical RNN decoder in Definition \ref{def:canonical_rnn_decoder} to obtain the \gls{rnn} constructed in Theorem \ref{th:RNN2G}, specifically its topology and quantized weights. The following result holds for general \gls{lfm} systems and will later be particularized to \gls{elfm} and \gls{plfm} systems.}

\begin{corollary}\label{col:RNN2G}
    \textcolor{Iter1Color}{The class of \gls{lfm} systems $(\LFMset{w}, \opMetric)$ is representable by the canonical RNN decoder $\decoderRNN$} with 
    \begin{equation*}
        \Ldec{\epsilon}{\decoderRNN}{\LFMset{w}}{\opMetric}  \leq C_1 M \log(M) \log(\epsilon^{-1}), 
    \end{equation*}
    where 
    \begin{align*}
        M & \defeq (T+1)^{2} \prod_{\ell=0}^{T} \left(\frac{12\bound w[\ell]}{\epsilon}\right),\\
        T &\defeq \max\left\{ \ell \in \Nzero \Bigm| w[\ell] > \frac{\epsilon}{2\bound}\right\},
    \end{align*}
    and $C_1>0$ is a {\uc}.
\end{corollary}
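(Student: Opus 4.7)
\medskip

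\noindent\textbf{Proof plan.} The statement is essentially the packaging of three earlier ingredients: the \gls{rnn} construction of Theorem~\ref{th:RNN2G}, the bitstring-length bound for quantized networks in Remark~\ref{rm:bitstring}, and the definition of the canonical \gls{rnn} decoder $\decoderRNN = \maptornn \circ \decoderNN$. My plan is therefore to chain these together and then estimate the resulting bit count.

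\medskip

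\noindent\textbf{Step 1: Invoke Theorem~\ref{th:RNN2G} with $s=1$.} Fixing $s=1$ makes the truncation index $T$ in Theorem~\ref{th:RNN2G} coincide with the $T$ in the statement, namely $T = \max\{\ell : w[\ell] > \epsilon/(2D)\}$. For each $G \in \LFMset{w}$, this produces an \gls{rnn} $\rnnOp{\Psi}$ with an inner \gls{relu} network $\Psi \in \relunn_{T+1,T+1}$ that has $(2,\epsilon)$-quantized weights, approximates $G$ to within $\opMetric$-error $\epsilon$, and satisfies
\[
\nnz(\Psi) \leq C(T+1)^{2}\prod_{\ell=0}^{T}\left(\frac{4Dw[\ell]}{\epsilon}+4\right).
\]
Since for $\ell\in\ssidedEnum{T}$ we have $4Dw[\ell]/\epsilon > 2$, I can absorb the additive $4$ by writing $4Dw[\ell]/\epsilon + 4 \leq 12Dw[\ell]/\epsilon$, which yields $\nnz(\Psi) \leq C M$, with $M$ as defined in the corollary.

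\medskip

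\noindent\textbf{Step 2: Encode $\Psi$ as a bitstring via $\decoderNN$.} Because $\Psi$ has $(2,\epsilon)$-quantized weights, Remark~\ref{rm:bitstring} provides a bitstring $\bitstring$ of length at most
\[
C_0 \cdot 2\log(\epsilon^{-1}) \cdot \nnz(\Psi)\log(\nnz(\Psi)) \leq 2C_0 C\, M \log(CM)\,\log(\epsilon^{-1})
\]
such that $\decoderNN(\bitstring) = \Psi$. Absorbing the universal constants (and noting that $\log(CM) = \log M + \log C = \bigo(\log M)$ for $M$ bounded away from a constant, which holds for small enough $\epsilon$) gives the bound $\ell \leq C_1 M \log(M)\log(\epsilon^{-1})$.

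\medskip

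\noindent\textbf{Step 3: Pass through $\maptornn$.} By Definition~\ref{def:canonical_rnn_decoder}, $\decoderRNN(\bitstring) = \maptornn(\decoderNN(\bitstring)) = \maptornn(\Psi) = \rnnOp{\Psi}$, so
\[
\opMetric\!\left(\decoderRNN(\bitstring),\, G\right) = \opMetric(\rnnOp{\Psi}, G) \leq \epsilon.
\]
This shows simultaneously that $(\LFMset{w},\opMetric)$ is representable by $\decoderRNN$ in the sense of Definition~\ref{def:repre_sys} and that the infimal bitstring length is bounded as claimed, yielding the corollary.

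\medskip

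\noindent\textbf{Anticipated obstacles.} There is no conceptual difficulty; the proof is essentially a concatenation of existing lemmata. The only points requiring mild care are the bookkeeping estimate $4Dw[\ell]/\epsilon + 4 \leq 12Dw[\ell]/\epsilon$, which depends crucially on the choice $s=1$ ensuring $4Dw[\ell]/\epsilon > 2$ on the truncation range, and the consolidation of universal constants so that a single $C_1$ absorbs $C_0$, $C$, and the factor of $2$ coming from $m=2$ in the quantization.
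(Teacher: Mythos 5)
Your proposal is correct and follows essentially the same route as the paper: invoke Theorem \ref{th:RNN2G} with $s=1$ (so that the truncation index and the bound $\nnz(\Psi)\leq C(T+1)^2\prod_{\ell=0}^T(4Dw[\ell]/\epsilon+4)$ match), absorb the additive $4$ via $2Dw[\ell]/\epsilon>1$ to get $\nnz(\Psi)\leq CM$, and then apply Remark \ref{rm:bitstring} together with Definition \ref{def:canonical_rnn_decoder} to bound the bitstring length, folding all universal constants into $C_1$. The constant bookkeeping you flag (including $\log(CM)=\bigo(\log M)$) is exactly the role played by the constant $C'$ in the paper's proof.
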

\begin{proof}
    Applying Theorem \ref{th:RNN2G} and setting $s=1$, it follows that there exists an $\epsilon_0>0$, such that for every $\epsilon\in (0,\epsilon_0)$ and every $G \in \LFMset{w}$, we can find an \gls{rnn} $\rnn_{\Psi}$, associated with a \gls{relu} network $\Psi \in \relunn_{T+1,T+1}$, satisfying 
    \begin{equation}\label{eq:rnn2G_proof}
    \opMetric(\rnn_{\Psi}, G)\leq \epsilon.
    \end{equation}
    Moreover, $\rnn_{\Psi}$ has $(2,\epsilon)$-quantized weights and the number of non-zero weights in $\Psi$ can be upper-bounded according to
    \begin{equation}\label{eq:rnn_nnz_bound_proof}
        \nnz(\Psi)\leq C(T+1)^{2}  \prod_{\ell=0}^{T} \left(\frac{4Dw[\ell]}{\epsilon} +4\right).
    \end{equation}
    By the definition of the canonical neural network decoder, Remark \ref{rm:bitstring}, and Definition \ref{def:canonical_rnn_decoder}, there exists a bitstring $b\in \{0,1\}^L$ with 
    \begin{equation}\label{eq:rnn2G_bitlength_proof}
        L \leq 2C_0 \nnz(\Psi) \log(\nnz(\Psi)) \log(\epsilon^{-1}),
    \end{equation}
    such that 
    \begin{equation}\label{eq:rnn2G_bitstring_proof}
        \decoderRNN(b) = \rnn_{\Psi}.
    \end{equation}
    Combining \eqref{eq:rnn2G_proof}, \eqref{eq:rnn2G_bitlength_proof}, \eqref{eq:rnn2G_bitstring_proof}, and Definition \ref{def:repre_sys}, upon noting that $G$ was chosen arbitrarily, it follows that $(\LFMset{w}, \opMetric)$ is representable by the canonical RNN decoder $\decoderRNN$, with the minimum number of required bits satisfying
    \begin{align}
        \Ldec{\epsilon}{\decoderRNN}{\LFMset{w}}{\opMetric}
        &\leq 2C_0 \nnz(\Psi) \log(\nnz(\Psi)) \log(\epsilon^{-1})\label{eq:rnn_decoder_length_0}\\
        &\leq  2C_0C(T+1)^{2} \left[\prod_{\ell=0}^{T} \left(\frac{12Dw[\ell]}{\epsilon} \right) \right] \log(\epsilon^{-1}) \nonumber\\
        &\phantom{\leq} \;\cdot \log\left( C(T+1)^2\left(\prod_{\ell=0}^{T} \left(\frac{12Dw[\ell]}{\epsilon}\right)\right) \right)\label{eq:rnn_decoder_length_1}\\
        &\leq 2C_0CC'(T+1)^{2} \left[\prod_{\ell=0}^{T} \left(\frac{12Dw[\ell]}{\epsilon} \right) \right] \log(\epsilon^{-1}) \nonumber\\
        & \phantom{\leq} \;\cdot\log\left((T+1)^{2}\prod_{\ell=0}^{T} \left(\frac{12Dw[\ell]}{\epsilon}\right)\right) \label{eq:rnn_decoder_length_2}\\
        &= C_1 M \log(M) \log(\epsilon^{-1}),\label{eq:rnn_decoder_length_3}
    \end{align}
    where \eqref{eq:rnn_decoder_length_0} is a consequence of \eqref{eq:rnn2G_bitlength_proof} and Definition \ref{def:repre_sys}, \eqref{eq:rnn_decoder_length_1} follows from \eqref{eq:rnn_nnz_bound_proof} and the fact that $2Dw[\ell]/ \epsilon > 1$, for $\ell \in \ssidedEnum{T}$, \eqref{eq:rnn_decoder_length_2} holds upon choosing the {\uc} $C'$ sufficiently large, namely s.t. 
    \begin{equation*}
        \log\left( C(T+1)^2\left(\prod_{\ell=0}^{T} \left(\frac{12Dw[\ell]}{\epsilon}\right)\right) \right) \leq C'\log\left((T+1)^2\left(\prod_{\ell=0}^{T} \left(\frac{12Dw[\ell]}{\epsilon}\right)\right) \right),
    \end{equation*}
    and \eqref{eq:rnn_decoder_length_3} follows by setting $C_1 = 2C_0CC'$.    
\end{proof}

\subsection{RNNs optimally \textcolor{black}{approximate} \gls{elfm} systems}\label{sec:rnn2elfm}
We now particularize the result in Corollary \ref{col:RNN2G} to \gls{elfm} systems, which allows us to determine the growth rate, with respect to $\epsilon$, of the minimum number of bits $\Ldec{\epsilon}{\decoderRNN}{\LFMset{\weightSeqExp{a}{b}}}{\opMetric}$ \textcolor{Iter2Color}{needed to represent $(\LFMset{\weightSeqExp{a}{b}},\opMetric)$ by the canonical RNN decoder $\decoderRNN$}. A comparison with the description complexity of $(\LFMset{\weightSeqExp{a}{b}},\opMetric)$ established in Lemma \ref{lm:dim_ELFM} then leads to the conclusion that RNNs are capable of \textcolor{black}{approximating} \gls{elfm} systems in metric-entropy-optimal manner.
\begin{theorem}
\label{th:RNN_ELFM}
Let $a\in (0,1]$ and $b>0$. The class of \gls{elfm} systems (Definition \ref{def:weight_exp}) $(\LFMset{\weightSeqExp{a}{b}},\opMetric)$ 
is optimally representable by the canonical RNN decoder $\decoderRNN$ (Definition \ref{def:canonical_rnn_decoder}).
\end{theorem}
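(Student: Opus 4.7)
The plan is to combine the description-complexity characterization in Lemma \ref{lm:dim_ELFM} with the \gls{rnn} construction quantified in Corollary \ref{col:RNN2G}, and then use Lemma \ref{lm:foundamental_limit} as the matching lower bound to arrive at equality in the sense of Definition \ref{def:metric_entropy_optimal}. Concretely, Lemma \ref{lm:dim_ELFM} tells us that $(\LFMset{\weightSeqExp{a}{b}},\opMetric)$ has order $\order=1$, type $\type=2$, and generalized dimension $\genDim=\frac{1}{2b\log(e)}$, so it suffices to prove that
\begin{equation*}
\limsup_{\epsilon\to 0}\frac{\log\,\Ldec{\epsilon}{\decoderRNN}{\LFMset{\weightSeqExp{a}{b}}}{\opMetric}}{\log^{2}(\epsilon^{-1})}\;\leq\;\frac{1}{2b\log(e)},
\end{equation*}
because Lemma \ref{lm:foundamental_limit} applied with $\order=1$ and $\type=2$ supplies the matching inequality $\geq \genDim$.

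To establish the upper bound I would apply Corollary \ref{col:RNN2G} with the weight sequence $w=\weightSeqExp{a}{b}$, yielding
\begin{equation*}
\Ldec{\epsilon}{\decoderRNN}{\LFMset{\weightSeqExp{a}{b}}}{\opMetric}\;\leq\; C_{1}\,M\log(M)\log(\epsilon^{-1}),
\end{equation*}
where $M=(T+1)^{2}\prod_{\ell=0}^{T}\tfrac{12Da\,e^{-b\ell}}{\epsilon}$ and $T=\max\{\ell\in\mathbb{N}\mid ae^{-b\ell}>\tfrac{\epsilon}{2D}\}$. The key step is then to invoke Lemma \ref{lm:estimation} with $c=12D$ and $d=2D$ to obtain
\begin{equation*}
\log\!\left(\prod_{\ell=0}^{T}\tfrac{12Da\,e^{-b\ell}}{\epsilon}\right)\;=\;\tfrac{1}{2b\log(e)}\log^{2}(\epsilon^{-1})+\smallo\!\left(\log^{2}(\epsilon^{-1})\right),
\end{equation*}
while observing that $T=\bigo(\log(\epsilon^{-1}))$ forces $\log(T+1)=\bigo(\log^{(2)}(\epsilon^{-1}))$, so the prefactor $(T+1)^{2}$ only contributes a $\bigo(\log^{(2)}(\epsilon^{-1}))$ term inside $\log M$. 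Therefore $\log M=\tfrac{1}{2b\log(e)}\log^{2}(\epsilon^{-1})+\smallo(\log^{2}(\epsilon^{-1}))$.

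From here the remainder is bookkeeping. Since $\log M=\asymporder(\log^{2}(\epsilon^{-1}))$, we have $\log\log M=\bigo(\log^{(2)}(\epsilon^{-1}))$, and consequently
\begin{equation*}
\log\,\Ldec{\epsilon}{\decoderRNN}{\LFMset{\weightSeqExp{a}{b}}}{\opMetric}\;\leq\;\log(C_{1})+\log M+\log\log M+\log\log(\epsilon^{-1})\;=\;\tfrac{1}{2b\log(e)}\log^{2}(\epsilon^{-1})+\smallo(\log^{2}(\epsilon^{-1})).
\end{equation*}
Dividing by $\log^{2}(\epsilon^{-1})$ and letting $\epsilon\to 0$ yields the desired one-sided bound. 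Combining with the lower bound from Lemma \ref{lm:foundamental_limit} (valid since Corollary \ref{col:RNN2G} already establishes representability by $\decoderRNN$) gives equality in \eqref{eq:def_metric_entropy_optimal}, which by Definition \ref{def:metric_entropy_optimal} is exactly the claim. The only mildly delicate point is the absorption of the $(T+1)^{2}$, $\log(M)$ and $\log(\epsilon^{-1})$ factors into the $\smallo(\log^{2}(\epsilon^{-1}))$ error term; all of them are polylogarithmic in $\epsilon^{-1}$, so they are indeed dominated by the leading $\log^{2}(\epsilon^{-1})$ contribution extracted from Lemma \ref{lm:estimation}.
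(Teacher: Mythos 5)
Your proposal is correct and follows essentially the same route as the paper: invoke Corollary \ref{col:RNN2G} for the bit-length bound, use Lemma \ref{lm:estimation} (with $T+1=\bigo(\log(\epsilon^{-1}))$) to show $\log M=\tfrac{1}{2b\log(e)}\log^{2}(\epsilon^{-1})+\smallo(\log^{2}(\epsilon^{-1}))$, and match against the lower bound from Lemmata \ref{lm:dim_ELFM} and \ref{lm:foundamental_limit}. The only difference is presentational — you state the one-sided limsup bound explicitly before combining with the fundamental limit, whereas the paper computes the limit of the upper-bound expression directly — and the bookkeeping is handled correctly in both.
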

\begin{proof}
By Corollary \ref{col:RNN2G}, the minimum number of bits $\Ldec{\epsilon}{\decoderRNN}{\LFMset{\weightSeqExp{a}{b}}}{\opMetric}$ needed to represent $(\LFMset{\weightSeqExp{a}{b}},\opMetric)$ by the canonical RNN decoder $\decoderRNN$ satisfies
\begin{equation}\label{eq:number_bit}
\Ldec{\epsilon}{\decoderRNN}{\LFMset{\weightSeqExp{a}{b}}}{\opMetric} \leq C_1 M \log(M) \log(\epsilon^{-1}),
\end{equation}
where 
\begin{align*}    
    M &\defeq (T+1)^{2} \prod_{\ell=0}^{T} \left(\frac{12\bound \weightSeqExp{a}{b}[\ell]}{\epsilon}\right),\\
    T &\defeq  \max \left\{ \ell \in \Nzero \Bigm| \weightSeqExp{a}{b}[\ell]> \frac{\epsilon}{2\bound} \right\}.
\end{align*}
By Lemma \ref{lm:dim_ELFM}, $(\LFMset{\weightSeqExp{a}{b}},\opMetric)$ is of order $1$ and type $2$, with generalized dimension $$\genDim{} = \frac{1}{2b \log (e)}.$$ 
Thus, by Lemma \ref{lm:foundamental_limit} and Definition \ref{def:metric_entropy_optimal}, it suffices to prove that 
\begin{equation}\label{eq:need_prove_exp}
    \lim_{\epsilon \rightarrow 0} \frac{\log \left(C_1 M \log(M) \log(\epsilon^{-1})\right)}{\log^2 (\epsilon^{-1})} = \frac{1}{2b \log (e)}.
\end{equation}
To this end, we first note that
\begin{align}
    \log(M) & = 2\log(T+1) + \log\left( \prod_{\ell=0}^{T} \left(\frac{12\bound \weightSeqExp{a}{b}[\ell]}{\epsilon}\right) \right)\\
    & = \bigo(\log^{(2)}(\epsilon^{-1})) + \frac{1}{2b \log (e)} \log^{2}(\epsilon^{-1}) + \smallo\left(\log^{2}(\epsilon^{-1})\right)\label{eq:est_logM}\\
    & = \frac{1}{2b \log (e)} \log^{2}(\epsilon^{-1}) + \smallo\left(\log^{2}(\epsilon^{-1})\right),
\end{align}
where \eqref{eq:est_logM} follows from Lemma \ref{lm:estimation} and $T + 1= \bigo(\log(\epsilon^{-1}))$ thanks to \eqref{eq:T_elfm}. Now, we rewrite the numerator in \eqref{eq:need_prove_exp} according to
\begin{align}
    \log \left(C_1 M \log(M) \log(\epsilon^{-1})\right) &= \log (M) + \log^{(2)}(M) + \smallo\left(\log^2(\epsilon^{-1})\right) \label{eq:est_numerator_1}\\
    &=\frac{1}{2b \log (e)} \log^{2}(\epsilon^{-1}) + \smallo\left(\log^{2}(\epsilon^{-1})\right)  \nonumber\\
    &\phantom{=} \;+ \log\left(\frac{1}{2b \log (e)} \log^{2}(\epsilon^{-1}) + \smallo\left(\log^{2}(\epsilon^{-1})\right)\right)  \\
    &= \frac{1}{2b \log (e)} \log^{2}(\epsilon^{-1}) + \smallo\left(\log^{2}(\epsilon^{-1})\right).\label{eq:est_numerator}
\end{align}

Dividing \eqref{eq:est_numerator} by $\log^2(\epsilon^{-1})$ and taking $\epsilon \rightarrow 0$, concludes the proof.
\end{proof}

\subsection{RNNs optimally \textcolor{black}{approximate} \gls{plfm} systems}
We next particularize the result in Corollary \ref{col:RNN2G} to \gls{plfm} systems and will see that
the minimum number of bits $\Ldec{\epsilon}{\decoderRNN}{\LFMset{\weightSeqPoly{p}{q}}}{\opMetric}$ required to represent $(\LFMset{\weightSeqPoly{p}{q}},\opMetric)$ by the canonical RNN decoder $\decoderRNN$ grows significantly faster (with respect to the prescribed error $\epsilon$) than for \gls{elfm} systems. This can be attributed to the fact that the memory of \gls{plfm} systems decays much more slowly.
% and the increased complexity reflects this longer memory. 
Nonetheless, as shown next, RNNs are capable of \textcolor{black}{approximating} \gls{plfm} systems in metric-entropy-optimal manner.
\begin{theorem}
\label{th:RNN_PLFM}
    Let $q\in (0,1]$ and $p>0$. The class of \gls{plfm} systems $(\LFMset{\weightSeqPoly{p}{q}},\opMetric)$ (Definition \ref{def:weight_poly}) 
    is optimally representable by the canonical RNN decoder $\decoderRNN$ (Definition \ref{def:canonical_rnn_decoder}).
\end{theorem}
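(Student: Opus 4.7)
My plan is to mirror the proof of Theorem \ref{th:RNN_ELFM}, making the appropriate adjustments for the different order, type, and generalized dimension of the PLFM class. Specifically, Corollary \ref{col:RNN2G} applied to the weight sequence $\weightSeqPoly{p}{q}$ yields
\[
    \Ldec{\epsilon}{\decoderRNN}{\LFMset{\weightSeqPoly{p}{q}}}{\opMetric} \leq C_1 M \log(M) \log(\epsilon^{-1}),
\]
where $M = (T+1)^2 \prod_{\ell=0}^T \left(\frac{12D \weightSeqPoly{p}{q}[\ell]}{\epsilon}\right)$ and $T = \max\{\ell \in \N \mid \weightSeqPoly{p}{q}[\ell] > \epsilon/(2D)\}$. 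Since Lemma \ref{lm:dim_PLFM} tells us that $(\LFMset{\weightSeqPoly{p}{q}},\opMetric)$ has order $2$, type $1$, and generalized dimension $1/p$, Definition \ref{def:metric_entropy_optimal} combined with Lemma \ref{lm:foundamental_limit} reduces the task to verifying
\[
    \lim_{\epsilon \to 0} \frac{\log^{(2)}\!\left(C_1 M \log(M) \log(\epsilon^{-1})\right)}{\log(\epsilon^{-1})} = \frac{1}{p}.
\]

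To this end, I first extract the rough size of $T$ directly from the definition: the condition $q/(1+T)^p > \epsilon/(2D)$ gives $T+1 = \asymporder(\epsilon^{-1/p})$, so in particular $\log(T+1) = \bigo(\log(\epsilon^{-1}))$. Next, I invoke Lemma \ref{lm:estimation_poly} with $c = 12D$ and $d = 2D$ to obtain
\[
    \log\!\left(\prod_{\ell=0}^{T} \frac{12D\, \weightSeqPoly{p}{q}[\ell]}{\epsilon}\right) = \asymporder(\epsilon^{-1/p}).
\]
Combining this with the logarithmic estimate on $T+1$ yields $\log(M) = 2\log(T+1) + \asymporder(\epsilon^{-1/p}) = \asymporder(\epsilon^{-1/p})$, since the polynomial term dominates the logarithmic one.

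With $\log(M) = \asymporder(\epsilon^{-1/p})$ established, I compute
\[
    \log\!\left(C_1 M \log(M) \log(\epsilon^{-1})\right) = \log(M) + \log^{(2)}(M) + \bigo(\log^{(2)}(\epsilon^{-1})) = \asymporder(\epsilon^{-1/p}),
\]
since the $\log(M)$ term dominates. Applying $\log$ once more then gives
\[
    \log^{(2)}\!\left(C_1 M \log(M) \log(\epsilon^{-1})\right) = \log\!\left(\asymporder(\epsilon^{-1/p})\right) = \tfrac{1}{p}\log(\epsilon^{-1}) + \bigo(1),
\]
and dividing by $\log(\epsilon^{-1})$ and letting $\epsilon \to 0$ yields the desired limit $1/p$. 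Together with Lemma \ref{lm:foundamental_limit}, this establishes the optimality claim.

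I do not anticipate a serious obstacle, as the machinery is entirely parallel to the ELFM proof; the only substantive change is that PLFM systems sit at order $2$ and type $1$ (rather than order $1$ and type $2$), so the matching condition requires two iterated logarithms in the numerator and only a single $\log(\epsilon^{-1})$ in the denominator. The main point requiring care is ensuring that the dominant asymptotic term in $\log(M)$ is indeed the $\asymporder(\epsilon^{-1/p})$ contribution from Lemma \ref{lm:estimation_poly} rather than the $\log(T+1)$ contribution, but this is immediate from $p > 0$.
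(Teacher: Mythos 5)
Your proposal is correct and follows essentially the same route as the paper's proof: Corollary \ref{col:RNN2G} for the bit-length bound, Lemma \ref{lm:dim_PLFM} for order/type/dimension, Lemma \ref{lm:estimation_poly} together with the estimate $T+1=\asymporder(\epsilon^{-1/p})$ to get $\log(M)=\asymporder(\epsilon^{-1/p})$, and then two iterated logarithms divided by $\log(\epsilon^{-1})$ to match $1/p$, with Lemma \ref{lm:foundamental_limit} supplying the converse bound. The only cosmetic difference is that you make the step $\log\left(\asymporder(\epsilon^{-1/p})\right)=\tfrac{1}{p}\log(\epsilon^{-1})+\bigo(1)$ explicit, which the paper leaves implicit.
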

\begin{proof}
By Corollary \ref{col:RNN2G}, the minimum number of bits $\Ldec{\epsilon}{\decoderRNN}{\LFMset{\weightSeqPoly{p}{q}}}{\opMetric}$ needed to represent $(\LFMset{\weightSeqPoly{p}{q}},\opMetric)$ by the canonical RNN decoder $\decoderRNN$ satisfies
\begin{equation}\label{eq:number_bit_poly}
\Ldec{\epsilon}{\decoderRNN}{\LFMset{\weightSeqPoly{p}{q}}}{\opMetric} \leq C_1 M \log(M) \log(\epsilon^{-1}),
\end{equation}
where 
\begin{align*}    
    M &\defeq (T+1)^{2} \prod_{\ell=0}^{T} \left(\frac{12\bound \weightSeqPoly{p}{q}[\ell]}{\epsilon}\right),\\
    T &\defeq  \max \left\{ \ell \in \Nzero \Bigm| \weightSeqPoly{p}{q}[\ell]> \frac{\epsilon}{2\bound} \right\}.
\end{align*}
By Lemma \ref{lm:dim_PLFM}, $(\LFMset{\weightSeqPoly{p}{q}},\opMetric)$ is of order $2$ and type $1$, with generalized dimension $$\genDim{} = \frac{1}{p}.$$ 
Thus, by Lemma \ref{lm:foundamental_limit} and Definition \ref{def:metric_entropy_optimal}, it suffices to prove that 
\begin{equation}\label{eq:need_prove_poly}
    \lim_{\epsilon \rightarrow 0} \frac{\log^{(2)} \left(C_1 M \log(M) \log(\epsilon^{-1})\right)}{\log (\epsilon^{-1})} = \frac{1}{p}.
\end{equation}
To this end, we first note that
\begin{align}
    \log(M) & = 2\log(T+1) + \log\left( \prod_{\ell=0}^{T} \left(\frac{12\bound \weightSeqPoly{p}{q}[\ell]}{\epsilon}\right) \right)\\
    & = \bigo(\log(\epsilon^{-1})) + \asymporder(\epsilon^{-1/p})\label{eq:est_logM_poly}\\
    &= \asymporder(\epsilon^{-1/p}),
\end{align}
where \eqref{eq:est_logM_poly} follows from Lemma \ref{lm:estimation_poly} and \eqref{eq:T_plfm}. Now, we rewrite the numerator in \eqref{eq:need_prove_poly} according to
\begin{align}
    \log^{(2)} \left(C_1 M \log(M) \log(\epsilon^{-1})\right) = ~& \log\left(\log (M) + \log^{(2)}(M) + \smallo\left(\log(\epsilon^{-1})\right) \right)\label{eq:est_numerator_poly_1}\\
    =~ & \log\left(\asymporder(\epsilon^{-1/p}) + \log\left(\asymporder(\epsilon^{-1/p})\right) + \smallo\left(\log(\epsilon^{-1})\right)\right)\\ 
    =~ & \log\left(\asymporder(\epsilon^{-1/p})\right)\label{eq:est_numerator_poly_2},
\end{align}
where 
\eqref{eq:est_numerator_poly_2} follows from 
\begin{equation*}
    \asymporder(\epsilon^{-1/p}) + \log\left(\asymporder(\epsilon^{-1/p})\right) + \smallo\left(\log(\epsilon^{-1})\right) = \asymporder(\epsilon^{-1/p}).
\end{equation*}

Finally, dividing \eqref{eq:est_numerator_poly_2} by $\log(\epsilon^{-1})$ and taking $\epsilon \rightarrow 0$, concludes the proof.
\end{proof}
\section{Conclusion}

Returning to Table \ref{tbl:nn}, we note that it can be complemented by our results for 
\gls{elfm} and \gls{plfm} systems as summarized in Table \ref{tbl:rnn} below. 
As both classes of systems in Table \ref{tbl:rnn} are optimally representable by the canonical \gls{rnn} decoder (Definition \ref{def:canonical_rnn_decoder}), we can conclude that, remarkably, the metric-entropy-optimal universality of 
\gls{relu} networks extends from function classes to nonlinear dynamical systems, 
in the latter case simply by embedding the \gls{relu} network in a recurrence.
% Moreover, the description complexity of the LFM systems to be \textcolor{black}{approximated} can be matched simply by adjusting the complexity of the inner ReLU network of the RNN.

{
\linespread{1.5}
\begin{table}[H]
\centering
\begin{tabular}{|ll|ll|l|l|l|}
\hline
 & Metric &                                                                         $\mathcal{C}$                      &                                & $\order$ & $\type$ & $\genDim$                                        \\ \hline
       $\{\R^\ZZ\to\R^\ZZ\}$        &  $\opMetric$ (Def. \ref{def:metric})     &  \gls{elfm} systems (Section \ref{sec:ELFM_rate}) &   $\LFMset{\weightSeqExp{a}{b}}$                                                                                                                                            &  1    &  2    &     $\frac{1}{2b\log{(e)}}$                                                         \\
       $\{\R^\ZZ\to\R^\ZZ\}$        &  $\opMetric$ (Def. \ref{def:metric})     &  \gls{plfm} systems (Section \ref{sec:PLFM_rate})    &    $\LFMset{\weightSeqPoly{p}{q}}$                                                                                                                                            & 2   &  1    &     $\frac{1}{p}$                                                         \\
       \hline
\end{tabular}
\vspace{0.5em}
\caption{Scaling behavior of the covering numbers (Definition \ref{def:order_type_dim}) for classes of nonlinear dynamical systems. }
\label{tbl:rnn}
\end{table}
}

Finally, we remark that many of the results in this paper apply to general weight sequences $w[\cdot]$, specifically the bounds 
in Section \ref{sec:app_rate_lfms} on the exterior covering number of $(\LFMset{w}, \opMetric)$ as well as the \gls{rnn} constructions in Section \ref{sec:RNN2G}.
\clearpage

% \section{Appendix}
%%%%%%%%%% Appendix %%%%%%%%%%%%%%%%%%%%
\appendix
\section{Representing \gls{relu} networks by bitstrings}\label{app:define_nn_bitstring}
\newcommand{\qnet}{{\Phi}}
\newcommand{\Mtmp}{M}

\begin{definition}\label{def:bitstring_orga}
    
Let $\qnet$ be a \gls{relu} network with $(m, \epsilon)$-quantized weights. Denote the number of non-zero weights by $\Mtmp \coloneqq \nnz(\qnet)$ 
and the depth by $L\coloneqq \depth(\qnet)$. 
We organize the bitstring representation of $\qnet$ in $6$ segments as follows.

\begin{enumerate}[label=\lbrack Segment \arabic*\rbrack, leftmargin=*]
    \item \label{seg1} The bitstring starts with $\Mtmp{}$ 1's followed by a single 0. 
    \item $L$ is specified in binary representation. As $L \leq \Mtmp{}$, it suffices to allocate $\ceil{\log(\Mtmp{})}$ bits. 
    \item $N_0, \dots, N_L \leq M$ are specified in binary representation using a total of $(\Mtmp{}+1) \ceil{\log(\Mtmp{})}$ bits.
    \item The topology of the network, i.e., the locations of the non-zero entries in the $A_\ell$ and $b_\ell$, $\ell\in \{1, \dots, L\}$, is encoded as follows. We denote the bitstring corresponding to the binary representation of an integer $i\in\{1, \dots, M\}$ by $b(i)\in \{0, 1\}^{\ceil{\log(\Mtmp{})}}$.
    For $\ell \in \{1, \dots, L\}$, $i \in \{1, \dots, N_\ell\}$, $j\in\{1, \dots, N_{\ell-1}\}$, a non-zero entry $(A_\ell)_{ij}$ is indicated by $[b(\ell), b(i), b(j)]$ and a non-zero entry $(b_\ell)_i$ by $[b(\ell), b(i), b(i)]$.
    Thus, encoding the topology of the network requires a total of $3\ceil{\log(\Mtmp{})}M$ bits.
    \item The quantity $m\ceil{\log(\epsilon^{-1})}$ is represented by a bitstring of that many 1's followed by a single 0.
    \item The value of each non-zero weight and bias is represented by a bitstring of length $B_\epsilon = 2m\ceil{\log{\left(\epsilon^{-1}\right)}}+1$.
\end{enumerate}
The overall length of the bitstring is now given by 
\begin{align}\label{eq:exact_bitstring_length}
    \underbrace{\Mtmp{}+1}_{\textrm{Segment 1}} +
    \underbrace{\ceil{\log(\Mtmp{})}}_{\textrm{Segment 2}} + 
    \underbrace{(\Mtmp{}+1) \ceil{\log(\Mtmp{})}}_{\textrm{Segment 3}} + 
    \underbrace{3\ceil{\log(\Mtmp{})} M}_{\textrm{Segment 4}} + 
    \underbrace{m\ceil{\log(\epsilon^{-1})}+1}_{\textrm{Segment 5}} + 
    \underbrace{\Mtmp{} B_\epsilon }_{\textrm{Segment 6}} \hspace{-0.2cm}.
\end{align}

The \gls{relu} network $\qnet$ can be recovered by sequentially reading out $M$,$L$, the $N_\ell$, the topology, the quantity $m\ceil{\log(\epsilon^{-1})}$, and the quantized weights from the overall bitstring.  It is not difficult to verify that the bitstring is crafted such that this yields unique decodability. 
\end{definition}

%%%%%%%%%%%% section %%%%%%%%%%%%%%%%%%%%%%%%%%%%

%%%%%%%%%%%% section %%%%%%%%%%%%%%%%%%%%%%%%%%%%

%%%%%%%%%%%%%%% Section %%%%%%%%%%%%%%%%%
\section{Comparison with \cite{deepAT2019}}\label{app:compare_to_dennis}
As mentioned in Section \ref{sec:def_opt}, compared to \cite{deepAT2019}, we use refined notions of massiveness of sets, as the system classes we consider are significantly more massive than the function classes dealt with in \cite{deepAT2019}.
We next detail how the results from \cite{deepAT2019} fit into our framework. 

In \cite{deepAT2019} the scaling behavior of covering numbers is quantified in terms of the optimal exponent $\optExponent$ \cite[Definition IV.1]{deepAT2019}. The following result relates $\optExponent$ to the generalized dimension (Definition \ref{def:order_type_dim}) employed here.

\begin{lemma}\label{lm:gen_dimension_vs_exponent}
Let $d\in \Nplus,\; \Omega\subseteq \R^d$, and let $\mathcal{C}\subset \Ltwo(\Omega)$ be compact and such that the optimal exponent $\optExponent(\entSet)$ according to \cite[Definition IV.1]{deepAT2019} is finite and non-zero. Then, $\entSet$ is, with respect to the metric $\rho(f,g):= \norm{f-g}_{\Ltwo(\Omega)}$, of order $\kappa=1$, type $\type=1$, and generalized dimension 
\[
    \genDim = \frac{1}{\optExponent}.
\]    
\end{lemma}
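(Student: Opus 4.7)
The plan is to unpack the definition of optimal exponent from \cite[Definition IV.1]{deepAT2019} and translate it into a condition on the asymptotics of the interior covering number, which can then be related to the exterior covering number via Lemma \ref{lm:number_relation}.

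First I would recall that \cite[Definition IV.1]{deepAT2019} defines the optimal exponent $\gamma^*(\entSet)$ in terms of the min-max code length for encoding elements of $\entSet$ to distortion $\epsilon$ in the $\Ltwo(\Omega)$-norm. By Kolmogorov-Donoho rate-distortion duality (which is implicit in \cite{deepAT2019}), this code length scales as the metric entropy $\log N(\epsilon; \entSet, \rho)$ up to universal constants. Hence, $\gamma^*(\entSet)$ is characterized by
\begin{equation*}
    \frac{1}{\gamma^*(\entSet)} = \limsup_{\epsilon\to 0} \frac{\log^{(2)} N(\epsilon; \entSet, \rho)}{\log(\epsilon^{-1})},
\end{equation*}
so that the hypothesis ``$\gamma^*(\entSet) \in (0, \infty)$'' is equivalent to the polynomial scaling $\log N(\epsilon) \asymp \epsilon^{-1/\gamma^*}$.

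Next I would move from the interior to the exterior covering number using Lemma \ref{lm:number_relation}, which yields the chain
\begin{equation*}
    \log N^{\text{ext}}(\epsilon;\entSet,\rho) \;\leq\; \log N(\epsilon;\entSet,\rho) \;\leq\; \log P(\epsilon;\entSet,\rho) \;\leq\; \log N^{\text{ext}}(\epsilon/2;\entSet,\rho).
\end{equation*}
Applying $\log$ once more and dividing by $\log(\epsilon^{-1})$, the left-hand side tends (in $\limsup$) to $\limsup_\epsilon \log^{(2)} N^{\text{ext}}(\epsilon)/\log(\epsilon^{-1})$, and on the right-hand side the substitution $\epsilon' = \epsilon/2$ differs from the untransformed quantity only by the additive term $\log(2)$ in the denominator, which vanishes in the $\limsup$. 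Consequently,
\begin{equation*}
    \limsup_{\epsilon\to 0} \frac{\log^{(2)} N^{\text{ext}}(\epsilon;\entSet,\rho)}{\log(\epsilon^{-1})} = \limsup_{\epsilon\to 0} \frac{\log^{(2)} N(\epsilon;\entSet,\rho)}{\log(\epsilon^{-1})} = \frac{1}{\gamma^*(\entSet)}.
\end{equation*}

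Since $\gamma^*(\entSet)$ is assumed finite and non-zero, the right-hand side is a finite, strictly positive real number, which by Definition \ref{def:order_type_dim} certifies that $(\entSet,\rho)$ is of order $\order=1$ and type $\type=1$, with generalized dimension $\genDim = 1/\gamma^*(\entSet)$. The main obstacle is the first paragraph, that is, justifying the passage from the encoder-decoder formulation of the optimal exponent in \cite[Definition IV.1]{deepAT2019} to the metric entropy characterization displayed above. This is essentially standard rate-distortion duality; in the write-up I would either cite the corresponding equivalence result in \cite{deepAT2019} or include a short self-contained derivation establishing that a $\log N(\epsilon)$-bit code suffices, and is necessary, to represent $\entSet$ to $\epsilon$-precision. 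Everything else reduces to elementary manipulations of $\limsup$.
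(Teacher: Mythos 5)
Your outline follows essentially the same route as the paper: both reduce the encoder/decoder definition of $\optExponent$ in \cite[Definition IV.1]{deepAT2019} to a covering-number characterization by appeal to the known Kolmogorov--Donoho equivalence (the paper simply cites \cite[Remark 5.10]{grohs2015book}, which gives $\optExponent = \sup \{ \gamma > 0 : \log \coveringExt(\epsilon ; \entSet, \genMetric) \in \bigO{\epsilon^{-1/\gamma}} \}$), and then finish with $\limsup$ bookkeeping. Two caveats. First, duality (or the citation) only buys you this sup-over-admissible-exponents form; the exact identity $1/\optExponent = \limsup_{\epsilon\to 0} \log^{(2)}\covering(\epsilon;\entSet,\genMetric)/\log(\epsilon^{-1})$, which you display as an immediate consequence, is precisely what the paper's proof then establishes via a short two-sided argument: for every $\Delta>0$ the exponent $\optExponent-\Delta$ is admissible, which bounds the $\limsup$ by $1/(\optExponent-\Delta)$, and conversely the $\limsup$ bound yields $\log \coveringExt(\epsilon;\entSet,\genMetric)\le \epsilon^{-(\genDim+\Delta)}$ for small $\epsilon$, whence $\optExponent \ge (\genDim+\Delta)^{-1}$; this step is indeed elementary, as you say, but it should appear explicitly rather than being folded into ``duality''. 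Second, since the cited characterization is already stated for the exterior covering number, the paper needs no interior-to-exterior transfer; your transfer via Lemma \ref{lm:number_relation} is correct but only necessary if you insist on starting from $\covering$ rather than $\coveringExt$. Finally, your parenthetical claim that $\optExponent\in(0,\infty)$ is \emph{equivalent} to the two-sided scaling $\log \covering(\epsilon;\entSet,\genMetric) \asymp \epsilon^{-1/\optExponent}$ is false (e.g.\ $\log \covering(\epsilon;\entSet,\genMetric)=\epsilon^{-1}\log(\epsilon^{-1})$ has optimal exponent $1$ but is not $\bigO{\epsilon^{-1}}$); fortunately that remark is not used anywhere in your argument.
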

\begin{proof}
\newcommand{\tmpDim}{D}
For order $\order=1$ and type $\type=1$, the generalized dimension is given by
\begin{equation}\label{teq:95}
    \genDim =\limsup _{\epsilon \rightarrow 0} \frac{\log ^{(2)} \covering^{\text{ext}} (\epsilon ; \entSet, \genMetric)}{\log \left(\epsilon^{-1}\right)}.
\end{equation}
We note that by \cite[Remark 5.10, Equation (5.5)]{grohs2015optimally} and Lemma \ref{lm:number_relation}, it holds that
\begin{equation}
    \optExponent = \sup \left\{ \gamma > 0  : \log \coveringExt(\epsilon ; \entSet, \genMetric) \in \bigo({\epsilon^{-1/\gamma}}), \epsilon \to 0 \right\}. 
\end{equation}
We first establish that $\genDim\leq \frac{1}{ \optExponent}$.  To this end,  fix $\Delta>0$ arbitrarily and observe that $\log \coveringExt(\epsilon ; \entSet, \genMetric) \in \bigo({\epsilon^{-1/(\optExponent - \Delta)}})$. Hence, there exist  $\epsilon_0, C>0$ such that 
\[
    \log \coveringExt(\epsilon ; \entSet, \genMetric) \leq C \epsilon^{-1/(\optExponent-\Delta)}, \qquad \forall \epsilon \in (0, \epsilon_0),
\]
and thus 
\begin{align}
   \genDim = \limsup _{\epsilon \rightarrow 0} \frac{\log ^{(2)} \covering^{\text{ext}} (\epsilon ; \entSet, \genMetric)}{\log \left(\epsilon^{-1}\right)} 
   \leq \limsup _{\epsilon \rightarrow 0} \left( \frac{1}{\optExponent-\Delta} +  \frac{\log (C)}{\log \left(\epsilon^{-1}\right)} \right)
   =  \frac{1}{\optExponent-\Delta}.
\end{align}
As $\Delta>0$ was arbitrary, we have established that $\genDim \leq \frac{1}{\optExponent}$.

Next, we show that $\genDim \geq \frac{1}{\optExponent}$. Again, fix $\Delta >0$ arbitrarily. By \eqref{teq:95}, there exists an $\epsilon_0 > 0$ such that for all $ \epsilon \in (0, \epsilon_0)$,
\begin{equation}
     \frac{\log ^{(2)} \covering^{\text{ext}} (\epsilon ; \entSet, \genMetric)}{\log \left(\epsilon^{-1}\right)} \leq \genDim + \Delta.
\end{equation}
This implies 
\begin{equation}
    \log \covering^{\text{ext}} (\epsilon ; \entSet, \genMetric) \leq \epsilon^{-(\genDim+\Delta)}, \qquad \forall \epsilon \in (0, \epsilon_0),
\end{equation} 
and thus $\log \covering^{\text{ext}} (\epsilon ; \entSet, \genMetric) \in \bigo({\epsilon^{-1/(\genDim+\Delta)^{-1}}})$. Hence, $\optExponent \geq (\genDim+\Delta)^{-1}$. As $\Delta$ was arbitrary, this finalizes the proof. 
\end{proof}

Table \ref{tbl:nn} in the present paper now follows from \cite[Table I]{deepAT2019} by application of Lemma \ref{lm:gen_dimension_vs_exponent}. Furthermore, \cite{deepAT2019} shows through the transference principle \cite[Section VII]{deepAT2019} that a wide range of function classes, including those in Table \ref{tbl:nn}, are \emph{optimally representable by neural networks} \cite[Definition VI.5]{deepAT2019}. The following Lemma hence allows us to conclude that every row in Table \ref{tbl:nn} is optimally representable (according to Definition \ref{def:metric_entropy_optimal}) by the canonical neural network decoder.

\begin{lemma}\label{lm:dennis_nn_optimal}
Let $d\in \Nplus,\; \Omega\subseteq \R^d$, and let $\mathcal{C}\subset \Ltwo(\Omega)$ be compact. If the function class $\entSet\subset \Ltwo(\Omega)$ is \emph{optimally representable by neural networks} according to \cite[Definition VI.5]{deepAT2019}, then $(\entSet, \genMetric)$ is optimally representable by the canonical neural network decoder with respect to the metric $\rho(f,g)\coloneqq \norm{f-g}_{\Ltwo{(\Omega)}}$.
\end{lemma}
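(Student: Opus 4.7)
The plan is to verify the equality in Definition \ref{def:metric_entropy_optimal} by establishing the two matching inequalities separately. Because $\entSet$ is optimally representable by neural networks in the sense of \cite[Definition VI.5]{deepAT2019}, the optimal exponent $\optExponent(\entSet)$ is well defined (strictly positive and finite), so Lemma \ref{lm:gen_dimension_vs_exponent} applies and tells us that $(\entSet,\rho)$ is of order $\kappa=1$, type $\tau=1$, and generalized dimension $\genDim = 1/\optExponent$. The lower bound
\[
\limsup_{\epsilon\to 0}\frac{\log \Ldec{\epsilon}{\decoderNN}{\entSet}{\rho}}{\log(\epsilon^{-1})}\ \geq\ \frac{1}{\optExponent}
\]
is then immediate from Lemma \ref{lm:foundamental_limit}, so only the matching upper bound remains.

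For the upper bound, I would unpack \cite[Definition VI.5]{deepAT2019}: it guarantees that for every $\gamma<\optExponent$ there exist a fixed quantization parameter $m\in\N$ and a constant $C_\gamma>0$ such that, for every $f\in\entSet$ and every sufficiently small $\epsilon>0$, one can construct a \gls{relu} network $\Phi_{f,\epsilon}$ with $(m,\epsilon)$-quantized weights satisfying $\|\Phi_{f,\epsilon}-f\|_{\Ltwo(\Omega)}\leq\epsilon$ and $\nnz(\Phi_{f,\epsilon})\leq C_\gamma\,\epsilon^{-1/\gamma}$. Applying Remark \ref{rm:bitstring} to $\Phi_{f,\epsilon}$ yields a bitstring $b$ of length at most $C_0\,m\,\log(\epsilon^{-1})\,\nnz(\Phi_{f,\epsilon})\,\log(\nnz(\Phi_{f,\epsilon}))$ such that $\decoderNN(b)=\Phi_{f,\epsilon}$, hence $\rho(\decoderNN(b),f)\leq\epsilon$. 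Combining these estimates, Definition \ref{def:repre_sys} delivers
\[
\Ldec{\epsilon}{\decoderNN}{\entSet}{\rho}\ \leq\ C_0\,m\,C_\gamma\,\epsilon^{-1/\gamma}\,\log(\epsilon^{-1})\,\log\!\bigl(C_\gamma\,\epsilon^{-1/\gamma}\bigr).
\]

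Taking the logarithm of this inequality, dividing by $\log(\epsilon^{-1})$, and sending $\epsilon\to 0$, all polylogarithmic factors vanish and the bound reduces to $1/\gamma$. Letting $\gamma\uparrow\optExponent$ then yields the desired matching upper bound $1/\optExponent$, and together with the lower bound this proves the equality in Definition \ref{def:metric_entropy_optimal}, i.e., that $(\entSet,\rho)$ is optimally representable by $\decoderNN$.

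The main delicate point I anticipate is the faithful extraction from \cite[Definition VI.5]{deepAT2019} of the two ingredients that make Remark \ref{rm:bitstring} applicable: a \emph{uniform} (in $f$ and $\epsilon$) quantization parameter $m$, and the polynomial bound on $\nnz(\Phi_{f,\epsilon})$ in $\epsilon^{-1}$. Once those ingredients are pinned down, the translation from an approximation rate measured in weights to one measured in bits is automatic, the $\log(\epsilon^{-1})\log\nnz$ overhead being dominated by the polynomial rate $\epsilon^{-1/\gamma}$ and therefore asymptotically invisible under the order-$1$, type-$1$ scaling.
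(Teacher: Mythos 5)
Your proposal is correct and follows essentially the same route as the paper: Lemma \ref{lm:gen_dimension_vs_exponent} for order, type, and $\genDim = 1/\optExponent$, Lemma \ref{lm:foundamental_limit} for the lower bound, and the quantized-network construction of \cite{deepAT2019} plus Remark \ref{rm:bitstring} for the matching upper bound, with $\gamma \uparrow \optExponent$ at the end. The only caveat concerns the ``delicate point'' you flagged: \cite[Definition VI.5]{deepAT2019} does not hand you a quantization parameter $m$ that is uniform in $\epsilon$; the paper instead follows the proof of \cite[Theorem VI.4]{deepAT2019} to obtain networks with $(\ceil{\pi^*(\log(\epsilon^{-1}))},\epsilon)$-quantized weights, i.e., a quantization depth growing polylogarithmically in $\epsilon^{-1}$, which contributes only an extra $\log^{q_0}(\epsilon^{-1})$ factor to the bitstring length and hence leaves your limit computation unchanged.
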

\begin{proof}
\newcommand{\approxNN}{\widetilde{\Phi}_{\epsilon, f}}
\newcommand{\mWeights}{M_\epsilon}
First, we note that by Lemma \ref{lm:gen_dimension_vs_exponent}, $\mathcal{C}$ is of order $\kappa=1$, type $\type=1$, and has generalized dimension $\genDim = \frac{1}{\optExponent{(\mathcal{C})}}$. By \cite[Definition VI.5]{deepAT2019}, we have $\optExponent(\mathcal{C}) = \optExponentNN(\mathcal{C})$,
with $\optExponentNN(\mathcal{C})$ as per \cite[Definition VI.3]{deepAT2019}.
Next, fix $\Delta>0$ arbitrarily. Now, following the proof of \cite[Theorem VI.4, p. 2602]{deepAT2019} with $\optExponentNN - \Delta$ in place of $\gamma$, we can conclude the existence of a polynomial $\pi^*$, a constant $C$, and a mapping $\Psi: (0, \frac{1}{2}) \times \mathcal{C} \rightarrow \mathcal{N}_{d,1}$ with the following properties. For every $f\in\mathcal{C}$ and every $\epsilon \in (0, \frac{1}{2})$, the network $\approxNN = \Psi( \epsilon, f)$ has $(\ceil{\pi^*(\log(\epsilon^{-1}))}, \epsilon)$-quantized weights and satisfies
\[
\norm{f-\approxNN}_{\Ltwo(\Omega)} \leq \epsilon \quad \text{and} \quad \nnz(\approxNN) \leq C\epsilon^{-1/(\optExponentNN - \Delta)} =: \mWeights{}.
\] 
Thus, $\approxNN$ can, according to Remark \ref{rm:bitstring}, be reconstructed uniquely by the canonical neural network decoder $\nnDec$ from a bitstring of length no more than
\[
    C_0 \ceil{\pi^*(\log(\epsilon^{-1}))} \log(\epsilon^{-1}) \mWeights{} \log(\mWeights{}) .
\]
Therefore, $\mathcal{C}$ is representable by the canonical neural network decoder $\nnDec$ with
$
    \Ldec{\epsilon}{\decoderNN}{\mathcal{C}}{\genMetric} \leq C_0 \mWeights{} \log(\mWeights{}) \log^{q_0}(\epsilon^{-1}),
$
where $q_0$ is a constant depending on $\pi^*$ only,
and hence
\begin{align}
    \log\Ldec{\epsilon}{\decoderNN}{\mathcal{C}}{\genMetric}  \leq \frac{1}{\optExponentNN - \Delta}\log(\epsilon^{-1}) + \smallo(\log(\epsilon^{-1})).
\end{align}
As $\Delta>0$ was arbitrary, we thus get
\begin{align}
    \limsup_{\epsilon\to0} \frac{\Ldec{\epsilon}{\decoderNN}{\mathcal{C}}{\genMetric}}{\log(\epsilon^{-1})} \leq \frac{1}{\optExponentNN},
\end{align}
which, together with Lemma \ref{lm:foundamental_limit} and the fact that $\genDim = \frac{1}{\optExponent} = \frac{1}{\optExponentNN}$ implies that $\mathcal{C}$ is optimally representable by the canonical neural network decoder according to Definition \ref{def:metric_entropy_optimal}.
\end{proof}

%%%% Section Proofs %%%%%%%%%%%%%%
\section{Proofs}\label{sec:app_proof}
\subsection{Proof of Lemma \ref{lm:foundamental_limit}}\label{sec:proof_fundamental_limit}
\begin{proof}[\unskip\nopunct]%[Proof of Lemma \ref{lm:foundamental_limit}]
\newcommand{\tmpL}{L(\epsilon)}
To simplify notation, we let $\tmpL \coloneqq  \Ldec{\epsilon}{\decoderGen}{\entSet}{\genMetric} $.
We first establish that
\begin{equation}\label{eq:lowerbound_bitlength}
    \tmpL{} \geq \log \left(N^{\text{ext}}(\epsilon;\entSet, \genMetric)\right) -1, \quad \forall \epsilon > 0.
\end{equation}
By way of contradiction, assume that 
\begin{equation*}
    \quad 2^{\tmpL{} +1} < N^{\text{ext}}(\epsilon;\entSet, \genMetric), \quad \textrm{for some } \epsilon > 0.
\end{equation*}
It then follows from Definition \ref{def:representable} that for this $\epsilon$, for every $f\in\entSet$, there is an integer $\ell \leq {\tmpL{} }$ and a bitstring $\bitstring_f\in \{0,1\}^{\ell}$, such that $\genMetric(\decoderGen(\bitstring_f),f)\leq \epsilon$.
This directly implies that the set
\begin{equation*}
    \coveringset \defeq \left \{ \decoderGen(\bitstring)  \middle|\, \bitstring \in \bigcup_{\ell=0}^{\tmpL{} } \{0,1\}^\ell \right \},
\end{equation*}
is an $\epsilon$-net for $\entSet$. Furthermore, 
\begin{equation*}
    |\coveringset| \leq \sum_{\ell=0}^{{\tmpL{} }} 2^\ell = \frac{2^{{\tmpL{} }+1}-1}{2-1}  \leq 2^{{\tmpL{} }+1} < N^{\text{ext}}(\epsilon;\entSet, \genMetric).
\end{equation*}
Hence, $\coveringset$ is an $\epsilon$-net of cardinality strictly smaller than $N^{\text{ext}}(\epsilon;\entSet, \genMetric)$, which stands in contradiction to Definition \ref{def:both_covering_numbers} and so \eqref{eq:lowerbound_bitlength} must hold. This, in turn, implies that
\begin{align*}
    \limsup_{\epsilon \rightarrow 0} \frac{\log ^{(\kappa)} \tmpL{} }{\log ^{\type}(\epsilon^{-1})} &\geq  \limsup_{\epsilon \rightarrow 0}
    \frac{\log^{(\kappa)} \left(\log  \left(N^{\text{ext}}(\epsilon;\entSet, \opMetric)\right)  - 1 \right)}{\log ^{\type}(\epsilon^{-1})}\\
    &=  
    \limsup_{\epsilon \rightarrow 0} \frac{\log^{(\kappa+1)}   N^{\text{ext}}(\epsilon;\entSet, \opMetric)}{\log ^{\type}(\epsilon^{-1})}
    =  
    \genDim{},
\end{align*}
as desired.
\end{proof}

\subsection{Proof of Lemma \ref{lem:sup_over_N_same}}\label{app:proof_sub_over_N_same}
\begin{proof}[\unskip\nopunct]%[Proof of Lemma \ref{lem:sup_over_N_same}]
Fix $G, G'\in \LFMset{w}$.
First, note that
\begin{align}\label{proof:isometry1_lower}
    \opMetric(G, G') &= \sup_{x \in \sigSpaceplus} \sup_{t\in \Nzero} |(Gx)[t] - (G'x)[t]| \nonumber\\ 
    & \leq  \sup_{x \in \sigSpace} \sup_{t\in\ZZ} |(Gx)[t] - (G'x)[t]|.
\end{align}
Next, arbitrarily fix $\Delta>0$. 
By definition of the supremum, it follows that there exist $x_0\in \sigSpace$ and $ e\in \ZZ$, such that
\begin{equation}\label{eq:sup_def}
    \sup_{x \in \sigSpace} \sup_{t\in\ZZ} |(Gx)[t] - (G'x)[t]| - \Delta/2 \leq \left| (Gx_0)[e] - (G'x_0)[e] \right|.
\end{equation}
Since $\lim_{t\rightarrow \infty}w[t]=0$, there exists  $T>0$ so that $w[t] \leq \Delta / (4\bound)$, for all $t > T$.
Next, define 
\begin{equation*}
    x_1[t] = x_0[t-(T-e)] \in \sigSpace \quad \text{and} \quad
    y[t] = x_1[t]\cdot \indArg{t \geq 0} \in \sigSpaceplus.
\end{equation*}
We then get
\begin{equation}\label{teq:sig_difference}
    \sup_{\timeShift\geq 0} w[\timeShift]|(x_1[T-\timeShift]-y[T-\timeShift])| 
    = \sup_{\timeShift > T} w[\timeShift]|x_1[T-\timeShift]|
    \leq \sup_{\timeShift> T} w[\timeShift]\bound \leq \Delta / 4,
\end{equation}
where we used that $|x_1[\cdot]|\leq D$ by Definition \ref{def:signal_set}. We furthermore obtain 
\begin{align}
    |(Gx_1)[T] - (G'x_1)[T]| &\leq |(Gx_1)[T] - (Gy)[T]| 
    + |(G'x_1)[T] - (G'y)[T]| \nonumber\\
    &\phantom{\leq} \; + |(Gy)[T] - (G'y)[T]| \label{teq:triang}\\
    & \leq \Delta / 4 + \Delta / 4 + |(Gy)[T] - (G'y)[T]| \label{teq:use_lip} \\
    &= \Delta / 2 + |(Gy)[T] - (G'y)[T]| \label{eq:lip_inequality},
\end{align}
where in \eqref{teq:triang} we used the triangle inequality and in \eqref{teq:use_lip} we invoked the Lipschitz fading-memory property of $G$ and $G'$ in combination with \eqref{teq:sig_difference}.
Next, note that by time-invariance of $G$ and $G'$, it holds that
\begin{align}
\begin{split}
    (Gx_1)[T] &= (G\shiftOp{(T-e)}x_0)[T] = (\shiftOp{(T-e)}G x_0)[T] = (G x_0)[e],\label{eq:shift_G}\\
    (G'x_1)[T] &= (G'\shiftOp{(T-e)}x_0)[T] = (\shiftOp{(T-e)}G' x_0)[T] = (G' x_0)[e].
\end{split}
\end{align}

Combining all these results yields
\begin{align}\label{proof:isometry1_upper}
    \sup_{x \in \sigSpace} \sup_{t\in\ZZ} |(Gx)[t] - (G'x)[t]| - \Delta/2 
    \overset{\eqref{eq:sup_def}}&{\leq} \left| (Gx_0)[e] - (G'x_0)[e] \right| \nonumber\\
    \overset{\eqref{eq:shift_G}}&{=} \left| (Gx_1)[T] - (G'x_1)[T] \right| \nonumber\\
    \overset{\eqref{eq:lip_inequality}}&{\leq} \Delta / 2 + \left| (Gy)[T] - (G'y)[T] \right| \nonumber\\
    \overset{y\in\sigSpaceplus}&{\leq} \Delta/2 + \sup_{x \in \sigSpaceplus} \sup_{t\in \Nzero} |(Gx)[t] - (G'x)[t]| \nonumber\\
    \overset{\textrm{Def. \ref{def:metric}}}&{=} \Delta/2 + \opMetric(G,G'). 
\end{align}
As $\Delta>0$ was arbitrary, we have thus established that 
\[
\sup_{x \in \sigSpace} \sup_{t\in\ZZ} |(Gx)[t] - (G'x)[t]| \leq \opMetric(G,G'),
\]
which, together with \eqref{proof:isometry1_lower}, completes the proof. 
\end{proof}

\subsection{Proof of Lemma \ref{lem:g0_g_isomorphism}}\label{app:proof_g0_g_isomorphism}
\begin{proof}[\unskip\nopunct]%[Proof of Lemma \ref{lem:g0_g_isomorphism}]

Recall the projection operator $\projLeftSide: \sigSpace \rightarrow \sigSpaceminus$ defined according to
    \[
        (\projLeftSide x)[t] = x[t]\cdot \indArg{t \leq 0}.
    \]
First, we need to show that $\isometry: \LFMsetZero{w}\to \LFMset{w}$ given by
\[
    g \to \isometryOf{g}, \quad \textrm{with }  \isometryOfEvaluated{g}{x}[t] = g (\projLeftSide \shiftOp{-t} x),
\]
is a well-defined mapping from $\LFMsetZero{w}$ to $\LFMset{w}$, i.e., we need to verify that for every $g \in \LFMsetZero{w}$, indeed $\isometryOf{g} \in \LFMset{w}$. This will be done by establishing that $\isometryOf{g}$ satisfies the conditions of Definition \ref{def:lfm_sys_set}.
We first verify that $\isometryOf{g}$ is causal. Note that for every $T\in \ZZ$, for every $x,x'\in \sigSpace$ with $x[t]=x'[t]$, $\forall t\leq T$, it holds that $\projLeftSide\shiftOp{-T} x = \projLeftSide\shiftOp{-T} x'$, and hence
\[
\isometryOfEvaluated{g}{x}[T] = g(\projLeftSide\shiftOp{-T} x) = g(\projLeftSide\shiftOp{-T} x') = \isometryOfEvaluated{g}{x'}[T] .
\]
Thus, $\isometryOf{g}$ is, indeed, causal according to Definition \ref{def:causality}.
Next, we verify time-invariance as follows:
\begin{align*}
    (\shiftOp{\tau} (\isometryOf{g}(x)))[t] &= \isometryOfEvaluated{g}{x}[t-\tau] \\
    & = g(\projLeftSide \shiftOp{\tau-t} x) \\
    & = g(\projLeftSide \shiftOp{-t} \shiftOp{\tau}x) \\
    & = \isometryOfEvaluated{g}{\shiftOp{\tau}x}[t].
\end{align*}
The \acrlong{lfm} property of $\isometryOf{g}$ according to Definition \ref{def:lip_fading_memory} is established by noting that 
\begin{align}
    &\left|\isometryOfEvaluated{g}{x}[t] - \isometryOfEvaluated{g}{x'}[t]\right| \\
    &= |g(\projLeftSide \shiftOp{-t} x) - g(\projLeftSide \shiftOp{-t} x')| \notag\\ 
    & \leq \sup_{\tau\geq 0} w[\tau] |((\projLeftSide \shiftOp{-t} x)[-\tau] - (\projLeftSide \shiftOp{-t} x')[-\tau])| \label{teq:use_g0_in_G0} \\
    & = \sup_{\tau\geq 0} w[\tau] |( x[t-\tau] - x'[t-\tau])|, \quad \forall t \in \ZZ, \forall x,x' \in \sigSpace,\notag
\end{align}
where in \eqref{teq:use_g0_in_G0} we used \eqref{eq:nonLinSpace}. Finally, we observe that, by \eqref{eq:nonLinSpace}, 
\[
\isometryOfEvaluated{g}{0}[t]= g(\projLeftSide \shiftOp{-t} 0) = g(0) = 0,\; \forall t\in\ZZ .
\] 

In summary, we have thus shown that $\isometryOf{g}\in\LFMset{w}$ and hence $\isometry$ is, indeed, well-defined. Furthermore,  we have 
\begin{align}
    \opMetric(\isometryOf{g}, \isometryOf{g'}) & =
    \sup_{x \in \sigSpace} \sup_{t\in\mathbb{Z}} \left|\isometryOfEvaluated{g}{x}[t] - \isometryOfEvaluated{g'}{x}[t]\right|\label{teq:use_lemmaX}\\
    &= \sup_{x \in \sigSpace} \sup_{t\in\mathbb{Z}} |g(\projLeftSide \shiftOp{-t}x) - g'(\projLeftSide \shiftOp{-t}x)|\notag\\
    & = \sup_{y \in \sigSpace} |g(\projLeftSide y) - g'(\projLeftSide y)|\label{teq:shift_invariant}\\
    &= \sup_{z \in \sigSpaceminus} |g(z) - g'(z)|\notag\\
    &= \metricZero(g, g'), \qquad \forall g \in \LFMsetZero{w}, \forall g' \in \LFMsetZero{w}, \notag
\end{align}
where in \eqref{teq:use_lemmaX} we invoked Lemma \ref{lem:sup_over_N_same} and in \eqref{teq:shift_invariant} we used that $\sigSpace{}$ is closed under time shifts. This establishes that $\isometry{}$ is isometric and consequently injective.

Next, we prove that $\isometry$ is surjective. To this end, we fix $G \in \LFMset{w}$ arbitrarily, consider 
\begin{equation}\label{def:original_g0}
    g(s) \defeq (Gs)[0], \quad \forall s\in S_-,
\end{equation}
and show that $g \in \LFMsetZero{w}$ as well as $\isometryOf{g}=G$. First, we establish that $g\in\LFMsetZero{w}$.
The Lipschitz property of $g$ can be verified according to
\begin{align}
    |g(x) - g(x')| &= |(Gx)[0] - (Gx')[0]| \notag \\
    &\leq \sup_{\tau \geq 0} w[\tau]|(x[-\tau] - x'[-\tau])| \label{teq:use_lipschitz_property_of_G}\\
    &=\|x-x'\|_w, \quad \forall x,x'\in\sigSpaceminus, \label{teq:use_def_w_norm}
\end{align}
where in \eqref{teq:use_lipschitz_property_of_G} we used the fact that $G\in\LFMset{w}$ has Lipschitz fading memory (Definition \ref{def:lip_fading_memory}) and \eqref{teq:use_def_w_norm} is by \eqref{eq:w_norm}. Furthermore, we have $g(0) = (G0)[0] = 0$ and hence $g \in \LFMsetZero{w}.$ 

It remains to show that $\isometryOf{g} = G$. To this end, we fix $x \in \sigSpace$ and $t \in \mathbb{Z}$, both arbitrarily, and prove that
\[
\isometryOfEvaluated{g}{x}[t] = (Gx)[t].
\]
First, note that $(\projLeftSide \shiftOp{-t} x)[t'] = (\shiftOp{-t} x)[t']$, for all $t'\leq 0$. By causality of $G$, we conclude that $(G\projLeftSide \shiftOp{-t} x)[0] = (G \shiftOp{-t} x)[0]$, which yields
\begin{align}
    \isometryOfEvaluated{g}{x}[t]  &= g(\projLeftSide\shiftOp{-t} x) \notag 
    \\
    &= (G\projLeftSide\shiftOp{-t} x) [0] \label{teq:use_def_g0}\\
    &= (G\shiftOp{-t} x) [0] = (\shiftOp{-t}G x) [0]\label{teq:use_timeinvariance}\\
    &= (G x) [t], \notag
\end{align}
where in \eqref{teq:use_def_g0} we used \eqref{def:original_g0}, and in \eqref{teq:use_timeinvariance} we invoked the causality and the time-invariance of $G$. As $x\in\sigSpace$ and $t\in\ZZ$ were arbitrary, this proves that $\isometryOf{g} = G$. Furthermore, since $G\in\LFMset{w}$ was arbitrary, we have established that $\isometry$ is surjective. Thus, $\isometry$ is, indeed, an isometric isomorphism between $(\LFMset{w}, \opMetric)$ and $(\LFMsetZero{w}, \metricZero)$. Application of Lemma \ref{lm:ent_inv} then yields $\covering(\epsilon; \LFMsetZero{w}, \metricZero) = \covering(\epsilon; \LFMset{w}, \opMetric)$ and  $\packing(\epsilon; \LFMsetZero{w}, \metricZero) = \packing(\epsilon; \LFMset{w}, \opMetric)$.
\end{proof}

\subsection{Proof of Lemma \ref{lm:packing_number}}
\label{app:proof_packing_number}

The proof relies on the following auxiliary result.
\begin{lemma}\label{lem:packing_signal}
    \letWbeWeightSequence{} The packing number of $\sigSpaceminus$ w.r.t. $\norm{\cdot}_w$ satisfies
    \[
           \packing(\epsilon; \sigSpaceminus, \norm{\cdot}_w) \geq \prod_{\ell=0}^T \ceil{\frac{2\bound w[\ell]}{\epsilon}},
    \]
    with $T \defeq max\{T' \in \Nzero \mid w[T'] > \frac{\epsilon}{2\bound}\}$.
\end{lemma}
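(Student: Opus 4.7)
The plan is to exhibit an explicit $\epsilon$-packing of $\sigSpaceminus$ by constructing signals that are amplitude-discretized along the coordinates $-T, -T+1, \ldots, 0$ and vanish elsewhere. The guiding observation is that, since $\|x - x'\|_w = \sup_{t\geq 0}|w[t](x[-t] - x'[-t])|$, it suffices to ensure that for any two distinct signals in our collection, the $w$-weighted difference at a \emph{single} coordinate already exceeds $\epsilon$. This completely decouples the construction across the time indices $\ell = 0, 1, \ldots, T$.

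First, for each $\ell \in \{0, 1, \ldots, T\}$, I would set $M_\ell := \lceil 2D w[\ell]/\epsilon \rceil$ and choose $M_\ell$ equispaced values $a_{\ell,1}, a_{\ell,2}, \ldots, a_{\ell,M_\ell} \in [-D,D]$ with adjacent spacing $2D/(M_\ell-1)$. A short calculation using $w[\ell] > \epsilon/(2D)$ for $\ell \leq T$, together with the definition of $M_\ell$, shows that this spacing strictly exceeds $\epsilon/w[\ell]$, so any two of these values satisfy $|a_{\ell,k} - a_{\ell,k'}| > \epsilon/w[\ell]$ whenever $k \neq k'$. The only mild bookkeeping is to treat the cases $2Dw[\ell]/\epsilon \in \N$ and $2Dw[\ell]/\epsilon \notin \N$ separately, but in both cases the claim reduces to the inequality $M_\ell - 1 < 2Dw[\ell]/\epsilon$, which holds by construction.

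Next, I would parameterize signals by tuples $(k_0, k_1, \ldots, k_T) \in \prod_{\ell=0}^T \{1,\ldots,M_\ell\}$ via
\[
x^{(k_0,\ldots,k_T)}[t] :=
\begin{cases}
a_{-t,\,k_{-t}}, & \text{if } -T \leq t \leq 0,\\
0, & \text{otherwise.}
\end{cases}
\]
Each such signal lies in $\sigSpaceminus$ since $|a_{\ell,k}|\leq D$ and the signal vanishes for $t > 0$. The total number of signals is $\prod_{\ell=0}^T M_\ell = \prod_{\ell=0}^T \lceil 2D w[\ell]/\epsilon \rceil$, which matches the desired lower bound.

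Finally, I would verify the packing property. Given two distinct tuples, pick any $\ell \in \{0,\ldots,T\}$ at which they disagree; then by step one,
\[
\|x^{(k_0,\ldots,k_T)} - x^{(k_0',\ldots,k_T')}\|_w \;\geq\; |w[\ell](a_{\ell,k_\ell} - a_{\ell,k_\ell'})| \;>\; \epsilon,
\]
so the constructed collection is an $\epsilon$-packing. Since it has cardinality $\prod_{\ell=0}^T \lceil 2D w[\ell]/\epsilon \rceil$, the lower bound on $\packing(\epsilon;\sigSpaceminus,\|\cdot\|_w)$ follows. There is no real obstacle here; the argument is essentially a decoupled product construction, and the only subtlety is choosing $M_\ell$ so that the strict inequality in the packing definition is maintained.
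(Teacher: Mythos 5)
Your proposal is correct and follows essentially the same route as the paper's proof: an equispaced grid of $\lceil 2Dw[\ell]/\epsilon\rceil$ amplitudes per coordinate $\ell\in\{0,\dots,T\}$ with spacing $2D/(M_\ell-1)>\epsilon/w[\ell]$ (guaranteed since $M_\ell-1<2Dw[\ell]/\epsilon$ and $M_\ell\geq 2$ because $w[\ell]>\epsilon/(2D)$), a product construction of signals vanishing outside $[-T,0]$, and separation verified at any coordinate where two tuples differ. The paper's construction uses exactly the grid $-D+i_\ell\delta_\ell$ with $\delta_\ell=2D/(N_\ell-1)$, so the two arguments coincide.
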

\begin{proof}
    For $t \in \ssidedEnum{T}$, we let $N_{t} := \ceil{\frac{2\bound{}w[t] }{\epsilon}}\in \left[\frac{2\bound{}w[t] }{\epsilon}, \frac{2\bound{}w[t] }{\epsilon} + 1 \right)$ and $\delta_t = \frac{2D}{N_t - 1 } > \frac{\epsilon}{w[t]} $. Next, we define the set 
    \begin{align}
        \packingset & = \left\{ x_{i_0,\dots, i_{T}} \mid  i_t \in \ssidedEnum{N_t-1}\hspace{-1pt}, \textrm{ for } t \in \ssidedEnum{T} \right\},\textrm{ where}\\
        x_{i_0,\dots, i_{T}} &\coloneqq \begin{cases}
            -D + i_t\delta_t, & -T \leq t \leq 0 \\
            0, & \text{else}
        \end{cases},
    \end{align}
    and show that $\packingset$ constitutes an $\epsilon$-packing for $(\sigSpaceminus, \norm{\cdot}_w)$. First, we establish that $\packingset \subset \sigSpaceminus$ by verifying that, for all $x_{i_0,\dots, i_{T}} \in \packingset$, $x_{i_0,\dots, i_{T}} [t] \in [-D, D]$, for all $t\in\ZZ$, and $x_{i_0,\dots, i_{T}} [t] = 0$, for all $t\in \Nplus$. Indeed, for $t \in\{ -T, \dots, 0\} $, we have
    \[
    -D \leq x_{i_0,\dots, i_{T}}[t] = -D + i_t \delta_t \leq -D + (N_t-1)\delta_t = -D + 2D = D,
    \]
    and for $t \notin \{-T, \dots, 0\}$, 
    \[ 
        x_{i_0,\dots, i_{T}}[t] = 0.
    \]
    Next, we show that for distinct $x_{i_0,\dots, i_{T}}, x_{j_0,\dots, j_{T}} \in \packingset$ (i.e., there is at least one $\ell \in \ssidedEnum{T}$ such that $i_\ell \neq j_\ell$), it holds that $\norm{x_{i_0,\dots, i_{T}} - x_{j_0,\dots, j_{T}}}_w > \epsilon$.
    Indeed, for any such $\ell$, we have
    \begin{align}
        \norm{x_{i_0,\dots, i_{T}} - x_{j_0,\dots, j_{T}}}_w &= \sup_{t \in \Nzero} w[t]\left|\left(x_{i_0,\dots, i_{T}}[-t] - x_{j_0,\dots, j_{T}}[-t]\right)\right| \notag\\
        &\geq  w[\ell]\left|\left(x_{i_0,\dots, i_{T}}[-\ell] - x_{j_0,\dots, j_{T}}[-\ell]\right)\right|\notag\\
        &=  w[\ell]\left|\left(-D + i_\ell \delta_\ell + D - j_\ell \delta_\ell\right)\right|\notag\\
        &= (i_\ell - j_\ell)\delta_\ell w[\ell] > \epsilon,\label{teq:151}
    \end{align}
    where \eqref{teq:151} follows from $\delta_\ell> \frac{\epsilon}{w[\ell]}$.
    This establishes that $\packingset$, indeed, constitutes an $\epsilon$-packing for $(\sigSpaceminus, \norm{\cdot}_w)$, and  we therefore have
    \[
        \packing(\epsilon; \sigSpaceminus, \norm{\cdot}_w) \geq |\packingset| = \prod_{\ell=0}^T N_t = \prod_{\ell=0}^T \ceil{\frac{2\bound w[\ell]}{\epsilon}}. 
    \]
\end{proof}

\begin{proof}[Proof of Lemma \ref{lm:packing_number}]

\newcommand{\tmparg}{\mu} % as argument of the convex combination function

Let $M \defeq \prod_{\ell=0}^T \ceil{\frac{2\bound w[\ell]}{{\epsilon}}}$ and take $\{x_1, \dots, x_M\}$ to be an $\epsilon$-packing for $(\sigSpaceminus, \norm{\cdot}_w)$ according to Lemma \ref{lem:packing_signal}. Hence, with $\delta := \min_{\ell \neq j} \norm{x_\ell-x_j}_w > \epsilon$, there exists an $ \epsilon' \in (\epsilon, \delta)$.  Next, define balls of radius ${\epsilon ' }/2$ centered at the packing points $\{x_1, \dots, x_M\}$
according to $\sigBall_\ell = \{ x[\cdot] \in \sigSpaceminus \hspace{-2pt} \mid \hspace{-3pt} \norm{x-x_\ell}_w \leq {\epsilon ' }/2\}$, for $\ell \in \{1,\dots, M\}$.
We now show that these balls are non-overlapping. Indeed, assuming that, for $j\neq\ell$, there exists an $x \in \sigSpaceminus$ with $\norm{x-x_j}_w<{\epsilon ' /2} $ and $\norm{x-x_\ell}_w < {\epsilon ' }/2$, leads to the contradiction 
\[
    {\epsilon ' }< \delta \leq \norm{x_\ell - x_j}_w = \norm{x_\ell - x + x - x_j}_w \leq \norm{x_\ell - x}_w + \norm{x- x_j}_w \leq {\epsilon ' } /2 + {\epsilon ' }/2.
\]
As the balls $\sigBall_\ell$, $\ell \in \{1, \dots, M\}$, are non-overlapping, the signal $x[\cdot]=0$ is contained in at most one ball $\sigBall_\ell$ which we take to be $\sigBall_M$ w.l.o.g. Now, we define the set $\packingset$, with elements indexed by bitstring subscripts, according to
\begin{align*}
    \packingset & = \left\{g_{\alpha_1, \dots, \alpha_{M-1}}(\cdot)  \mid \alpha_\ell \in \{0,1\}, \textrm{ for } \ell \in \{1, \dots, M-1\} \right\}\hspace{-2pt}, \textrm{ where }\\
    g_{\alpha_1, \dots, \alpha_{M-1}}(x) & \defeq \begin{cases}
        (2\alpha_\ell - 1)({\epsilon ' /2} - \norm{x - x_\ell}_w), & \text{for } x\in\sigBall_\ell, \; \ell \in \{1, \dots, M-1\} \\
        0, & \text{else}, 
    \end{cases} 
\end{align*}
and show that $\packingset$ constitutes an $\epsilon$-packing for  $(\LFMsetZero{w}, \metricZero)$. First, we establish that $\packingset \subset \LFMsetZero{w}$
by verifying that every $g_{\alpha_1, \dots, \alpha_{M-1}}(\cdot)\in \packingset$ satisfies the conditions in \eqref{eq:nonLinSpace}.
Indeed, $\approxFunctional(0) = 0$ because the zero signal is either in no ball or in $\sigBall_M$. Next, we show that $|\approxFunctional(x) - \approxFunctional(x')| \leq \norm{x-x'}_w, \, \forall x,x' \in \sigSpaceminus$. This will be done by distinguishing cases. First, assume that $x$ and $x'$ are contained in the same ball $\sigBall_\ell$, for some $\ell \in \{1, \dots, M-1\}$. Then, we have
\begin{align}
    \begin{split}\label{teq:35}
    |\approxFunctional(x)- \approxFunctional(x')| &= |2\alpha_\ell-1|\cdot|\norm{x'-x_\ell}_w-\norm{x-x_\ell}_w| \\
    & \leq \norm{(x'-x_\ell) - (x-x_\ell)}_w = \norm{x'-x}_w,
    \end{split}
\end{align} 
where we used the reverse triangle inequality. Next, assume that $x\in\sigBall_\ell$ and $x'\in\sigBall_j$, with $\ell, j \in \{1, \dots, M-1\}$ and $\ell \neq j$. We let $z(\tmparg):= x + \tmparg (x'-x), \; \tmparg \in [0,1]$. Since $z(0)=x\in\sigBall_\ell$ and $z(1)=x'\notin\sigBall_\ell$ (because the balls are non-overlapping), there must be a $\tmparg_1 \in (0, 1) $ such that $\norm{z(\tmparg_1) - x_\ell}_w = {\epsilon'}/{2}$. As $z(\tmparg_1)\notin \sigBall_j$ and $z(1)=x'\in \sigBall_j$, there must similarly be a $\tmparg_2 \in (\tmparg_1, 1)$ so that $\norm{z(\tmparg_2)-x_j}_w=\epsilon'/2$. For notational simplicity, we let $z_1 \coloneqq z(\tmparg_1)$, $z_2 \coloneqq z(\tmparg_2)$, and $g\coloneqq\approxFunctional$.
Next, we bound
\begin{align}
    |g(x)-g(x')| 
        &\leq |g(x)-g(z_1)| + |g(z_1)-g(z_2)| + |g(z_2)-g(x')| \notag \\
        &\leq |g(x)-g(z_1)| + 0 + |g(z_2)-g(x')| \label{teq:39} \\
        &\leq \norm{x-z_1}_w + \norm{z_2-x'}_w \label{teq:40} \\
        &= \norm{\tmparg_1(x-x')}_w + \norm{(1-\tmparg_2)(x-x')}_w \label{teq:41}\\
        &= (1+ \tmparg_1 - \tmparg_2 )\norm{x-x'}_w < \norm{x-x'}_w \label{teq:42},
\end{align}
where in \eqref{teq:39} we used that $g(z_1)=g(z_2)=0$, in (\ref{teq:40}) we applied \eqref{teq:35} upon noting that $x, z_1$ and $z_2, x'$
are contained in the same ball each, in \eqref{teq:41} we inserted the definition of $z_1$ and $z_2$, and in \eqref{teq:42} we used $\tmparg_2 > \tmparg_1$. Finally, assume that $x\in \sigBall_\ell$, for some $\ell \in \{1, \dots, M-1\}$, and $x'\notin \sigBall_j$, for all $j\in\{1, \dots, M-1\}$. 
We let $z(\tmparg):= x + \tmparg (x'-x), \; \tmparg \in [0,1]$. Since $z(0)=x\in\sigBall_\ell$ and $z(1)=x'\notin\sigBall_\ell$, there must be a $\tmparg_1 \in (0, 1) $ such that $\norm{z(\tmparg_1) - x_\ell}_w = {\epsilon'}/{2}$. Again, we set $z_1 = z(\tmparg_1)$ and bound
\begin{align}
    |g(x)-g(x')| 
        &\leq |g(x)-g(z_1)| + |g(z_1)-g(x')| \notag \\
        &\leq |g(x)-g(z_1)| + 0 \label{teq:154} \\
        &\leq \norm{x-z_1}_w 
        = \norm{\tmparg_1(x-x')}_w \label{teq:155}\\
        &= \tmparg_1 \norm{x-x'}_w < \norm{x-x'}_w \label{teq:156},
\end{align}
where in \eqref{teq:154} we used that $g(z_1)=g(x')=0$, in \eqref{teq:155} we applied \eqref{teq:35} upon noting that $x, z_1$ are contained in the same ball, and in \eqref{teq:156} we used $\tmparg_1 < 1$.
We have thus established that $|\approxFunctional(x) - \approxFunctional(x')| \leq \norm{x-x'}_w, \; \forall x,x' \in \sigSpaceminus$, and hence $\packingset \subset \LFMsetZero{w}$.

Next, we show that for distinct $g_{\alpha_1, \dots, \alpha_{M-1}}, g_{\beta_1, \dots, \beta_{M-1}} \in \packingset$ (i.e., there is at least one $\ell \in \{1, \dots, M-1\}$ such that $\alpha_\ell \neq \beta_\ell$), it holds that $\metricZero(g_{\alpha_1, \dots, \alpha_{M-1}}, g_{\beta_1, \dots, \beta_{M-1}}) > \epsilon$. 
Indeed, for any such $\ell$, we have
\begin{align}
    \metricZero(g_{\alpha_1, \dots, \alpha_{M-1}}, g_{\beta_1, \dots, \beta_{M-1}}) &=\sup_{\xtilde\in\sigSpaceminus} |g_{\alpha_1, \dots, \alpha_{M-1}}(\xtilde) -g_{\beta_1, \dots, \beta_{M-1}}(\xtilde)| \label{teq:140}\\
    &\geq |g_{\alpha_1, \dots, \alpha_{M-1}}(x_\ell) -g_{\beta_1, \dots, \beta_{M-1}}(x_\ell)|\label{teq:141}\\
    &= |2(\alpha_\ell -\beta_\ell)\epsilon'/2 | = \epsilon' > \epsilon, \notag
\end{align}
where in \eqref{teq:140} we used \eqref{eq:def_zero_metric} and in \eqref{teq:141} we inserted the particular choice $\xtilde = x_\ell$ to lower-bound the $\sup$.
This establishes that $\packingset$, indeed, constitutes an $\epsilon$-packing for $(\LFMsetZero{w}, \metricZero)$ and we therefore have
\[
    \log\packing(\epsilon; \LFMsetZero{w}, \metricZero) \geq \log |\packingset| = M-1 = \left(\prod_{\ell=0}^T \ceil{\frac{2\bound w[\ell]}{{\epsilon}}}\right) -1. 
\]
\end{proof}

\subsection{Proof of Lemma \ref{lm:estimation}}\label{sec:ELFM_est}
\begin{proof}[\unskip\nopunct]
Note that $\weightSeqExp{a}{b}[t]> \epsilon/d$ gives $t < \frac{\log \left(\frac{ad}{\epsilon}\right)}{b\log (e)}$,
and thereby 
\begin{equation}\label{eq:T_elfm}
    T = \max \left\{ t \in \Nzero \;\middle|\; t < \frac{\log \left(\frac{ad}{\epsilon}\right)}{b\log (e)}\right\} = \ceil{\frac{\log \left(\frac{ad}{\epsilon}\right)}{b\log (e)}} - 1.
\end{equation}
Next, we have 
\begin{align}
    & \log\left( \prod_{\ell=0}^T \frac{c \weightSeqExp{a}{b}[\ell]}{\epsilon}\right)=  \log\left( \prod_{\ell=0}^T \frac{ac e^{-b\ell }}{\epsilon}\right) \nonumber\\
    &=  (T+1)\log(\epsilon^{-1}) - b \log (e) \frac{T(T+1)}{2} + (T+1)\log(ac)\nonumber\\
    &= \ceil{\frac{\log \left(\frac{ad}{\epsilon}\right)}{b\log (e)}}\log(\epsilon^{-1}) - b \log (e) \frac{\ceil{\frac{\log \left(\frac{ad}{\epsilon}\right)}{b\log (e)}}\left(\ceil{\frac{\log \left(\frac{ad}{\epsilon}\right)}{b\log (e)}}-1\right)}{2} \nonumber\\
     &\phantom{=} \;+ \ceil{\frac{\log \left(\frac{ad}{\epsilon}\right)}{b\log (e)}}\log(ac).\label{eq:est_expression}
\end{align}
Assuming that $\epsilon$ is sufficiently small to guarantee that $\frac{\log \left(\frac{ad}{\epsilon}\right)}{b\log (e)} -1\geq 0$, we can upper-bound (\ref{eq:est_expression}) according to 
\begin{align}
    &\log\left( \prod_{\ell=0}^T \frac{c \weightSeqExp{a}{b}[\ell]}{\epsilon}\right)\\
     &\leq \left( \frac{\log \left(\frac{ad}{\epsilon}\right)}{b\log (e)} +1\right)\log(\epsilon^{-1}) - b \log (e) \frac{ \frac{\log \left(\frac{ad}{\epsilon}\right)}{b\log (e)} \left( \frac{\log \left(\frac{ad}{\epsilon}\right)}{b\log (e)} -1\right)}{2} + \left( \frac{\log \left(\frac{ad}{\epsilon}\right)}{b\log (e)} +1\right)\log(ac)\nonumber\\
    &= \frac{1}{2b\log (e)} \log^2 (\epsilon^{-1}) + \left(\frac{\log (ad)}{b\log (e)}+1-\frac{1}{2b\log (e)}\left(2\log (ad)-b\log (e)\right)  \right.\nonumber \\
    &\phantom{=} \;\left.+ \frac{\log (ac)}{b\log (e)}\right)\log (\epsilon^{-1})-\frac{\log (ad)\left(\log (ad) - b\log (e)\right)}{2b \log (e)} + \log(ac) \left(\frac{\log(ad)}{b\log (e)}+1\right)\nonumber\\
    &= \frac{1}{2b\log (e)} \log^2(\epsilon^{-1}) + \smallo\left(\log^2(\epsilon^{-1})\right)\label{eq:est_upper}.
\end{align}
In the same spirit, we can lower-bound (\ref{eq:est_expression}) as 
\begin{align}
    &\log\left( \prod_{\ell=0}^T \frac{c \weightSeqExp{a}{b}[\ell]}{\epsilon}\right)\\
     &\geq  \left( \frac{\log \left(\frac{ad}{\epsilon}\right)}{b\log (e)} \right)\log(\epsilon^{-1}) - b \log (e) \frac{ \frac{\log \left(\frac{ad}{\epsilon}\right)}{b\log (e)} \left( \frac{\log \left(\frac{ad}{\epsilon}\right)}{b\log (e)} +1\right)}{2} + \left( \frac{\log \left(\frac{ad}{\epsilon}\right)}{b\log (e)} \right)\log(ac)\nonumber\\
    &= \frac{1}{2b\log (e)} \log^2 (\epsilon^{-1}) + \left(\frac{\log (ad)}{b\log (e)}-\frac{1}{2b\log (e)}\left(2\log (ad)+b\log (e)\right) \right. \nonumber \\
    & \phantom{=} \;\left.+ \frac{\log (ac)}{b\log (e)}\right)\log (\epsilon^{-1})-\frac{\log (ad)\left(\log (ad) + b\log (e)\right)}{2b \log (e)} + \log(ac) \left(\frac{\log(ad)}{b\log (e)}\right)\nonumber\\
    &= \frac{1}{2b\log (e)} \log^2(\epsilon^{-1}) + \smallo\left(\log^2(\epsilon^{-1})\right)\label{eq:est_lower}.
\end{align}

Finally, combining (\ref{eq:est_upper}) and (\ref{eq:est_lower}) yields 
$$\log\left( \prod_{\ell=0}^T \frac{c \weightSeqExp{a}{b}[\ell]}{\epsilon}\right)= \frac{1}{2b\log (e)} \log^2(\epsilon^{-1}) + \smallo\left(\log^2(\epsilon^{-1})\right),$$
as desired.
\end{proof}

\subsection{Proof of Lemma \ref{lm:estimation_poly}}\label{sec:PLFM_est}
\begin{proof}[\unskip\nopunct]
    We start by noting that 
    \begin{equation}\label{eq:T_plfm}
        T = \max\left\{t\in \Nzero \;\middle|\; \weightSeqPoly{p}{q}>\frac{\epsilon}{d}\right\}= \ceil{\left(\frac{dq}{\epsilon}\right)^{1/p}}-2.
    \end{equation}
    Next, 
    \begin{align}
        \log\left( \prod_{\ell=0}^T \frac{c \weightSeqPoly{p}{q}[\ell]}{\epsilon}\right) &= (T+1)\log (c) + (T+1)\log(\epsilon^{-1}) + \sum_{\ell=0}^{T} \log \left(\weightSeqPoly{p}{q}[\ell]\right) \nonumber \\ 
         &=(T+1)\log (cq) + (T+1)\log(\epsilon^{-1}) - p \log ((T+1)!) \nonumber \\
         &= (T+1)\log (cq) + (T+1)\log(\epsilon^{-1}) - p \left( (T+1)\log (T+1) \right. \nonumber\\ 
        &\phantom{=} \;\left.- (T+1) \log (e)\right) + \bigo(\log(T+1))  \label{eq:poly_bound_stirling}\\
        &=  (T+1)\log (cqe^p) + (T+1)\left(\log(\epsilon^{-1}) - p \log\left(\ceil{\left(\frac{dq}{\epsilon}\right)^{1/p}}-1\right)\right) \nonumber\\
        &\phantom{\leq} \; + \bigo(\log(\epsilon^{-1})), \label{eq:lm_poly_est}
    \end{align}
    where \eqref{eq:poly_bound_stirling} follows from the logarithm form of Stirling's approximation, namely
    \begin{equation}
        \log(n!)=n\log (n)-n\log (e)+\bigo(\log (n)).
    \end{equation}
    
    Now, assuming that  $\epsilon$ is sufficiently small to guarantee that $1\leq \frac{1}{2}(dq/\epsilon)^{1/p}$, applying $\log\left((dq/\epsilon)^{1/p}-1\right)\geq \log\left(\frac{1}{2}(dq/\epsilon)^{1/p}\right) \geq 0$, we can upper-bound \eqref{eq:lm_poly_est} as
    \begin{align}
        &(T+1)\log (cqe^p) + (T+1)\left(\log(\epsilon^{-1}) - p \log\left(\left(\frac{dq}{\epsilon}\right)^{1/p}-1\right)\right)+ \bigo(\log(\epsilon^{-1})) \nonumber\\
        &\leq (T+1)\log (cqe^p) + (T+1)\left(\log(\epsilon^{-1}) - p \log\left(\frac{1}{2}\left(\frac{dq}{\epsilon}\right)^{1/p}\right)\right)+ \bigo(\log(\epsilon^{-1})) \nonumber\\
        &=(T+1)(\log (ce^p/d) + p) + \bigo(\log(\epsilon^{-1})) \nonumber\\
        &= \left(\ceil{\left(\frac{dq}{\epsilon}\right)^{1/p}}-1\right)(\log (ce^p/d) + p) + \bigo(\log(\epsilon^{-1})) \nonumber\\
        &\leq \left(\frac{dq}{\epsilon}\right)^{1/p} (\log (ce^p/d) + p) + \bigo(\log(\epsilon^{-1})). \label{eq:lm_poly_est_upper}
    \end{align}
    Furthermore, we can lower-bound \eqref{eq:lm_poly_est} according to 
    \begin{align}
        &(T+1)\log (cqe^p) + (T+1)\left(\log(\epsilon^{-1}) - p \log\left(\left(\frac{dq}{\epsilon}\right)^{1/p}\right)\right)+ \bigo(\log(\epsilon^{-1})) \nonumber\\
        &= (T+1)\log (ce^p/d) + \bigo(\log(\epsilon^{-1})) \nonumber\\
        &= \left(\ceil{\left(\frac{dq}{\epsilon}\right)^{1/p}}-1\right)\log (ce^p/d) + \bigo(\log(\epsilon^{-1})) \nonumber\\
        &\geq  \left(\left(\frac{dq}{\epsilon}\right)^{1/p} - 1\right) \log (ce^p/d) + \bigo(\log(\epsilon^{-1})) \nonumber\\
        &= \left(\frac{dq}{\epsilon}\right)^{1/p} \log (ce^p/d) + \bigo(\log(\epsilon^{-1})). \label{eq:lm_poly_est_lower}
    \end{align}    

    Combining \eqref{eq:lm_poly_est}, \eqref{eq:lm_poly_est_upper}, and \eqref{eq:lm_poly_est_lower} yields the desired result 
    \begin{equation*}
        \log\left( \prod_{\ell=0}^T \frac{c \weightSeqPoly{p}{q}[\ell]}{\epsilon}\right) = \asymporder(\epsilon^{-1/p}).
    \end{equation*}
\end{proof}

\subsection{Proof of Lemma \ref{lm:dim_PLFM}}\label{sec:app_rate_plfm}
\begin{proof}[\unskip\nopunct]
    Consider $\epsilon \in (0,\epsilon_0)$ with $\epsilon_0=\frac{D\weightSeqPoly{p}{q}[0]}{2} = \frac{aD}{2}$. By Theorem \ref{th:main_entropy_result}, the exterior covering number of $(\LFMset{\weightSeqPoly{p}{q}},\rho)$ satisfies
    \begin{equation}\label{eq:bound_covering_poly}
        \left (\prod_{\ell=0}^{T'} \ceil{\frac{\bound \weightSeqPoly{p}{q}[\ell]}{{\epsilon}}} \right ) - 1 
        \leq \log \covering^{\text{ext}}(\epsilon; \LFMset{\weightSeqPoly{p}{q}}, \opMetric) 
        \leq \log \left(3\right)\prod_{\ell=0}^{T''}\left(2\ceil{\frac{4\bound \weightSeqPoly{p}{q}[\ell]}{\epsilon}  } + 1\right),
    \end{equation}
where 
\begin{equation*}
    T^{\prime} \defeq \max \left\{ \ell \in \Nzero \Bigm| \weightSeqPoly{p}{q}[\ell]> \frac{\epsilon}{\bound}\right\} \text{ and } T''  \defeq  \max \left\{ \ell \in \Nzero \Bigm| \weightSeqPoly{p}{q}[\ell]> \frac{\epsilon}{4\bound } \right\} .
\end{equation*}
We can further lower-bound the left-most term in \eqref{eq:bound_covering_poly} according to
\begin{align}
    \left (\prod_{\ell=0}^{T^{\prime}} \left\lceil\frac{\bound \weightSeqPoly{p}{q}[\ell]}{{\epsilon}}\right\rceil \right ) - 1 &\geq \left(\prod_{\ell=0}^{T^{\prime}} \frac{\bound \weightSeqPoly{p}{q}[\ell]}{{\epsilon}}\right) -1 \nonumber \\
    & \geq \frac{1}{2} \prod_{\ell=0}^{T^{\prime}} \frac{\bound \weightSeqPoly{p}{q}[\ell]}{{\epsilon}} \label{eq:lower_bound_covering_poly},
\end{align}
where \eqref{eq:lower_bound_covering_poly} follows from
\begin{equation*}
    \frac{1}{2} \prod_{\ell=0}^{T^{\prime}} \frac{\bound \weightSeqPoly{p}{q}[\ell]}{{\epsilon}}\geq 1,
\end{equation*}
which, in turn, is a consequence of 
\begin{align*}
   & \frac{\bound \weightSeqPoly{p}{q}[0]}{{\epsilon}} \geq \frac{\bound \weightSeqPoly{p}{q}[0]}{{\epsilon_0}} = 2,\\
   & \frac{\bound \weightSeqPoly{p}{q}[\ell]}{{\epsilon}} \geq \frac{\bound \weightSeqPoly{p}{q}[T']}{{\epsilon}} > 1, \quad \text{for}~~\ell \in \ssidedEnum{T'}\setminus\{0\}.
\end{align*}
Similarly, we can further upper-bound the right-most term in (\ref{eq:bound_covering_poly}) as
\begin{align}
    \log(3)\left (\prod_{\ell=0}^{T''} \left(2\ceil{\frac{4\bound \weightSeqPoly{p}{q}[\ell]}{\epsilon}  } + 1\right) \right ) &\leq \log(3)\left (\prod_{\ell=0}^{T''} \left(\frac{8\bound \weightSeqPoly{p}{q}[\ell]}{{\epsilon}}+3\right) \right ) \nonumber \\
    & \leq \log(3)\left (\prod_{\ell=0}^{T''} \frac{20\bound \weightSeqPoly{p}{q}[\ell]}{{\epsilon}} \right )\label{eq:upper_bound_covering_poly},
\end{align}
where \eqref{eq:upper_bound_covering_poly} follows from 
$$\frac{4\bound \weightSeqPoly{p}{q}[\ell]}{{\epsilon}} \geq \frac{4\bound \weightSeqPoly{p}{q}[T'']}{{\epsilon}} > 1, \quad \text{for}~~\ell \in \ssidedEnum{T''} .$$

Combining 
(\ref{eq:bound_covering_poly})--(\ref{eq:upper_bound_covering_poly}), 
taking logarithms two more times, dividing the results by $\log (\epsilon^{-1})$, and applying Lemma \ref{lm:estimation_poly}, we obtain 
\begin{equation} \label{eq:covering_final_bound_poly}
\frac{\log\left(\asymporder(\epsilon^{-1/p})\right)}{\log (\epsilon^{-1})}\leq \frac{\log^{(3)} \covering^{\text{ext}}(\epsilon; \LFMset{\weightSeqPoly{p}{q}},\opMetric)}{\log (\epsilon^{-1})} \leq \frac{\log\left(\asymporder(\epsilon^{-1/p})\right)}{\log (\epsilon^{-1})}.
\end{equation}
Taking the limit $\epsilon \rightarrow 0$, then yields 
$$\lim_{\epsilon \rightarrow 0} \frac{\log^{(3)} \covering^{\text{ext}}(\epsilon; \LFMset{\weightSeqPoly{p}{q}}
,\opMetric)}{\log (\epsilon^{-1})} = \frac{1}{p},$$
which implies that $(\LFMset{\weightSeqPoly{p}{q}},\opMetric)$ is of order 2 and type 1, with generalized dimension $$\genDim{} = \frac{1}{p}.$$
\end{proof}

\subsection{Auxiliary results on \gls{relu} networks}\label{sec:app_minmax}
\begin{lemma}\label{lm:NN2minmax}
For $d \in \Nplus$ with $d\geq 2$, consider the following functions,
\begin{align*}
    f^{\text{min}}_d(z) &\defeq \min\{z_1,z_2,\dots,z_d\}, \qquad \text{for }  z \in \R^{d},\\
    f^{\text{max}}_d(z) &\defeq \max\{z_1,z_2,\dots,z_d\}, \qquad \text{for } z \in \R^{d}.
\end{align*}
Then, there exist \gls{relu} networks $\Phi^{\text{min}}_d\in \relunn_{d,1}$ and $\Phi^{\text{max}}_d\in \relunn_{d,1}$, with $\depth(\Phi_d^{\text{min}})= \depth(\Phi_d^{\text{max}})=\ceil{\log(d+1)}+1$, $\nnz(\Phi_d^{\text{min}})=\nnz(\Phi_d^{\text{max}})\leq 14d-9$, $\width(\Phi_d^{\text{min}})=\width(\Phi_d^{\text{max}})\leq 3d$, $\weightsSet(\Phi_d^{\text{min}})=\weightsSet(\Phi_d^{\text{max}})=\{1,-1\}$, $\weightmeg(\Phi_d^{\text{min}})=\weightmeg(\Phi_d^{\text{max}})= 1$,
such that 
\begin{align*}
    \Phi^{\text{min}}_d (z) &= f^{\text{min}}_d(z), \qquad \text{for all} \quad z \in \R^{d},\\
    \Phi^{\text{max}}_d (z) &= f^{\text{max}}_d(z) ,\qquad \text{for all} \quad z \in \R^{d}.
\end{align*}
\end{lemma}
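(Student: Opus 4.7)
The plan is to construct $\Phi_d^{\max}$ explicitly as a balanced binary tree of pairwise-max units, and then derive $\Phi_d^{\min}$ from $\Phi_d^{\max}$ via the identity $f_d^{\min}(z) = -f_d^{\max}(-z)$, realized by absorbing negations into the first and last affine maps of $\Phi_d^{\max}$. Since pre- and post-composing with the negation map preserves depth, width, connectivity, weight set, and weight magnitude, it suffices to treat $\Phi_d^{\max}$.

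The core building block is the depth-2 identity
\begin{equation*}
    \max\{a,b\} = \relu(a-b) + \relu(b) - \relu(-b),
\end{equation*}
which is immediate by case analysis on the sign of $a-b$ and on the sign of $b$. It uses $3$ hidden neurons, $4$ weights of magnitude $1$ in the pre-activation affine $W_1$, and $3$ weights of magnitude $1$ in the post-activation affine $W_2$, giving $7$ non-zero weights in total. When a tree level has an odd number of active branches, the lone value is carried through via the auxiliary identity $z = \relu(z) - \relu(-z)$, which is again a depth-2 network with weights in $\{-1,+1\}$ and $4$ non-zeros. Both blocks are then available for parallelization and composition.

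Using these blocks, I would build $\Phi_d^{\max}$ by induction on $L \coloneqq \lceil \log(d+1) \rceil$. At each of the $L$ levels of the tree, pairwise-max units (together with any identity carries) are parallelized via Lemma \ref{lm:para_nn} and the resulting level-networks composed via Lemma \ref{lm:compo_nn}, which collapses each post-level affine with the next pre-level affine into a single affine map. Every level contributes exactly one new ReLU activation while contributing nothing to the depth beyond one extra affine map, so the total depth is $L+1 = \lceil \log(d+1) \rceil + 1$. The width bound $\leq 3d$ follows because the widest hidden layer is the first one, of size $3\lceil d/2 \rceil \leq 3d$. The decisive weight observation is that each entry of a merged affine is a single product of two $\pm 1$ entries (distinct hidden units of one level feed into distinct pre-activations of the next, without cancelling overlaps), so all merged entries remain in $\{-1,+1\}$; this yields $\weightsSet(\Phi_d^{\max}) \subseteq \{1,-1\}$ and $\weightmeg(\Phi_d^{\max}) = 1$.

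The main obstacle is the connectivity bound $\nnz(\Phi_d^{\max}) \leq 14d - 9$. Isolated pairwise-max units contribute $7$ non-zeros each, but merging adjacent levels replaces the $3$-entry post-affine of one level with a block of up to $6$ entries feeding each pre-activation of the next level. A careful level-by-level accounting is required: letting $n_\ell$ denote the number of surviving branches after level $\ell$ (with $n_0 = d$, $n_\ell = \lceil n_{\ell-1}/2\rceil$), I would sum the contributions of the pre-activation affine at each level taking into account the inflated row widths from merging, then telescope over $\ell = 1,\dots,L$ using $\sum_\ell n_\ell \leq 2d$. This is where the arithmetic is delicate and where the exact branching choices of the tree (including how identity-carries are inserted) must be tuned to meet the stated constant $14d-9$; all other properties follow from straightforward composition of the two building blocks.
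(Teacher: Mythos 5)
Your construction is essentially the paper's: the same three-neuron $\pm1$ gadget for a pairwise maximum (the paper uses $\max\{x_1,x_2\}=x_1+\relu(x_2-x_1)$ with $x_1$ carried through as $\relu(x_1)-\relu(-x_1)$, which is your identity with the roles of the two arguments swapped), a balanced binary tree of such gadgets merged by parallelization and composition so that the merged affine entries remain in $\{1,-1\}$, and $\min$ obtained from $\max$ by negating the first and last affine maps. Where you diverge is in how ragged levels are handled (identity carries instead of the paper's input duplication) and, crucially, in that you stop short of the one claim carrying the quantitative content of the lemma: you explicitly leave the bound $\nnz(\Phi_d^{\text{max}})\leq 14d-9$ as ``delicate arithmetic'' that ``must be tuned.'' As written this is a genuine gap --- the lemma is precisely about these explicit constants, and your level-by-level accounting with carries is never carried out; with carries inserted at whichever levels happen to be odd, it is not evident without doing the sum that the constant $14$ is met uniformly in $d$.

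The paper closes this step with a device worth adopting because it makes the count exact rather than tuned: choose $\ell$ with $2^{\ell}\leq d<2^{\ell+1}$ and duplicate the first $k=2^{\ell+1}-d$ inputs inside the very first affine layer, so the tree is a full binary tree on $2^{\ell+1}$ leaves and no carries are ever needed. A duplicated pair costs only $2$ nonzeros in the first layer (the row computing $x_i-x_i$ vanishes), a genuine pair costs $4$, each merged interior block costs $12$, and the output affine costs $3$; summing gives $3+2k+4(2^{\ell}-k)+12(2^{\ell}-1)=12\cdot 2^{\ell}+2d-9\leq 14d-9$ using $2^{\ell}\leq d$. This also settles a second, smaller issue in your plan: with the minimal halving recursion $n_{\ell}=\ceil{n_{\ell-1}/2}$ you use only $\ceil{\log d}$ levels, which is one fewer than $\ceil{\log(d+1)}$ exactly when $d$ is a power of two, so the stated depth equality $\depth(\Phi_d^{\text{max}})=\ceil{\log(d+1)}+1$ would fail there unless you pad; the duplication trick yields exactly $\ceil{\log(d+1)}+1$ affine layers for every $d$. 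Your remaining points --- the width bound and the observation that merged entries are single products of $\pm1$'s hitting distinct columns, hence no cancellation and $\weightsSet=\{1,-1\}$, $\weightmeg=1$ --- are correct and match the paper.
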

\begin{proof}
    We only need to show the result for the max function as $\min\{z_1, z_2, \dots, z_d\} = -\max\{-z_1,\allowbreak -z_2, \dots, -z_d\}$. First, we realize $\max\{x_1,x_2\}$ according to 
    \begin{equation}\label{eq:max_two}
        \max\{x_1,x_2\} = x_1 + \relu(x_2-x_1)
    \end{equation}
    by the \gls{relu} network
\begin{equation}\label{eq:max_two_nn}
    \widehat{\Phi} = \widehat{W}_2\circ \relu \circ \widehat{W}_1,
\end{equation}
with
\begin{equation}\label{eq:weight_A2A1}
    \begin{aligned}
    \widehat{W}_1(x) &= \begin{pmatrix}
     1& 0\\ -1& 0\\ -1& 1\end{pmatrix}x \eqdef A_1x,\\
    \widehat{W}_2(x) &= \begin{pmatrix} 
    1&-1 & 1\end{pmatrix}x\eqdef A_2x.
\end{aligned}
\end{equation}

Now, for arbitrary $d\in \Nplus$ with $d\geq 2$, choose $\ell\in \Nplus$ such that $2^\ell\leq d< 2^{\ell+1}$. Then, we double the first $k=2^{\ell+1}-d$ elements and retain the remaining $d-k$ elements as follows
\begin{equation}\label{eq:max_extension}
    \max\{x_1,\dots,x_d\} = \max\{x_1,x_1,\dots,x_k,x_k,x_{k+1},x_{k+2},\dots,x_{d}\}.
\end{equation}
The primary reason for doubling elements is that by extending the set $\{x_1,\dots,x_d\}$ to a set whose cardinality is a power of 2, we can utilize \eqref{eq:max_two} together with a divide-and-conquer approach to determine the maximum value of the set $\{x_1,\dots,x_d\}$. This then leads to a ReLU network realization of depth scaling logarithmically in $d$. Now, let $W_{\ell+1} (x) \defeq A_2 x$, $W_{j}(x)\defeq B_j x$, for $j\in \{\ell,\ell-1,\dots,1\}$, and $W_{0}(x)\defeq A_0x$, with 
\begin{equation}\label{eq:max_nn_mats}
    \begin{aligned}
        B_o &= A_1  \operatorname{diag}\left(A_2,A_2\right)\\
        &=\begin{pmatrix}
        1&-1&1&0&0&0\\-1&1&-1&0&0&0\\-1&1&-1&1&-1&1\end{pmatrix},\\
        B_j &= \operatorname{diag}(\underbrace{B_o,\dots,B_o}_{2^{\ell-j}}),\\  
        A_{0}&= \operatorname{diag}(\underbrace{A_1,\dots,A_1}_{2^{\ell}})\operatorname{diag}(\underbrace{\ind_{2},\dots,\ind_{2}}_{k},\Imat{d-k})\\
        &= \operatorname{diag}(\underbrace{A_1\ind_{2},\dots,A_1\ind_{2}}_{k},\underbrace{A_1,\dots,A_1}_{2^\ell-k}),\\
        A_1\ind_{2}&= (1,-1,0)^T.
        \end{aligned}
\end{equation}
Combining \eqref{eq:max_two}-\eqref{eq:max_nn_mats}, we can show that 
\begin{equation}\label{eq:construct_nn_max}
    \Phi_d^{max} \defeq W_{\ell+1}\circ \relu \circ W_{\ell} \circ \relu \circ \dots \relu \circ W_{1} \circ \relu \circ W_0 = f^{\text{max}}_d
\end{equation}
% The proof of \eqref{eq:construct_nn_max} is summarized 
as follows:
\begin{enumerate}
    \item[(i)] We double the first $k=2^{\ell+1}-d$ elements and retain the remaining $d-k$ elements of $\{x_1,x_2,\dots,x_d\}$ according to
    \begin{equation*}
    \begin{aligned}
        &\begin{pmatrix}
        x_1&x_1&\dots&x_k&x_k&x_{k+1}&x_{k+2}&\dots&x_{d}\end{pmatrix}^T \\
        &= \operatorname{diag}(\underbrace{\ind_{2},\dots,\ind_{2}}_{k},\Imat{d-k}) 
        \begin{pmatrix}
        x_1&x_2&\dots&x_{d}\end{pmatrix}^T.
    \end{aligned}
    \end{equation*}
    For simplicity, we denote the resulting set $\{x_1,x_1,\dots,x_k,x_k,x_{k+1},x_{k+2},\dots,x_{d}\}$ as $\{y_1,y_2,\allowbreak \dots,y_{2^{\ell+1}}\}$.
    \item[(ii)] Application of \eqref{eq:max_two_nn} to the $2^{\ell}$ pairs in $\{y_1,y_2,\dots,y_{2^{\ell+1}}\}$, results in $$\{\max\{y_1,y_2\},\max\{y_3,y_4\},\dots,\max\{y_{2^{\ell+1}-1},y_{2^{\ell+1}}\}\}$$ and hence reduces the number of elements from $2^{\ell+1}$ to $2^{\ell}$. This reduction is applied iteratively until we get $\max\{x_1,x_2,\dots,x_d\}$. The compositions $A_1\circ \operatorname{diag}\left(A_2,A_2\right)$, formed in the iterative application of \eqref{eq:max_two_nn}, constitute the main diagonal elements of the matrices $B_k$, $k\in \{\ell,\ell-1,\dots,1\}$. We illustrate this part of the procedure using the simplest possible example, namely for $\ell=1$ and hence $\max\{y_1,y_2,y_3,y_4\}$. 
    Specifically, we have
    \begin{equation*}
        \begin{aligned}
            &\max\{y_1,y_2,y_3,y_4\} = \max\{\max\{y_1,y_2\},\max\{y_3,y_4\}\}\\
            &=\max\left\{\operatorname{diag}\left(A_2,A_2\right) \relu\left( \operatorname{diag}\left(A_1,A_1\right)((y_1,y_2),(y_3,y_4))^T\right)\right\}\\
            &= A_2\relu\left( A_1 \operatorname{diag}\left(A_2,A_2\right) \relu\left( \operatorname{diag}\left(A_1,A_1\right)((y_1,y_2),(y_3,y_4))^T\right)\right)\\
            &= (W_2 \circ \relu \circ  W_1 \circ \relu \circ W_0) (y).
        \end{aligned}
    \end{equation*}
\end{enumerate}
% Finally, we can determine the size of the resulting network \eqref{eq:construct_nn_max} as follows

The proof is concluded by noting that 
\begin{equation*}
    \begin{aligned}
    \depth(\Phi_d^{\text{max}}) \overset{\text{\eqref{eq:construct_nn_max}}}&{=}\ell+1=\ceil{\log(d+1)}+1,\\
    \nnz(\Phi_d^{\text{max}}) \overset{\text{\eqref{eq:construct_nn_max}}}&{=} \sum_{j=0}^{\ell+1}\nnz(W_j) = \nnz(A_2)+\sum_{j=1}^{\ell}\nnz(B_j) + \nnz(A_0)\\
    \overset{\text{\eqref{eq:max_nn_mats}, \eqref{eq:weight_A2A1}}}&{=}  3 + 2k + 4(2^{\ell}-k) + 12\sum_{j=1}^{\ell} 2^{\ell-j} \leq 14d-9,\\
    \width(\Phi_d^{\text{max}}) \overset{\text{\eqref{eq:construct_nn_max}}}&{=} \max_{j=0,1\dots,\ell+1} \width(W_j) \overset{\text{\eqref{eq:max_nn_mats}, \eqref{eq:weight_A2A1}}}{=} 3 \cdot 2^{\ell}\leq 3d,\\
    \weightsSet(\Phi_d^{\text{max}}) \overset{\text{\eqref{eq:construct_nn_max}}}&{\subset} \bigcup_{j=0}^{\ell+1} \weightsSet(W_j) \overset{\text{\eqref{eq:max_nn_mats}, \eqref{eq:weight_A2A1}}}{=} \{1,-1\}, \\
    \weightmeg(\Phi_d^{\text{max}}) &= \max_{b\in \weightsSet(\Phi_d^{\text{max}})} |b|=1.
    \end{aligned}
\end{equation*}
\end{proof}
\begin{lemma}[Composition of \gls{relu} networks \cite{deepAT2019}]\label{lm:compo_nn}
    For $i=1,\dots,n$, let $d_i\in \Nplus$ and $\Phi_i\in \relunn_{d_i,d_{i+1}}$. Then, there exists a network $\Psi\in \relunn_{d_1,d_{n+1}}$ with
    \begin{equation}
        \Psi(x)=(\Phi_{n}\circ\Phi_{n-1}\circ\cdots\circ \Phi_1)(x),  \quad\text{for all }x\in\R^{d_1},
    \end{equation}
    satisfying
    \begin{equation}
        \begin{aligned}
        \depth(\Psi)& = \sum_{i=1}^n\depth(\Phi_i),\\
        \nnz(\Psi)& \leq2\sum_{i=1}^n\nnz(\Phi_i)  ,\\
        \width(\Psi)&\leq\max\left\{\max_{i=1,...,n-1}\{2d_i\},\max_{i=1,...,n}\{\width(\Phi_i)\}\right\},\\
        \weightsSet(\Psi)& \subset\bigcup_{i=1}^n(\weightsSet(\Phi_i)\cup(-\weightsSet(\Phi_i))),\\
        \weightmeg(\Psi) &= \max_{i=1,\dots,n}\weightmeg(\Phi_i).
        \end{aligned}
    \end{equation}
\end{lemma}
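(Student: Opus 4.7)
The plan is to prove the statement by induction on $n$. The base case $n=1$ is trivial, and for the inductive step it suffices to establish the result for $n=2$: once the composition $\Phi_2\circ\Phi_1$ is realized as a valid \gls{relu} network satisfying the two-network versions of the four claimed size bounds, we iterate by treating $\Phi_n\circ\cdots\circ\Phi_2$ as the single \gls{relu} network supplied by the induction hypothesis and composing it with $\Phi_1$.

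The only real obstacle in the $n=2$ case is that the naive concatenation of $\Phi_2$ and $\Phi_1$ places two affine maps back-to-back at the junction: the final affine map of $\Phi_1$, which we denote $W_{L_1}^{(1)}(\cdot)=A_{L_1}^{(1)}\cdot+b_{L_1}^{(1)}$ with $L_1=\depth(\Phi_1)$, is followed immediately by the first affine map of $\Phi_2$, say $W_1^{(2)}(\cdot)=A_1^{(2)}\cdot+b_1^{(2)}$. Simply collapsing these into one affine map would decrease the overall depth by one, violating the claimed additivity $\depth(\Psi)=\sum_i\depth(\Phi_i)$. To preserve depth, the plan is to insert the identity in its \gls{relu} form $y=\rho(y)-\rho(-y)$ between the two offending affines. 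Concretely, we replace the last affine map of $\Phi_1$ with the widened map $y\mapsto (A_{L_1}^{(1)}y+b_{L_1}^{(1)},-A_{L_1}^{(1)}y-b_{L_1}^{(1)})\in\R^{2d_2}$, follow it by a $\rho$-layer, and replace the first affine map of $\Phi_2$ with $v\mapsto [A_1^{(2)},-A_1^{(2)}]\,v+b_1^{(2)}$. A short direct calculation confirms that the composition of the two modified layers through the intervening $\rho$ exactly reproduces $W_1^{(2)}\circ W_{L_1}^{(1)}$, so the overall input--output map equals $\Phi_2\circ\Phi_1$, while the resulting layered structure now has $L_1+\depth(\Phi_2)$ affine maps alternating with $\rho$, as demanded by Definition \ref{def:rlnn}.

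Once this surgery is in place, the four bookkeeping claims are mechanical. Depth simply adds. The two modified junction matrices contribute at most $2\nnz(W_{L_1}^{(1)})\leq 2\nnz(\Phi_1)$ and $2\nnz(W_1^{(2)})\leq 2\nnz(\Phi_2)$ non-zero entries, respectively, which yields the bound $\nnz(\Phi_2\circ\Phi_1)\leq 2(\nnz(\Phi_1)+\nnz(\Phi_2))$ after summing. The widened junction has dimension $2d_2$, which enters the width bound together with the widths of the two original networks. The duplication-with-negation produces a weight $-w$ for each weight $w$ of $W_{L_1}^{(1)}$ or $W_1^{(2)}$, accounting for the $\weightsSet(\Phi_i)\cup(-\weightsSet(\Phi_i))$ in the claimed weight-set bound, and since negation preserves magnitude, $\weightmeg$ is not inflated. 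Iterating the two-network construction then assembles the additive depth and $\nnz$ formulas, aggregates the junction dimensions $2d_{i+1}$ into the width bound, and propagates the weight-set and magnitude bounds across all $n$ factors.

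The main subtlety I anticipate lies in book-keeping the width count cleanly under iteration, i.e., verifying that the width of the partially composed network $\Phi_n\circ\cdots\circ\Phi_2$ delivered by the induction hypothesis already incorporates all of its own internal junctions, so that composing it with $\Phi_1$ contributes only the single additional junction of dimension $2d_2$, and in matching the resulting indices to the precise expression $\max_{i=1,\dots,n-1}\{2d_i\}$ appearing in the lemma. Apart from this indexing check, no further conceptual ingredients are required.
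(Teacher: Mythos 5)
Your pairwise ($n=2$) construction is exactly the standard junction surgery behind the result the paper cites (\cite[Lemma 2.3]{deepAT2019}): duplicate the output of the last affine map of $\Phi_1$ with its negative, pass through $\rho$, and let the modified first affine map of $\Phi_2$ recombine via $[A_1^{(2)},-A_1^{(2)}]$; this part is correct, including the depth, width, weight-set and weight-magnitude bookkeeping. The genuine gap is in the inductive step for the connectivity bound. If you treat $\Phi_n\circ\cdots\circ\Phi_2$ as a black-box network supplied by the induction hypothesis and then invoke the two-network estimate, you get $\nnz(\Psi)\leq 2\nnz(\Phi_1)+2\,\nnz(\Phi_n\circ\cdots\circ\Phi_2)\leq 2\nnz(\Phi_1)+4\sum_{i=2}^n\nnz(\Phi_i)$, and unrolling the recursion the constants grow like $2^{n-i}$ — not the claimed $2\sum_{i=1}^n\nnz(\Phi_i)$. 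The reason is that the induction hypothesis only records a bound on $\nnz$, while the pairwise surgery doubles the \emph{first affine layer} of the inner composite, which your hypothesis does not control separately. So "iterating the two-network construction then assembles the additive $\nnz$ formula" is precisely the step that fails as written; you flag width indexing as the main subtlety, but width is fine — connectivity is the problem.

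The fix is to perform the surgery at all $n-1$ junctions simultaneously on the explicit layer decompositions of the $\Phi_i$ (or, equivalently, to strengthen the induction hypothesis to a structural statement about the first layer of the composite). Then only the last affine map of each $\Phi_i$ and the first affine map of each $\Phi_{i+1}$ are modified, each by at most a doubling of that single layer's nonzero entries, which yields $\nnz(\Psi)\leq 2\sum_i\nnz(\Phi_i)$ directly. One residual wrinkle remains even then: if some intermediate factor $\Phi_i$ ($2\leq i\leq n-1$) has $\depth(\Phi_i)=1$, its unique affine map $W_i(x)=A_ix+b_i$ sits at two junctions, and naive doubling on both sides produces the block matrix with $4\nnz(A_i)+2\nnz(b_i)$ entries, exceeding the $2\nnz(\Phi_i)$ budget. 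This case can be handled by splitting $A_i=A_i^{+}-A_i^{-}$ into positive and negative parts and using the junction map $(u_+,u_-)\mapsto(A_i^{+}u_++A_i^{-}u_-+b_i^{+},\,A_i^{-}u_++A_i^{+}u_-+b_i^{-})$, whose outputs are nonnegative (hence pass through $\rho$ unchanged), whose difference equals $A_i(u_+-u_-)+b_i$, and which has at most $2\nnz(A_i)+\nnz(b_i)$ nonzero entries with weights still in $\weightsSet(\Phi_i)\cup(-\weightsSet(\Phi_i))$ and unchanged magnitudes. Finally, note that the junction dimensions your construction produces are $d_2,\dots,d_n$ (the output dimensions of $\Phi_1,\dots,\Phi_{n-1}$), so the width bound you obtain involves $2d_{i+1}$ for $i=1,\dots,n-1$; the expression $\max_{i=1,\dots,n-1}\{2d_i\}$ in the statement appears to carry an index shift, and your "indexing check" should be resolved in this direction rather than forced to match it literally.
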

\begin{proof}
    Follows along the same lines as the proof of \cite[Lemma II.3]{deepAT2019}.
\end{proof}

\begin{lemma}[Parallelization of \gls{relu} networks of the same depth \cite{deepAT2019}]\label{lm:para_nn}
    For $i=1,\dots,n$, let $d_i,d'_i\in \Nplus$ and $\Phi_i\in \relunn_{d_i,d'_i}$ with $\depth(\Phi_i)=L$. Then, there exists a \gls{relu} network $P(\Phi_1,\Phi_2,\dots,\Phi_n) \allowbreak \in \relunn_{\sum_{i=1}^nd_i,\sum_{i=1}^nd'_i}$ with
    \begin{equation}
        P(\Phi_1,\Phi_2,\ldots,\Phi_n)(x)=(\Phi_1(x),\Phi_2(x),\ldots,\Phi_n(x))^T,\quad\text{for all }x\in\R^{\sum_{i=1}^nd_i},
    \end{equation}
    satisfying
    \begin{equation}
        \begin{aligned}
        \depth(P(\Phi_1,\Phi_2,\ldots,\Phi_n))& =L,  \\
        \nnz(P(\Phi_1,\Phi_2,\ldots,\Phi_n))& =\sum_{i=1}^n \nnz(\Phi_i), \\
        \width(P(\Phi_1,\Phi_2,\ldots,\Phi_n))&\leq\sum_{i=1}^n\width(\Phi_i), \\
        \weightsSet(P(\Phi_1,\Phi_2,\ldots,\Phi_n))& =\bigcup_{i=1}^n\weightsSet(\Phi_i),\\
        \weightmeg(P(\Phi_1,\Phi_2,\ldots,\Phi_n)) &=\max_{i=1,\dots,n}\weightmeg(\Phi_i).
        \end{aligned}
    \end{equation}
\end{lemma}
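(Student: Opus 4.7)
The plan is to build the parallelization by stacking the layers of the $\Phi_i$ into block-diagonal affine transformations. Since all $\Phi_i$ have the same depth $L$, their layers can be aligned and processed in lockstep, and because the activation $\rho$ acts component-wise, such block-diagonal processing is exactly equivalent to running the $\Phi_i$ in parallel.

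Concretely, I would write each $\Phi_i = W^{(i)}_L \circ \rho \circ W^{(i)}_{L-1} \circ \rho \circ \cdots \circ \rho \circ W^{(i)}_1$ with $W^{(i)}_\ell(y) = A^{(i)}_\ell y + b^{(i)}_\ell$, $A^{(i)}_\ell \in \R^{N^{(i)}_\ell \times N^{(i)}_{\ell-1}}$, $b^{(i)}_\ell \in \R^{N^{(i)}_\ell}$, $N^{(i)}_0 = d_i$, and $N^{(i)}_L = d'_i$, and then define $P(\Phi_1,\ldots,\Phi_n) \defeq \widetilde{W}_L \circ \rho \circ \widetilde{W}_{L-1} \circ \rho \circ \cdots \circ \rho \circ \widetilde{W}_1$ with affine layers $\widetilde{W}_\ell(y) = \widetilde{A}_\ell y + \widetilde{b}_\ell$, block-diagonal weight matrices $\widetilde{A}_\ell \defeq \operatorname{diag}(A^{(1)}_\ell,\ldots,A^{(n)}_\ell)$, and stacked biases $\widetilde{b}_\ell \defeq \bigl((b^{(1)}_\ell)^T,\ldots,(b^{(n)}_\ell)^T\bigr)^T$. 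Writing $x = (x_1^T,\ldots,x_n^T)^T$ with $x_i \in \R^{d_i}$, a short induction over $\ell \in \{1,\ldots,L\}$, exploiting the component-wise identity $\rho\bigl((h^{(1)},\ldots,h^{(n)})^T\bigr) = (\rho(h^{(1)}),\ldots,\rho(h^{(n)}))^T$, shows that the hidden state of $P(\Phi_1,\ldots,\Phi_n)$ after layer $\ell$ equals the concatenation of the layer-$\ell$ hidden states of the individual $\Phi_i$. Specialising to $\ell = L$ yields the desired input--output identity.

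The remaining claims reduce to routine bookkeeping: depth equals $L$ by construction; the block-diagonal structure of $\widetilde{A}_\ell$ and the concatenation of the $\widetilde{b}_\ell$ introduce no new non-zero entries and modify no existing values, so $\nnz(P(\Phi_1,\ldots,\Phi_n)) = \sum_i \nnz(\Phi_i)$, $\weightsSet(P(\Phi_1,\ldots,\Phi_n)) = \bigcup_i \weightsSet(\Phi_i)$, and $\weightmeg(P(\Phi_1,\ldots,\Phi_n)) = \max_i \weightmeg(\Phi_i)$; for the width, $\width(P(\Phi_1,\ldots,\Phi_n)) = \max_\ell \sum_i N^{(i)}_\ell \leq \sum_i \max_\ell N^{(i)}_\ell = \sum_i \width(\Phi_i)$. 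No genuine obstacle arises, as the construction is purely structural; the only subtle notational point is that the lemma's input $x \in \R^{\sum_i d_i}$ is to be understood as the concatenation $(x_1^T,\ldots,x_n^T)^T$ of inputs $x_i \in \R^{d_i}$ fed to the respective $\Phi_i$.
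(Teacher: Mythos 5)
Your proof is correct and is essentially the same argument the paper relies on: the paper simply defers to the standard parallelization construction of \cite[Lemma 2.5]{deepAT2019}, namely stacking the aligned layers into block-diagonal weight matrices and concatenated bias vectors, with the component-wise action of $\rho$ giving the layerwise induction and the block structure yielding the stated depth, connectivity, width, weight-set, and weight-magnitude bookkeeping. Your reading of the input $x\in\R^{\sum_i d_i}$ as the concatenation of the individual inputs is the one consistent with the lemma's dimension count, so no issues there.
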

\begin{proof}
    Follows along similar 
    lines as the proof of \cite[Lemma II.5]{deepAT2019}.
\end{proof}

\begin{lemma}[Augmenting network depth \cite{deepAT2019}]\label{lm:aug_depth}
    Let $d_1,d_2,K\in \Nplus$, and $\Phi \in \relunn_{d_1,d_2}$ with $\depth(\Phi)<K$. Then, there exists a network $\Psi\in \relunn_{d_1,d_2}$ with 
    \begin{equation}
        \begin{aligned}
            \depth(\Psi)&=K,\\
            \nnz(\Psi)&\leq \nnz(\Phi)+d_2\width(\Phi)+2d_2(K-\depth(\Phi)),\\
            \width(\Psi)&=\max\{2d_2,\width(\Phi)\},\\
            \weightsSet(\Psi) &\subset \left(\weightsSet(\Phi)\cup (-\weightsSet(\Phi)) \cup \{1,-1\}\right),\\
            \weightmeg(\Psi) &= \max\{1, \weightmeg(\Phi)\},
        \end{aligned}
    \end{equation}
    satisfying $\Psi(x) = \Phi(x)$, for all $x \in \R^{d_1}$.
\end{lemma}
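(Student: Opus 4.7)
\textbf{Proof sketch for Lemma \ref{lm:aug_depth}.} The plan is to realize $\Psi$ by appending ``identity layers'' to the end of $\Phi$, relying on the identity $x = \relu(x) - \relu(-x)$, valid for every $x \in \R$. Write $\Phi = W_L \circ \relu \circ W_{L-1} \circ \relu \circ \cdots \circ \relu \circ W_1$ with $L = \depth(\Phi)$ and set $n \defeq K - L \geq 1$ to be the number of layers we need to add.

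First, I would replace the terminal affine map $W_L : \R^{N_{L-1}} \to \R^{d_2}$ by the doubled map $\widetilde{W}_L(x) \defeq (W_L(x),\, -W_L(x))^T \in \R^{2 d_2}$. After applying $\relu$, the output becomes $u \defeq (\relu(y), \relu(-y))^T \in \R^{2d_2}_{\geq 0}$, where $y \defeq W_L(\text{previous output})$ and the crucial point is that $u$ is entrywise non-negative. Next, I would append $n-1$ ``identity'' affine layers, each given by the $2d_2 \times 2d_2$ identity matrix (with zero bias); because the running value stays non-negative, every subsequent $\relu$ acts as the identity and the vector $u$ is preserved. Finally, I would append a single affine map $\widetilde{W}_K : \R^{2d_2} \to \R^{d_2}$, $(a,b) \mapsto a - b$, which recovers $\relu(y) - \relu(-y) = y = \Phi(x)$. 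By construction the resulting network $\Psi$ has depth exactly $L + n = K$ and realizes the same input-output map as $\Phi$.

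The remaining work is bookkeeping on the size quantities. The layers $W_1,\dots,W_{L-1}$ contribute $\nnz(\Phi) - \nnz(W_L)$ non-zero entries; the doubled layer $\widetilde{W}_L$ contributes at most $2\nnz(W_L) \leq \nnz(W_L) + d_2 \width(\Phi)$ (since $\nnz(W_L) \leq d_2 N_{L-1} \leq d_2 \width(\Phi)$); each of the $n-1$ identity layers contributes $2d_2$; and the closing subtraction layer contributes $2d_2$. Summing gives
\[
\nnz(\Psi) \leq \nnz(\Phi) + d_2\,\width(\Phi) + 2d_2(K - \depth(\Phi)),
\]
while $\width(\Psi) = \max\{2d_2, \width(\Phi)\}$ is immediate. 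The only non-unit weights introduced are those of $\widetilde{W}_L$, whose entries are $\pm$ those of $W_L$, and the entries $\pm 1$ used in the appended layers, yielding $\weightsSet(\Psi) \subset \weightsSet(\Phi) \cup (-\weightsSet(\Phi)) \cup \{1,-1\}$ and $\weightmeg(\Psi) = \max\{1, \weightmeg(\Phi)\}$.

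There is no real obstacle here beyond matching the quoted constants; the most delicate step is bounding $\nnz(\widetilde{W}_L)$ against $d_2\,\width(\Phi)$ rather than the looser $2\nnz(W_L)$, and handling the degenerate case $n = 1$ (where no identity layers are inserted and $\widetilde{W}_K$ is composed directly after $\relu \circ \widetilde{W}_L$), which falls out of the same formulas.
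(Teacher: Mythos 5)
Your construction is exactly the one the paper relies on: the proof in the paper simply defers to \cite[Lemma 2.4]{deepAT2019}, and that argument is the same identity-layer trick based on $x=\relu(x)-\relu(-x)$, with the output doubled to $(\relu(y),\relu(-y))$, kept nonnegative through the appended layers, and recombined by a final subtraction. The depth, width, weight-set, and weight-magnitude claims all come out correctly in your version, and you handle the degenerate case $K-\depth(\Phi)=1$ properly.

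One bookkeeping step, however, is wrong as written. You justify $\nnz(\widetilde{W}_L)\leq \nnz(W_L)+d_2\,\width(\Phi)$ via ``$\nnz(W_L)\leq d_2 N_{L-1}\leq d_2\,\width(\Phi)$'', but under Definition \ref{def:rlnn} the connectivity counts the nonzero entries of the bias $b_L$ as well, so in general $\nnz(W_L)\leq d_2 N_{L-1}+d_2$, and the inequality $\nnz(W_L)\leq d_2\,\width(\Phi)$ can fail (e.g., dense $A_L$ with $N_{L-1}=\width(\Phi)$ and nonzero bias). Since your doubled layer replicates both $A_L$ and $b_L$, your total count is $\nnz(\Phi)+\nnz(A_L)+\nnz(b_L)+2d_2(K-\depth(\Phi))$, which can exceed the stated bound by up to $d_2$. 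The standard fix is to duplicate only the linear part, i.e., take $\widetilde{W}_L(x)=\bigl(A_L x,\,-A_L x\bigr)^T$, and attach the bias to the terminal map $(a,b)\mapsto a-b+b_L$; then the extra cost is $\nnz(A_L)\leq d_2 N_{L-1}\leq d_2\,\width(\Phi)$ plus $2d_2$ per appended layer, which gives exactly the claimed bound, and the weight-set and magnitude statements are unaffected.
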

\begin{proof}
    Follows along the same lines as the proof of \cite[Lemma II.4]{deepAT2019}.
\end{proof}

\subsection{Remainder of the proof of Lemma \ref{lm:NN2G_0}}\label{sec:NN2G0_est}
\begin{proof}[\unskip\nopunct]
    We verify that $\Phi=\Phi_2\circ\Phi_1$ defined in \eqref{eq:structure_NN} has $(2,\epsilon)$-quantized weights.
\begin{itemize}
    \item The weights in $\Phi_2=\widetilde{W}^{\Sigma}$ defined in \eqref{eq:weights_coe_shift}: 
    \begin{align*}
        \gQuan_{\vecindex} \overset{\eqref{eq:quantize_g}}&{=} \quantize{2}{\epsilon}(g(\xtilde_{\vecindex})) \overset{\text{Definition \ref{def:quan_nn}}}{=} \ceil{g(\xtilde_{\vecindex}) / 2^{-2\ceil{\log (\epsilon^{-1})}}} \cdot 2^{-2\ceil{\log (\epsilon^{-1})}}\\
        &\in 2^{-2\ceil{\log (\epsilon^{-1})}} \ZZ,\nonumber\\
        \left|\gQuan_{\vecindex}\right| &\leq \left|g(\xtilde_{\vecindex})\right| + 2^{-2\ceil{\log (\epsilon^{-1})}} \overset{\eqref{eq:nonLinSpace}}{\leq} D+2^{-2\ceil{\log (\epsilon^{-1})}} \overset{\eqref{eq:eps0}}{\leq} D+\epsilon^2\\
        &\leq D+1 \overset{\eqref{eq:eps0}}{\leq} \epsilon^{-2},\nonumber
    \end{align*}
    which yields
    \begin{equation}\label{eq:weights_Phi_2}
        \weightsSet(\Phi_2) \subset 2^{-2\ceil{\log (\epsilon^{-1})}} \ZZ \cap \left[-\epsilon^{-2},\epsilon^{-2}\right].
    \end{equation}
    \item The weights in $\Phi_1$ defined in \eqref{eq:para_shifted_spike}: 
    \begin{equation}\label{eq:weights_Phi_1_decompose}
        \weightsSet(\Phi_1) = \bigcup_{\vecindex \in \latticeNQuan} \left(\weightsSet(\Phi^{\vecindex}_{1,2}) \cup \weightsSet(\Phi^{\vecindex}_{1,1})\right) = \weightsSet(\Psi)\cup \left(\bigcup_{\vecindex \in \latticeNQuan}~\weightsSet(\widehat{W}_{\vecindex})\right).
    \end{equation}
    Recalling that $\Psi$ realizes the spike function and applying Lemma \ref{lm:NN2phi}, we obtain
    \begin{equation}\label{eq:weights_Psi}
        \weightsSet(\Psi)=\{1,-1\}\subset2^{-2\ceil{\log (\epsilon^{-1})}} \ZZ \cap \left[-\epsilon^{-2},\epsilon^{-2}\right].
    \end{equation}
    Based on \eqref{eq:weights_coe_shift},
    we get
    \begin{equation}\label{eq:weights_shift}
        \bigcup_{\vecindex \in \latticeNQuan}\weightsSet(\widehat{W}_{\vecindex}) = \bigcup_{\ell=0}^{T}\left(\bigcup_{\vecindex \in \latticeNQuan}\{\deltaQuan_{\ell}^{-1}, \vecindex_{\ell}\} \right).
    \end{equation}
    Moreover, for all $\ell =0,\dots,T$ and $\vecindex \in \latticeNQuan$, we have
    \begin{align*}
        \deltaQuan_{\ell}^{-1} \overset{\eqref{eq:quantize_delta}}&{=} \quantize{2}{\epsilon}(\delta_{\ell}^{-1})\overset{\text{Definition \ref{def:quan_nn}}}{=} \ceil{\frac{s+1}{s}\frac{w[\ell]}{\epsilon} \Big/ 2^{-2\ceil{\log (\epsilon^{-1})}}} \cdot 2^{-2\ceil{\log (\epsilon^{-1})}} \\
        &\in 2^{-2\ceil{\log (\epsilon^{-1})}} \ZZ ~,\\
        \left|\deltaQuan_{\ell}^{-1}\right| &\leq \frac{s+1}{s}\frac{w[\ell]}{\epsilon} + 2^{-2\ceil{\log (\epsilon^{-1})}} \overset{\text{Definition \ref{def:lip_fading_memory}}}{\leq} \frac{s+1}{s} \epsilon^{-1} +\epsilon^2 \overset{\eqref{eq:eps0}}{\leq} \epsilon^{-2}.
    \end{align*} 
    Hence,
    \begin{equation}\label{eq:check_delta_quan}
        \deltaQuan_{\ell}^{-1} \in 2^{-2\ceil{\log (\epsilon^{-1})}} \ZZ \cap \left[-\epsilon^{-2},\epsilon^{-2}\right],
    \end{equation}
    and
    \begin{align*}
        \vecindex_{\ell} &\in \ZZ \subset 2^{-2\ceil{\log (\epsilon^{-1})}} \ZZ~,\\
        \left|\vecindex_{\ell}\right| &\leq  \latticeBoundQuan_{\ell} \overset{\eqref{eq:lattice_parameters}}{=} \ceil{\frac{D}{\deltaQuan_{\ell}}} \leq D \left|\deltaQuan_{\ell}^{-1}\right|+1 \\
        &\leq D\left(\frac{s+1}{s} \epsilon^{-1} +\epsilon^2\right) +1 \\
        &\overset{\eqref{eq:eps0}}{\leq} \left(D\frac{s+1}{s}+D+1\right)\epsilon^{-1}\overset{\eqref{eq:eps0}}{\leq} \epsilon^{-2},
    \end{align*}  
    which yields 
    \begin{equation}\label{eq:check_n_quan}
        \vecindex_{\ell} \in 2^{-2\ceil{\log (\epsilon^{-1})}} \ZZ \cap \left[-\epsilon^{-2},\epsilon^{-2}\right].
    \end{equation}
    Based on \eqref{eq:weights_shift}, \eqref{eq:check_delta_quan}, and \eqref{eq:check_n_quan}, we have
    \begin{equation}\label{eq:weights_W_n}
        \bigcup_{\vecindex \in \latticeNQuan}\weightsSet(\widehat{W}_{\vecindex}) \subset  2^{-2\ceil{\log (\epsilon^{-1})}} \ZZ \cap \left[-\epsilon^{-2},\epsilon^{-2}\right].
    \end{equation}
    Combining \eqref{eq:weights_Phi_1_decompose}, \eqref{eq:weights_Psi}, \eqref{eq:weights_shift}, and \eqref{eq:weights_W_n} yields 
    \begin{equation}\label{eq:weights_Phi_1}
        \weightsSet(\Phi_1) \subset 2^{-2\ceil{\log (\epsilon^{-1})}} \ZZ \cap \left[-\epsilon^{-2},\epsilon^{-2}\right].
    \end{equation}
\end{itemize}
Using \eqref{eq:weights_Phi_2} and \eqref{eq:weights_Phi_1}, Lemma \ref{lm:compo_nn} shows that $\Phi=\Phi_2\circ\Phi_1$, indeed, has $(2,\epsilon)$-quantized weights.

Finally, we compute $\depth(\Phi)$ and derive an upper bound on $\nnz(\Phi)$ according to
\begin{align*}
    \depth(\Phi) \overset{\eqref{eq:structure_NN}, \text{Lemma} \ref{lm:compo_nn}}&{=} \depth(\Phi_2)+\depth(\Phi_1)\\
    \overset{\eqref{eq:para_shifted_spike}, \eqref{eq:structure_NN_parts}, \text{Lemma} \ref{lm:para_nn}}&{=} \depth(\Psi) + 2\\
    \overset{\text{Lemma} \ref{lm:NN2phi}}&{=} \ceil{\log(T+2)} + 6,\\
    \nnz(\Phi) \overset{\text{Lemma \ref{lm:compo_nn}}}&{\leq} 2 \left(\nnz(\Phi_2) + \nnz(\Phi_1)\right)\\
    \overset{\text{Lemma \ref{lm:para_nn}}}&{=} 2 \left(\nnz(\Phi_2) + \sum_{i=1}^{|\latticeNQuan|}\left(2\nnz(\Psi)+2\nnz(\widehat{W}_{\vecindex^i})\right)\right)\\
    \overset{\text{\eqref{eq:weights_coe_shift}-\eqref{eq:structure_NN}, Lemma \ref{lm:NN2phi}}}&{\leq} 2 |\latticeNQuan| \left(1 + 120(T+1)-56 + 2(T+1)\right)\\
    & \leq 244(T+1)|\latticeNQuan|\\
     \overset{\eqref{eq:lattice_parameters}}&{\leq}  244(T+1) \prod_{\ell=0}^{T} \left(2D\deltaQuan_{\ell}^{-1} +3\right)\\
     \overset{\eqref{eq:lattice_parameters}, \eqref{eq:quantize_delta}}&{\leq} 244(T+1)  \prod_{\ell=0}^{T} \left(2D\left(\frac{s+1}{s}\frac{w[\ell]}{\epsilon} + \epsilon^2\right) +3\right)\\
    \overset{\eqref{eq:eps0}}&{\leq} 244(T+1)  \prod_{\ell=0}^{T} \left(\frac{2Dw[\ell]}{\epsilon}\frac{s+1}{s} +4\right).
\end{align*}
The proof is concluded by setting
\begin{equation}\label{eq:eps0}
    \epsilon_0 = \min\left\{1,\frac{1}{2}, \frac{1}{s+1},\sqrt{\frac{1}{D+1}},\left(D\frac{s+1}{s}+D+1\right)^{-1}\right\}.
\end{equation}
\end{proof}

\section*{Acknowledgments}
The authors would like to thank Erwin Riegler for useful suggestions regarding the proof of Lemma \ref{lm:NN2minmax} and Thomas Allard for help with Table \ref{tbl:nn}.

\bibliography{arXiv_reference}
\end{document}